\newtheorem{theorem}{Theorem}
\newtheorem{lemma}{Lemma}
\newtheorem{proposition}{Proposition}
\newtheorem{corollary}{Corollary}
\newtheorem{remark}{Remark}
\theoremstyle{definition}
\newtheorem{property}{Property}
\DeclareMathOperator*{\argmax}{arg\,max}
\DeclareMathOperator{\tr}{tr}
\newcommand\smallO{
  \mathchoice
    {{\scriptstyle\mathcal{O}}}
    {{\scriptstyle\mathcal{O}}}
    {{\scriptscriptstyle\mathcal{O}}}
    {\scalebox{.7}{$\scriptscriptstyle\mathcal{O}$}}
  }
\title{Cauchy-Schwarz Divergence Information Bottleneck for Regression}
\author{%
Shujian Yu$^{1,3}$ \quad Xi Yu$^{2}$ \quad Sigurd L\o kse$^{4}$ \quad Robert Jenssen$^{3,6}$ \quad Jose C. Principe$^5$ \\
$^1$Vrije Universiteit Amsterdam \quad $^2$Brookhaven National Laboratory \\
$^3$UiT - The Arctic University of Norway \quad $^4$NORCE Norwegian Research Centre\\
$^5$University of Florida \quad  $^6$University of Copenhagen \\
\texttt{s.yu3@vu.nl}; \texttt{xyu1@bnl.gov}; \texttt{sigl@norceresearch.no}\\
\texttt{robert.jenssen@uit.no}; \texttt{principe@cnel.ufl.edu}
}
\begin{document}

\maketitle

\begin{abstract}

The information bottleneck (IB) approach is popular to improve the generalization, robustness and explainability of deep neural networks. Essentially, it aims to find a minimum sufficient representation $\mathbf{t}$ by striking a trade-off between a compression term $I(\mathbf{x};\mathbf{t})$ and a prediction term $I(y;\mathbf{t})$, where $I(\cdot;\cdot)$ refers to the mutual information (MI). MI is for the IB for the most part expressed in terms of the Kullback-Leibler (KL) divergence, which in the regression case corresponds to prediction based on mean squared error (MSE) loss with Gaussian assumption and compression approximated by variational inference.
  In this paper, we study the IB principle for the regression problem and develop a new way to parameterize the IB with deep neural networks by exploiting favorable properties of the Cauchy-Schwarz (CS) divergence. By doing so, we move away from MSE-based regression and ease estimation by avoiding variational approximations or distributional assumptions. We investigate the improved generalization ability of our proposed CS-IB and demonstrate strong adversarial robustness guarantees. We demonstrate its superior performance on six real-world regression tasks over other popular deep IB approaches. We additionally observe that the solutions discovered by CS-IB always achieve the best trade-off between prediction accuracy and compression ratio in the information plane. The code is available at \url{https://github.com/SJYuCNEL/Cauchy-Schwarz-Information-Bottleneck}.

\end{abstract}

\section{Introduction}

The information bottleneck (IB) principle was proposed by~\citep{tishby99information} as an information-theoretic framework for representation learning. It considers extracting information about a target variable $y$ through a correlated variable $\mathbf{x}$. The extracted information is characterized by another variable $\mathbf{t}$, which is (a possibly randomized) function of $\mathbf{x}$. Formally, the IB objective is to learn a representation $\mathbf{t}$ that maximizes its predictive power to $y$ subject to some constraints on the amount of information that it carries about $\mathbf{x}$:
\begin{equation}\label{eq:IB_org}
\max _{p(\mathbf{t} \mid \mathbf{x})}  I(y; \mathbf{t}) \quad \text{s.t.} \quad I(\mathbf{x};\mathbf{t})\leq R,
\end{equation}
where $I(\cdot;\cdot)$ denotes the mutual information. By introducing a Lagrange multiplier $\beta>0$, $\mathbf{t}$ is found by optimizing a so-called IB Lagrangian~\citep{gilad2003information,shamir2010learning}:
\begin{equation}\label{eq:IB}
\min _{p(\mathbf{t} \mid \mathbf{x})}  - I(y;\mathbf{t}) + \beta I(\mathbf{x}; \mathbf{t}).
\end{equation}

Maximizing $I(y;\mathbf{t})$ ensures the sufficiency of $\mathbf{t}$ to predict $y$, whereas minimizing $I(\mathbf{x}; \mathbf{t})$ encourages the minimality (or complexity) of $\mathbf{t}$ and prevents it from encoding irrelevant bits. The parameter $\beta$ controls the fundamental tradeoff between these two information terms. In this sense, the IB principle also provides a natural approximation of minimal sufficient statistic~\citep{gilad2003information}.

Traditionally, the IB principle and its variants (e.g.,~\citep{strouse2017deterministic,creutzig2009past}) have found applications in document clustering~\citep{slonim2000document}, image segmentation~\citep{bardera2009image}, biomolecular modeling~\citep{wang2019past}, etc. Recent studies have established close connections between IB and DNNs, especially in a supervised learning scenario. In this context, $\mathbf{x}$ denotes input feature vectors, $y$ denotes desired response such as class labels, and $\mathbf{t}$ refers to intermediate latent representations or activations of hidden layers. Theoretically, it was observed that the layer representation $\mathbf{t}$ undergoes two separate training phases: a fitting or memorization phase in which both $I(\mathbf{x}; \mathbf{t})$ and $I(y;\mathbf{t})$ increase, and a compression phase in which $I(\mathbf{x}; \mathbf{t})$ decreases while $I(y;\mathbf{t})$ continues to increase or remains consistent (see \citep{shwartz2017opening,michael2018on,chelombiev2018adaptive,yu2020understanding,lorenzen2022information} for a series of work in this direction; although the argument itself is still under a debate). The existence of compression also provides new insights to the generalization behavior of DNNs~\citep{wang2022pacbayes,kawaguchi2023does}. Practically, the intermediate representations learned with the IB objective have been demonstrated to be more robust to adversarial attacks~\citep{wang2021revisiting,pan2021disentangled} and distributional shift~\citep{ahuja2021invariance}. In a parallel line of research~\citep{bang2021explaining,kim2021dropbottleneck}, the IB approach has been leveraged to identify the most informative features (to a certain decision) by learning a differentiable mask $m$ on the input, i.e., $\mathbf{t}=\mathbf{x}\odot m$, in which $\odot$ refers to element-wise product.

Unfortunately, optimizing the IB Lagrangian remains a challenge due to its computational intractability. Although scalable methods of IB are feasible thanks to variational bounds of mutual information~\citep{alemi2017deep,kolchinsky2019nonlinear,poole2019variational} as well as Gaussian or discrete data assumptions~\citep{chechik2003information,tishby99information}, the choice of such bounds, the imposed data distributional assumptions, as well as specific details on their implementations may introduce strong inductive bias that competes with the original objective~\citep{ngampruetikorn2023generalized}.

In this paper, we propose a new method for performing nonlinear IB on arbitrarily distributed $p(\mathbf{x},y)$, by exploiting favorable properties of the Cauchy-Schwarz (CS) divergence~\citep{principe2000information,yu2023conditional}. We focus our attention on the regression setup, which is far less investigated than classification, except for~\citep{ngampruetikorn2022information} that analyzes high-dimensional linear regression from an IB perspective. For $I(y;\mathbf{t})$, we demonstrate that the commonly used mean squared error (MSE) loss is not an ideal approximation. Rather, the CS divergence offers a new prediction term which does not take any distributional assumptions on the decoder.
Similarly, for $I(\mathbf{x};\mathbf{t})$, we demonstrate that it can be explicitly estimated from samples without variational or non-parametric upper bound approximations, by making use of the Cauchy-Schwarz quadratic mutual information (CS-QMI). Both terms can be optimized with gradient-based approaches. To summarize, we make the following major contributions:
\begin{itemize}
 \item We show that nonlinear IB for regression on arbitrarily distributed $p(\mathbf{x},y)$ can be carried out, with least dependence on variational approximations and no distributional assumptions with the aid of CS divergence estimated in a non-parametric way. The resulting prediction term is no longer a simple MSE loss, as in the Kullback-Leibler (KL) case; the compression term measures the true mutual information value, rather than its upper bound.
 \item We demonstrate that the CS divergence induced prediction and compression terms can be estimated elegantly from observations with closed-form expressions. We also establish the close connection between CS divergence with respect to maximum mean discrepancy (MMD)~\citep{gretton2012kernel} and KL divergence. 
 \item We provide in-depth analysis on the generalization error and adversarial robustness of our developed CS-IB. We demonstrate the superior performance of CS-IB on six benchmark regression datasets against five popular deep IB approaches, such as nonlinear IB (NIB)~\citep{kolchinsky2019nonlinear} and Hilbert-Schmidt Independence Criterion Bottleneck (HSIC-bottleneck)~\citep{wang2021revisiting}, in terms of generalization error, adversarial robustness, and the trade-off between prediction accuracy and compression ratio in the information plane~\citep{shwartz2017opening}.
\end{itemize}



\section{Background Knowledge}
\subsection{Problem Formulation and Variants of IB Lagrangian} \label{sec:background}



In supervised learning, we have a training set $\mathcal{D}=\{\mathbf{x}_i,y_i\}_{i=1}^N$ of input feature $\mathbf{x}$ and described response $y$. We assume $\mathbf{x}_i$ and $y_i$ are sampled \emph{i.i.d.} from a true data distribution $p(\mathbf{x},y)=p(y|\mathbf{x})p(x)$. The usual high-level goal of supervised learning is to use the dataset $\mathcal{D}$ to learn a particular conditional distribution $q_\theta(\hat{y}|x)$ of the task outputs given the input features parameterized by $\theta$ which is a good approximation of $p(y|\mathbf{x})$, in which $\hat{y}$ refers to the predicted output.

If we measure the closeness between $p(y|\mathbf{x})$ and $q_\theta (\hat{y}|\mathbf{x})$ with KL divergence, the learning objective becomes:
\begin{equation}\label{eq:KL_loss}
\min D_{\text{KL}}(p(y|\mathbf{x});q_\theta (\hat{y}|\mathbf{x})) = \min \mathbb{E}\left(-\log (q_\theta (\hat{y}|\mathbf{x}))\right) - H(y|\mathbf{x}) \Leftrightarrow \min \mathbb{E}\left(-\log (q_\theta (\hat{y}|\mathbf{x}))\right),
\end{equation}
where $H(y|\mathbf{x})$ only depends on $\mathcal{D}$ that is independent to parameters $\theta$.

For classification, $q_\theta (\hat{y}|\mathbf{x})$ is characterized by a discrete distribution which could be the output of a neural network $h_\theta(x)$, and Eq.~(\ref{eq:KL_loss}) is exactly the cross-entropy loss. For regression, suppose $q_\theta (\hat{y}|\mathbf{x})$ is distributed normally $\mathcal{N}(h_\theta(\mathbf{x}),\sigma^2 I)$, and the network $h_\theta(\mathbf{x})$ gives the prediction of the mean of the Gaussian, the objective reduces to $\mathbb{E}\left(\|y-h_\theta(\mathbf{x})\|_2^2\right)$, which amounts to the mean squared error (MSE) loss\footnote{Note that, $\log (q_\theta (\hat{y}|\mathbf{x})) = \log \left( \frac{1}{\sqrt{2\pi}\sigma} \exp\left( -\frac{\|y-h_\theta(\mathbf{x})\|_2^2}{2\sigma^2} \right) \right) = -\log \sigma - \frac{1}{2}\log (2\pi) -  \frac{\|y-f_\theta(\mathbf{x})\|_2^2}{2\sigma^2} $.} and is empirically estimated by $\frac{1}{N}\sum_{i=1}^N (y-\hat{y})^2$.

Most supervised learning methods aim to learn an intermediate representation $\mathbf{t}$ before making predictions about $y$. Examples include activations of hidden layers in neural networks and transformations in a feature space through the kernel trick in SVM. Suppose $\mathbf{t}$ is a possibly stochastic function of input $\mathbf{x}$, as determined by some parameterized conditional distribution $q_\theta(\mathbf{t}|\mathbf{x})$, the overall estimation of the conditional probability $p(y|\mathbf{x})$ is given by the marginalization over the representations:
\begin{equation}
q_\theta(\hat{y}|\mathbf{x}) = \int q_\theta(\hat{y}|\mathbf{t})q_\theta(\mathbf{t}|\mathbf{x}) d\mathbf{t}
   \quad\mathrm{or}\quad
q_\theta(\hat{y}|\mathbf{x}) = \sum_t q_\theta(\hat{y}|\mathbf{t})q_\theta(\mathbf{t}|\mathbf{x}).
\end{equation}

There are multiple ways to learn $\mathbf{t}$ to obtain $q_\theta(\hat{y}|\mathbf{x})$. From an information-theoretic perspective, a particular appealing framework is the IB Lagrangian in Eq.~(\ref{eq:IB}), which, however, may involve intractable integrals. Recently, scalable methods of IB on continuous and non-Gaussian data using DNNs became possible thanks to different mutual information approximation techniques.
In the following, we briefly introduce existing approaches on approximating $I(y;\mathbf{t})$ and $I (\mathbf{x};\mathbf{t})$.

\subsubsection{Approximation to $I(y;\mathbf{t})$}
In regression, maximizing $I_\theta (y;\mathbf{t})$ is commonly approximated by minimizing MSE loss.




\begin{proposition}~\citep{rodriguez2019information}
With a Gaussian assumption on $q_\theta(\hat{y}|t)$, maximizing $I_\theta(y;\mathbf{t})$ essentially minimizes $D_{\text{KL}}(p(y|\mathbf{x});q_\theta (\hat{y}|\mathbf{x}))$, both of which could be approximated by minimizing a MSE loss.
\end{proposition}
\begin{proof}
All proofs can be found in Appendix~\ref{sec:proofs}.
\end{proof}


\subsubsection{Approximation to $I(\mathbf{x};\mathbf{t})$}\label{sec:Ixt}
The approximation to $I(\mathbf{x};\mathbf{t})$ differs for each method. For variational IB (VIB)~\citep{alemi2017deep} and similar works~\citep{achille2018information}, $I(\mathbf{x};\mathbf{t})$ is upper bounded by:
\begin{equation}
I(\mathbf{x};\mathbf{t})=\mathbb{E}_{p(x,t)} \log p(t|x)- \mathbb{E}_{p(t)} \log p(t) \leq \mathbb{E}_{p(x,t)} \log p(t|x) - \mathbb{E}_{p(t)} \log v(t) = D_{\text{KL}}(p(t|x);v(t)),
\end{equation}
where $v$ is some prior distribution such as Gaussian. On the other hand, the nonlinear information bottleneck (NIB)~\citep{kolchinsky2019nonlinear} uses a non-parametric upper bound of mutual information~\citep{kolchinsky2017estimating}:
\begin{equation}\label{eq:MI_upper}
I(\mathbf{x};\mathbf{t}) \leq -\frac{1}{N} \sum_{i=1}^N \log \frac{1}{N} \sum_{j=1}^N \exp\left( -D_{\text{KL}}(p(t|x_i);p(t|x_j)) \right).
\end{equation}

Recently, \citep{kolchinsky2018caveats} showed that optimizing the IB Lagrangian for different values of $\beta$ cannot explore the IB curve when $y$ is a deterministic function of $\mathbf{x}$. Therefore, the authors of \citep{kolchinsky2018caveats} propose a simple modification to IB Lagrangian, which is also called the squared-IB Lagrangian:
\begin{equation}
\min -I(y;\mathbf{t}) + \beta I(\mathbf{x};\mathbf{t})^2.
\end{equation}
The convex-IB~\citep{rodriguez2020convex} further showed that applying any monotonically increasing and strictly convex function $u$ on $I(\mathbf{x};\mathbf{t})$ can explore the IB curve. For example, by instantiating $u$ with shifted exponential function, the objective of convex-IB is expressed as:
\begin{equation}
\min -I(y;\mathbf{t}) + \beta \exp\left(\eta (I(\mathbf{x};\mathbf{t}) - r^*)\right), \eta>0, r^*\in [0,\infty).
\end{equation}

In both squared-IB and convex-IB, $I(\mathbf{x};\mathbf{t})$ is evaluated with Eq.~(\ref{eq:MI_upper}).
There are other approaches that do not estimate $I(\mathbf{x};\mathbf{t})$. The conditional entropy bottleneck~\citep{fischer2020conditional} and the deterministic information bottleneck~\citep{strouse2017deterministic} replace $I(\mathbf{x};\mathbf{t})$, respectively, with $I(\mathbf{x};\mathbf{t}|y)$ and $H(\mathbf{t})$.
The decodable information bottleneck~\citep{dubois2020learning} makes use of the $\mathcal{V}$-information~\citep{Xu2020A} and defines ``minimality" in classification as not being able to distinguish between examples with the same labels.
One can also replace $I(\mathbf{x};\mathbf{t})$ with other nonlinear dependence measures like the Hilbert–Schmidt independence criterion (HSIC)~\citep{gretton2007kernel}.



\subsection{Cauchy-Schwarz Divergence and its Induced Measures}

Motivated by the famed Cauchy-Schwarz (CS) inequality for square integrable functions:
\begin{equation}\label{eq:CS_inequality}
\left( \int p(\mathbf{x})q(\mathbf{x})d\mathbf{x} \right)^2 \leq \int p(\mathbf{x})^2 d\mathbf{x} \int q(\mathbf{x})^2 d\mathbf{x},
\end{equation}
with equality iff $p(\mathbf{x})$ and $q(\mathbf{x})$ are linearly dependent, a measure of the ``distance'' between $p(\mathbf{x})$ and $q(\mathbf{x})$ can be defined, which was named CS divergence~\citep{principe2000information,yu2023conditional}, with:
\begin{equation} \label{eq:CS_divergence}
D_{\text{CS}} (p;q) = -\log\left( \frac{\left( \int p(\mathbf{x})q(\mathbf{x})d\mathbf{x} \right)^2}{\int  p(\mathbf{x})^2 d\mathbf{x} \int q(\mathbf{x})^2 d\mathbf{x}} \right).
\end{equation}


The CS divergence is symmetric for any two probability density functions (PDFs) $p$ and $q$, such that $0\le D_{\text{CS}}< \infty$, where the minimum is obtained iff $p(\mathbf{x})=q(\mathbf{x})$. Given samples $\{\mathbf{x}_i^p\}_{i=1}^m$ and $\{\mathbf{x}_i^q\}_{i=1}^n$, drawn~\emph{i.i.d.} from respectively $p(\mathbf{x})$ and $q(\mathbf{x})$, CS divergence can be empirically estimated with the kernel density estimator (KDE)~\citep{parzen1962estimation} as:
\begin{equation}
\widehat{D}_{\text{CS}} (p;q) = \log\left(\frac{1}{m^2}\sum_{i,j=1}^m \kappa({\bf x}_i^p,{\bf x}_j^p)\right) +  \log\left(\frac{1}{n^2}\sum_{i,j=1}^n \kappa({\bf x}_i^q,{\bf x}_j^q)\right)
-2 \log\left(\frac{1}{mn}\sum_{i=1}^m \sum_{j=1}^n \kappa({\bf x}_i^p,{\bf x}_j^q)\right).
\end{equation}
where $\kappa$ is a kernel function such as Gaussian $\kappa_{\sigma}(\mathbf{x},\mathbf{x}')=\exp(-\|\mathbf{x}-\mathbf{x}'\|_2^2/2\sigma^2)$.

\begin{remark}
The CS divergence is also a special case of the generalized divergence defined by~\citep{lutwak2005crame} which relies on a modification of the H{\"o}lder inequality\footnote{See Appendix~\ref{sec:divergence_family} for an in-depth discussion on the relationship between CS $\&$ KL divergences and MMD.}, when $\alpha=2$:
\begin{equation}
D_\alpha(p;q)=\log \left(\frac{(\int q(\mathbf{x})^{\alpha-1}p(\mathbf{x}))^\frac{1}{1-\alpha}(\int q(\mathbf{x})^\alpha)^\frac{1}{\alpha}d\mathbf{x}}{(\int p(\mathbf{x})^\alpha)^\frac{1}{\alpha(1-\alpha)}d\mathbf{x}}\right).
\end{equation}
The KL divergence $D_\text{KL}(p\|q) = \int p \log\left(\frac{p}{q}\right)$ is obtained when $\alpha \rightarrow 1$.
\end{remark}

\begin{remark}\label{remark_CS_MMD}
The CS divergence is closely related to the maximum mean discrepancy (MMD)~\citep{gretton2012kernel}. In fact, given a characteristic kernel $\kappa(\mathbf{x},\mathbf{x}')=\langle \phi(\mathbf{x}),\phi(\mathbf{x}') \rangle_\mathcal{H}$, let us denote the (empirical) mean embedding for $\{\mathbf{x}_i^p\}_{i=1}^m$ and $\{\mathbf{x}_i^q\}_{i=1}^n$ as $\boldsymbol{\mu}_p = \frac{1}{m}\sum_{i=1}^m \phi(\mathbf{x}_i^p)$ and $\boldsymbol{\mu}_q = \frac{1}{n}\sum_{i=1}^n \phi(\mathbf{x}_i^q)$, the empirical estimators of CS divergence and MMD can be expressed as:
\begin{equation}
\widehat{D}_{\text{CS}} (p;q) = -2\log \left( \frac{\langle \boldsymbol{\mu}_p,\boldsymbol{\mu}_q \rangle_\mathcal{H}}{\|\boldsymbol{\mu}_p\|_\mathcal{H} \|\boldsymbol{\mu}_q\|_\mathcal{H} } \right) = -2\log \cos(\boldsymbol{\mu}_p,\boldsymbol{\mu}_q),
\end{equation}
\begin{equation}
\begin{split}
\widehat{\text{MMD}}^2(p;q) & = \langle \boldsymbol{\mu}_p,\boldsymbol{\mu}_q \rangle_\mathcal{H}^2 = \|\boldsymbol{\mu}_p\|_\mathcal{H}^2 + \|\boldsymbol{\mu}_q\|_\mathcal{H}^2 - 2 \langle \boldsymbol{\mu}_p,\boldsymbol{\mu}_q \rangle_\mathcal{H} \\
& = \frac{1}{m^2}\sum_{i,j=1}^m \kappa({\bf x}_i^p,{\bf x}_j^p) + \frac{1}{n^2}\sum_{i,j=1}^n \kappa({\bf x}_i^q,{\bf x}_j^q) - \frac{2}{mn}\sum_{i=1}^m \sum_{j=1}^n \kappa({\bf x}_i^p,{\bf x}_j^q).
\end{split}
\end{equation}
That is, CS divergence measures the cosine similarity between $\boldsymbol{\mu}_p$ and $\boldsymbol{\mu}_q$ in a Reproducing kernel Hilbert space (RKHS) $\mathcal{H}$, whereas MMD uses Euclidean distance.
\end{remark}

Eq.~(\ref{eq:CS_divergence}) can be extended to measure the independence between two random variables $\mathbf{x}$ and $y$.


\paragraph{Cauchy-Schwarz Quadratic Mutual Information (CS-QMI)}
The independence between $\mathbf{x}$ and $y$ can be measured by any (valid) distance or divergence measure over the joint distribution $p(\mathbf{x},y)$ with respect to the product of marginal distributions $p(\mathbf{x})p(y)$. If we substitute $p(\mathbf{x})$ and $q(\mathbf{x})$ in Eq.~(\ref{eq:CS_divergence}) with $p(\mathbf{x},y)$ and $p(\mathbf{x})p(y)$, we obtain the Cauchy-Schwarz quadratic mutual information (CS-QMI)~\citep{principe2000information}:
\begin{equation}
I_{\text{CS}}(\mathbf{x},y)=D_{\text{CS}}(p(\mathbf{x},y);p(\mathbf{x})p(y)) = -\log\left( \frac{\Big| \int p(\mathbf{x},y)p(\mathbf{x})p(y)d\mathbf{x}dy \Big|^2}{\int p^2(\mathbf{x},y) d\mathbf{x}dy \int p^2(\mathbf{x}) p^2(y) d\mathbf{x}dy} \right).
\end{equation}

Distinct to KL divergence that is notoriously hard to estimate, we will show that both CS-QMI and the divergence between $p(y|\mathbf{x})$ and $q_\theta (\hat{y}|\mathbf{x})$ can be elegantly estimated in a non-parametric way with closed-form expressions, enabling efficient implementation of deep IB without approximations.

\section{The Cauchy-Schwarz Divergence Information Bottleneck} \label{sec:CS_IB}

As has been discussed in Section~\ref{sec:background}, existing deep IB approaches rely on Shannon's definition of information and are essentially minimizing the following objective:
\begin{equation}\label{eq:KL_IB}
\min_{p(\mathbf{t}|\mathbf{x})} D_{\text{KL}}(p(y|\mathbf{x});q_\theta(\hat{y}|\mathbf{x})) + \beta I(\mathbf{x};\mathbf{t}),
\end{equation}
where $I(\mathbf{x};\mathbf{t})$ is defined also in a KL divergence sense. Both terms in Eq.~(\ref{eq:KL_IB}) are hard to estimate. In case of regression, $\min D_{\text{KL}}(p(y|\mathbf{x});q_\theta(\hat{y}|\mathbf{x}))$ gets back to the MSE loss under a parametric Gaussian assumption. However, we could also uncover the mean absolute error (MAE) loss if we take a Laplacian assumption. On the other hand, the use of an upper bound to $I(\mathbf{x};\mathbf{t})$ also makes the solution of IB sub-optimal. Moreover, the choices of bounds or assumptions are hard to decide for practitioners. We refer interested readers to Appendix~\ref{sec:bias} for a detailed discussion on different approximation biases of previous literature. We also demonstrate the advantage of CS divergence over variational KL divergence in terms of optimization via a toy example in Appendix~\ref{sec:toy}.



In this paper, we consider a new formulation of IB that entirely based on the CS divergence:
\begin{equation}\label{eq:CS_IB}
\min_{p(\mathbf{t}|\mathbf{x})} D_{\text{CS}}(p(y|\mathbf{x});q_\theta(\hat{y}|\mathbf{x})) + \beta I_{\text{CS}}(\mathbf{x};\mathbf{t}).
\end{equation}




The first term of Eq.~(\ref{eq:CS_IB}) measures the conditional CS divergence between $p(y|\mathbf{x})$ and $q_\theta(\hat{y}|\mathbf{x})$, whereas the second term is the CS-QMI between $\mathbf{x}$ and $\mathbf{t}$. We will show in next subsections how the favorable properties of CS divergence possibly affect the IB's computation and performance, compared to its KL divergence counterpart.



\subsection{Estimation of CS Divergence Induced Terms} \label{sec:implementation_CS_IB}
Both terms in Eq.~(\ref{eq:CS_IB}) (i.e., $D_{\text{CS}}(p(y|\mathbf{x});q_\theta(\hat{y}|\mathbf{x}))$ and $I_{\text{CS}} (\mathbf{x};\mathbf{t})$) can be efficiently and non-parametrically estimated from given samples $\{y_i,\mathbf{x}_i,\mathbf{t}_i,\hat{y}_i\}_{i=1}^n$ with the kernel density estimator (KDE) in a closed-form expression. When parameterizing IB with a DNN, $\mathbf{t}$ refers to the latent representation of one hidden layer (i.e., $\mathbf{t}=f(\mathbf{x})$), $n$ denotes mini-batch size.

\begin{proposition}[Empirical Estimator of $D_{\text{CS}}(p(y|\mathbf{x});q_\theta(\hat{y}|\mathbf{x}))$]
Given observations $\{(\mathbf{x}_i,y_i,\hat{y}_i )\}_{i=1}^N$, where $\mathbf{x}\in \mathbb{R}^p$ denotes a $p$-dimensional input variable, $y$ is the desired response, and $\hat{y}$ is the predicted output generated by a model $f_\theta$. Let $K$, $L^1$ and $L^2$ denote, respectively, the Gram matrices for the variable $\mathbf{x}$, $y$, and $\hat{y}$ (i.e., $K_{ij}=\kappa(\mathbf{x}_i,\mathbf{x}_j)$, $L_{ij}^1=\kappa(y_i,y_j)$ and $L_{ij}^2=\kappa(\hat{y}_i,\hat{y}_j)$, in which $\kappa=\exp \left(-\frac{\left\|\cdot\right\|^{2}}{2 \sigma^{2}}\right)$ is a Gaussian kernel function). Further, let $L^{21}$ denote the Gram matrix between $\hat{y}$ and $y$ (i.e., $L_{ij}^{21}=\kappa(\hat{y}_i,y_j)$). The empirical estimation of $D_{\text{CS}}(p(y|\mathbf{x});q_\theta(\hat{y}|\mathbf{x}))$ is given by:
\begin{equation}\label{eq:CS_ext1}
\begin{split}
& \widehat{D}_{\text{CS}}(p(y|\mathbf{x});q_\theta(\hat{y}|\mathbf{x}))  = \log\left( \sum_{j=1}^N \left( \frac{ \sum_{i=1}^N K_{ji} L_{ji}^1 }{ (\sum_{i=1}^N K_{ji})^2 } \right) \right) \\
& + \log\left( \sum_{j=1}^N \left( \frac{ \sum_{i=1}^N K_{ji} L_{ji}^2 }{ (\sum_{i=1}^N K_{ji})^2 } \right) \right)
 - 2 \log \left( \sum_{j=1}^N \left( \frac{ \sum_{i=1}^N K_{ji} L_{ji}^{21} }{ (\sum_{i=1}^N K_{ji})^2 } \right) \right).
\end{split}
\end{equation}
\end{proposition}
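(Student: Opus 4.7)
The plan is to start from the conditional Cauchy--Schwarz divergence, rewrite each of its three integrals as an importance-sampling expectation, and then substitute Gaussian kernel density estimators (KDE) to collapse the resulting expressions into the claimed closed form.

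First I would specialize the CS divergence definition of Eq.~(\ref{eq:CS_divergence}) to the conditionals $p(y|\mathbf{x})$ and $q_\theta(\hat{y}|\mathbf{x})$, integrating additionally over the shared input variable $\mathbf{x}$, to obtain
\begin{align*}
D_{\text{CS}}(p(y|\mathbf{x});q_\theta(\hat{y}|\mathbf{x})) &= \log \iint p^{2}(y|\mathbf{x})\,d\mathbf{x}\,dy + \log \iint q_\theta^{2}(\hat{y}|\mathbf{x})\,d\mathbf{x}\,d\hat{y} \\
&\quad - 2\log \iint p(z|\mathbf{x})\,q_\theta(z|\mathbf{x})\,d\mathbf{x}\,dz,
\end{align*}
where $z$ is a dummy variable ranging over the common output space of $y$ and $\hat{y}$. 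Using the factorizations $p(y|\mathbf{x})=p(\mathbf{x},y)/p(\mathbf{x})$ and $q_\theta(\hat{y}|\mathbf{x})=q_\theta(\mathbf{x},\hat{y})/p(\mathbf{x})$ (both conditionals share the same training-data marginal $p(\mathbf{x})$), I would convert each integral into an expectation: $\iint p^{2}(y|\mathbf{x})\,d\mathbf{x}\,dy=\mathbb{E}_{p(\mathbf{x},y)}[\,p(\mathbf{x},y)/p^{2}(\mathbf{x})\,]$, $\iint q_\theta^{2}(\hat{y}|\mathbf{x})\,d\mathbf{x}\,d\hat{y}=\mathbb{E}_{q_\theta(\mathbf{x},\hat{y})}[\,q_\theta(\mathbf{x},\hat{y})/p^{2}(\mathbf{x})\,]$, and $\iint p(z|\mathbf{x})q_\theta(z|\mathbf{x})\,d\mathbf{x}\,dz=\mathbb{E}_{q_\theta(\mathbf{x},\hat{y})}[\,p(\mathbf{x},\hat{y})/p^{2}(\mathbf{x})\,]$.

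Next I would substitute the Gaussian KDEs $\hat{p}(\mathbf{x}_j)=\tfrac{1}{N}\sum_i K_{ji}$, $\hat{p}(\mathbf{x}_j,y_j)=\tfrac{1}{N}\sum_i K_{ji}L^{1}_{ji}$, $\hat{q}_\theta(\mathbf{x}_j,\hat{y}_j)=\tfrac{1}{N}\sum_i K_{ji}L^{2}_{ji}$, together with the cross-density estimator $\hat{p}(\mathbf{x}_j,\hat{y}_j)=\tfrac{1}{N}\sum_i K_{ji}L^{21}_{ji}$ obtained by evaluating the KDE of $p(\mathbf{x},y)$ at $(\mathbf{x}_j,\hat{y}_j)$, and approximate each outer expectation by the empirical average $\tfrac{1}{N}\sum_j$ over the $N$ paired samples. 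The factor $1/N^{2}$ generated by $\hat{p}^{2}(\mathbf{x}_j)$ in the denominator cancels the $1/N$ coming from the numerator KDE together with the $1/N$ from the Monte--Carlo average, leaving for the first integral the clean double sum $\sum_j \sum_i K_{ji}L^{1}_{ji}/(\sum_i K_{ji})^{2}$ and analogous expressions for the $q_\theta^{2}$ and cross terms. Taking logarithms and recombining with the $(+,+,-2)$ sign pattern of the CS divergence reproduces exactly Eq.~(\ref{eq:CS_ext1}).

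The main obstacle I anticipate lies in the cross term $\iint p(z|\mathbf{x})q_\theta(z|\mathbf{x})\,d\mathbf{x}\,dz$: one must identify $y$ and $\hat{y}$ as random variables on a common output space so that the product of the two conditionals is meaningful, and then pick the sampling distribution (here $q_\theta(\mathbf{x},\hat{y})$) and the evaluation point so that the resulting Gram matrix takes the specific form $L^{21}_{ji}=\kappa(\hat{y}_j,y_i)$ rather than its transpose. A secondary bookkeeping point is to verify that the denominator $(\sum_i K_{ji})^{2}$ in Eq.~(\ref{eq:CS_ext1})---and not $\sum_i K_{ji}$---arises from the squared marginal $p^{2}(\mathbf{x})$ in the fully expanded ratio $p(y|\mathbf{x})/p(\mathbf{x})=p(\mathbf{x},y)/p^{2}(\mathbf{x})$, so the squaring in the CS divergence is what ultimately forces the squared kernel-sum in the denominator.
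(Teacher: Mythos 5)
Your proposal is correct and follows essentially the same route as the paper's proof in Appendix A.2: decompose the conditional CS divergence into three integrals, rewrite each as an expectation of a ratio with $p^2(\mathbf{x})$ in the denominator via $p(y|\mathbf{x})=p(\mathbf{x},y)/p(\mathbf{x})$, substitute Gaussian KDEs, and let the $1/N$ factors cancel to yield the three double sums. The only cosmetic difference is that you take the cross-term expectation under $q_\theta(\mathbf{x},\hat{y})$ while the paper takes it under $p(\mathbf{x},y)$; with paired samples $(\mathbf{x}_j,y_j,\hat{y}_j)$ both choices produce the identical expression with $L^{21}_{ji}=\kappa(\hat{y}_j,y_i)$.
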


\begin{remark}\label{remark_prediction}
The empirical estimator of $D_{\text{CS}}(p(y|\mathbf{x});q_\theta(\hat{y}|\mathbf{x}))$ in Eq.~(\ref{eq:CS_ext1}) is non-negative. Compared to MSE, it assumes and encourages that if two points $\mathbf{x}_i$ and $\mathbf{x}_j$ are sufficiently close, their predictions ($\hat{y}_i$ and $\hat{y}_j$) will be close as well, which also enhances the numerical stability~\citep{nguyen2019adversarial} of the trained model. Moreover, when the kernel width $\sigma$ reduces to $0$, it gets back to MSE.
\end{remark}



\begin{remark} \label{remark_prediction2}
A precise estimation on the divergence between $p(y|\mathbf{x})$ and $q_\theta(\hat{y}|\mathbf{x})$ is a non-trivial task. The use of KL divergence combined with Gaussian assumption is likely to introduce inductive bias. Another alternative is the conditional MMD by \citep{ren2016conditional} with the expression $\widehat{D}_{\text{MMD}}(p(y|\mathbf{x});q_\theta(\hat{y}|\mathbf{x})) = \tr(K\tilde{K}^{-1} L^1 \tilde{K}^{-1}) + \tr(K\tilde{K}^{-1} L^2 \tilde{K}^{-1}) -2 \tr(K\tilde{K}^{-1} L^{21} \tilde{K}^{-1}) $, in which $\tilde{K} = K+\lambda I$. Obviously, CS divergence avoids introducing an additional hyperparametr $\lambda$ and the necessity of matrix inverse, which improves computational efficiency and stability.
Moreover, Eq.~(\ref{eq:CS_ext1}) does not rely on any parametric distributional assumption on $q_\theta(\hat{y}|\mathbf{x})$.
\end{remark}

We provide detailed justifications regarding Remark~\ref{remark_prediction} and Remark~\ref{remark_prediction2} in Appendix~\ref{sec:prediction_term}.



\begin{proposition}[Empirical Estimator of CS-QMI]
Given $N$ pairs of observations $\{(\mathbf{x}_i,\mathbf{t}_i)\}_{i=1}^N$, each sample contains two different types of measurements $\mathbf{x}\in \mathcal{X}$ and $\mathbf{t}\in\mathcal{T}$ obtained from the same realization. Let $K$ and $Q$ denote, respectively, the Gram matrices for variable $\mathbf{x}$ and variable $\mathbf{t}$, which are also symmetric. The empirical estimator of CS-QMI is given by:
\begin{equation}\label{eq:CS_QMI_est}
\small
\begin{split}
\widehat{I}_{\text{CS}}(\mathbf{x};\mathbf{t}) & = \log \left( \frac{1}{N^2} \sum_{i,j}^N K_{ij}Q_{ij} \right) + \log \left( \frac{1}{N^4} \sum_{i,j,q,r}^N K_{ij}Q_{qr} \right) -2\log \left( \frac{1}{N^3} \sum_{i,j,q}^N K_{ij}Q_{iq} \right) \\
& = \log \left( \frac{1}{N^2} \tr(KQ) \right) + \log\left( \frac{1}{N^4} \mathds{1}^TK\mathds{1} \mathds{1}^TQ\mathds{1} \right) - 2\log \left( \frac{1}{N^3} \mathds{1}^T KQ \mathds{1} \right),
\end{split}
\end{equation}
where $\mathds{1}$ is a $N\times 1$ vector of ones. The second line of Eq.~(\ref{eq:CS_QMI_est}) reduces the complexity to $\mathcal{O}(N^2)$.
\end{proposition}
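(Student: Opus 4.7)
The plan is to work from the three-term identity $D_{\text{CS}}(p;q) = \log\int p^2 + \log\int q^2 - 2\log\int pq$ applied to $p = p(\mathbf{x},\mathbf{t})$ and $q = p(\mathbf{x})p(\mathbf{t})$, so that $I_{\text{CS}}(\mathbf{x};\mathbf{t})$ decomposes into the three integrals $\int p^2(\mathbf{x},\mathbf{t})d\mathbf{x}d\mathbf{t}$, $\int p^2(\mathbf{x})p^2(\mathbf{t})d\mathbf{x}d\mathbf{t}$, and $\int p(\mathbf{x},\mathbf{t})p(\mathbf{x})p(\mathbf{t})d\mathbf{x}d\mathbf{t}$. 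Each density is then replaced by its Gaussian KDE, namely $\hat p(\mathbf{x},\mathbf{t}) = \frac{1}{N}\sum_i \kappa_\sigma(\mathbf{x}-\mathbf{x}_i)\kappa_\sigma(\mathbf{t}-\mathbf{t}_i)$ for the joint and the corresponding sample-mean expressions for the marginals $\hat p(\mathbf{x})$ and $\hat p(\mathbf{t})$, reducing every integral to a finite sum of integrals of products of two Gaussians.

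The key technical tool is the Gaussian convolution identity $\int \kappa_\sigma(\mathbf{z}-\mathbf{z}_i)\kappa_\sigma(\mathbf{z}-\mathbf{z}_j)d\mathbf{z} = c\,\kappa_{\sqrt{2}\sigma}(\mathbf{z}_i-\mathbf{z}_j)$, where $c$ depends only on $\sigma$ and the ambient dimension. Applying it separately over $\mathbf{x}$ and over $\mathbf{t}$, the three integrals collapse, respectively, to $\frac{c_{\mathbf{x}} c_{\mathbf{t}}}{N^2}\sum_{i,j}K_{ij}Q_{ij}$, $\frac{c_{\mathbf{x}} c_{\mathbf{t}}}{N^4}\sum_{i,j,q,r}K_{ij}Q_{qr}$, and $\frac{c_{\mathbf{x}} c_{\mathbf{t}}}{N^3}\sum_{i,j,q}K_{ij}Q_{iq}$, where $K$ and $Q$ are Gram matrices at the rescaled bandwidth $\sqrt{2}\sigma$. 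The crucial observation is that the multiplicative factor $c_{\mathbf{x}} c_{\mathbf{t}}$ enters the three $\log$-terms with signs $+1$, $+1$, and $-2$, so it cancels exactly; after absorbing the $\sqrt{2}$ into the kernel bandwidth by convention, this yields the first equality of Eq.~(\ref{eq:CS_QMI_est}).

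For the second equality, which brings the cost from a nominal $\mathcal{O}(N^4)$ down to $\mathcal{O}(N^2)$, the three sums are rewritten in matrix form using the symmetry of $K$ and $Q$: $\sum_{i,j}K_{ij}Q_{ij} = \tr(KQ)$; $\sum_{i,j,q,r}K_{ij}Q_{qr} = (\mathds{1}^T K\mathds{1})(\mathds{1}^T Q\mathds{1})$ by factorization of the doubly independent sums; and $\sum_{i,j,q}K_{ij}Q_{iq} = \sum_i (K\mathds{1})_i (Q\mathds{1})_i = \mathds{1}^T K Q\mathds{1}$ after identifying the inner sums as components of the vectors $K\mathds{1}$ and $Q\mathds{1}$. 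Each of these matrix expressions is computed in $\mathcal{O}(N^2)$ operations.

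The main obstacle is not conceptual but bookkeeping: one must carefully track the normalization constants generated by each Gaussian convolution (which differ between $\mathbf{x}$ and $\mathbf{t}$ when the spaces have different dimensions or bandwidths), verify that they appear with the same net exponent in all three terms, and confirm the bandwidth rescaling from $\sigma$ to $\sqrt{2}\sigma$ absorbed into the notation. Once that cancellation is established, both the closed-form expression and its matrix-form simplification follow by routine algebra.
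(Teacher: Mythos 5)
Your derivation is correct, but it takes a genuinely different route from the paper's at the key estimation step. You use the \emph{plug-in} estimator: you substitute the Gaussian KDEs directly into the three integrals $\int p^2(\mathbf{x},\mathbf{t})$, $\int p^2(\mathbf{x})p^2(\mathbf{t})$ and $\int p(\mathbf{x},\mathbf{t})p(\mathbf{x})p(\mathbf{t})$ and integrate exactly via the Gaussian convolution identity, which produces Gram matrices at the rescaled bandwidth $\sqrt{2}\sigma$ together with normalization constants $c_{\mathbf{x}}c_{\mathbf{t}}$; your observation that these constants cancel because they enter the three logarithms with net exponent $1+1-2=0$ is the crux of that route and is correct. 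The paper instead uses the \emph{resubstitution} estimator: it rewrites each integral as an expectation (e.g.\ $\int p^2(\mathbf{x},\mathbf{t})\,d\mathbf{x}\,d\mathbf{t}=\mathbb{E}_{p(\mathbf{x},\mathbf{t})}[p(\mathbf{x},\mathbf{t})]$, and the cross term as $\mathbb{E}_{p(\mathbf{x},\mathbf{t})}[p(\mathbf{x})p(\mathbf{t})]$), replaces the outer expectation by a sample mean over the data and only the inner density by its KDE. This yields the same three sums but with Gram matrices at the \emph{original} bandwidth $\sigma$ and no convolution constants to track. The paper explicitly contrasts these two estimators in its Appendix B.4: the resubstitution route works for arbitrary kernel functions (your route needs a kernel closed under convolution, essentially the Gaussian) and is the one for which the paper's asymptotic bias and variance analysis is carried out. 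Your matrix-form simplification ($\tr(KQ)$, $(\mathds{1}^TK\mathds{1})(\mathds{1}^TQ\mathds{1})$, and $\mathds{1}^TKQ\mathds{1}$ via the symmetry of $K$) coincides with the paper's and is correct. In short, both derivations arrive at the stated formula, yours up to the bandwidth convention $\sigma\mapsto\sqrt{2}\sigma$ that you flag; the paper's choice buys kernel generality and a cleaner consistency analysis, while yours makes the exact integration transparent at the cost of being Gaussian-specific.
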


\begin{remark}
Our empirical estimator of CS-QMI in Eq.~(\ref{eq:CS_QMI_est}) is closely related to the most widely used biased (or V-statistics) estimator of HSIC, which can be expressed as~\citep{gretton2007kernel}:
\begin{equation}\label{eq:HSIC_estimator}
\widehat{\text{HSIC}}_{b}(\mathbf{x};\mathbf{t}) = \frac{1}{N^2} \sum_{i,j}^N K_{ij}Q_{ij} + \frac{1}{N^4} \sum_{i,j,q,r}^N K_{ij}Q_{qr} - \frac{2}{N^3} \sum_{i,j,q}^N K_{ij}Q_{iq} = \frac{1}{N^2} \tr (KHQH),
\end{equation}
where $H=I - \frac{1}{N}\mathds{1}\mathds{1}^T$ is a $N\times N$ centering matrix. Comparing Eq.~(\ref{eq:CS_QMI_est}) with Eq.~(\ref{eq:HSIC_estimator}), it is easy to observe that CS-QMI adds a logarithm operator on each term of HSIC. This is not surprising. CS-QMI equals the CS divergence between $p(\mathbf{x},\mathbf{t})$ and $p(\mathbf{x})p(\mathbf{t})$; whereas HSIC is equivalent to MMD between $p(\mathbf{x},\mathbf{t})$ and $p(\mathbf{x})p(\mathbf{t})$. According to Remark~\ref{remark_CS_MMD}, similar rule could be expected. However, different to HSIC, the CS-QMI is a rigorous definition of mutual information. It measures the mutual dependence between variables $\mathbf{x}$ and $y$ in units like bit (with $\log_2$) or nat (with $\ln$).
\end{remark}



\subsection{The Rationality of the Regularization Term $I_{\text{CS}}(\mathbf{x};\mathbf{t})$} \label{sec:property_CS_IB}

The advantages of replacing MSE (or MAE) loss with $D_{\text{CS}}(p(y|\mathbf{x});q_\theta(\hat{y}|\mathbf{x}))$ is elaborated in Remarks~\ref{remark_prediction} and \ref{remark_prediction2}. Hence, we justify the rationality of the regularization term $I_{\text{CS}}(\mathbf{x};\mathbf{t})$.


\subsubsection{Effects of $I_{\text{CS}}(\mathbf{x};\mathbf{t})$ on Generalization} \label{sec:generalization_manuscript}



Theorem~\ref{theorem} suggests that CS divergence is upper bounded by the smaller value between forward and reverse KL divergences, which improves the stability of training. Additionally, Corollary~\ref{eq:corollary_MI} implies that $I_{\text{CS}}(\mathbf{x};\mathbf{t})$ encourages a smaller value on the dependence between $\mathbf{x}$ and $\mathbf{t}$, i.e., a heavier penalty on the information compression. These results can be extended to arbitrary square-integral densities as shown in Appendix~\ref{sec:theorem_extension}.  Note that, this is distinct to the mainstream IB optimization idea that just minimizes an upper bound of Shannon's mutual information $I(\mathbf{x};\mathbf{t})$ due to the difficulty of estimation.
Similar to Eq.~(\ref{eq:CS_ext1}), Eq.~(\ref{eq:CS_QMI_est}) makes the estimation of $I_{\text{CS}}(\mathbf{x};\mathbf{t})$ straightforward without any approximation or parametric assumptions on the underlying data distribution.

\begin{theorem}\label{theorem}
For arbitrary $d$-variate Gaussian distributions $p\sim \mathcal{N}(\mu_1,\Sigma_1)$ and $q\sim \mathcal{N}(\mu_2,\Sigma_2)$,
\begin{equation}
D_{\text{CS}}(p;q) \leq \min \{ D_{\text{KL}}(p;q), D_{\text{KL}}(q;p)\}.
\end{equation}
\end{theorem}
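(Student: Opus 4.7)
The plan is to obtain everything in closed form for Gaussians and then reduce the desired bound to a scalar inequality via simultaneous diagonalization. First I would use the standard identities $\int p(\mathbf{x})^2 \,d\mathbf{x} = (4\pi)^{-d/2}|\Sigma_1|^{-1/2}$ (and analogously for $q$) together with $\int p(\mathbf{x})q(\mathbf{x})\,d\mathbf{x} = \mathcal{N}(\mu_1-\mu_2;0,\Sigma_1+\Sigma_2)(0)$, which yields
\[
D_{\text{CS}}(p;q) = \log\frac{|\Sigma_1+\Sigma_2|}{2^d\sqrt{|\Sigma_1||\Sigma_2|}} + (\mu_1-\mu_2)^T(\Sigma_1+\Sigma_2)^{-1}(\mu_1-\mu_2),
\]
and then write out the standard multivariate Gaussian KL formula for both $D_{\text{KL}}(p;q)$ and $D_{\text{KL}}(q;p)$ to make the comparison term-by-term.

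Next I would simultaneously diagonalize by introducing $A = \Sigma_2^{-1/2}\Sigma_1\Sigma_2^{-1/2}$ with eigenvalues $\lambda_1,\ldots,\lambda_d>0$ and the whitened mean displacement $v = \Sigma_2^{-1/2}(\mu_1-\mu_2)$. In the eigenbasis of $A$, the difference $D_{\text{CS}}(p;q)-D_{\text{KL}}(p;q)$ separates into a direction-wise sum whose $i$-th summand is
\[
\log\frac{\lambda_i+1}{2} - \frac{\lambda_i-1}{2} + v_i^2\cdot\frac{1-\lambda_i}{2(\lambda_i+1)}.
\]
The covariance-only part is then bounded via the elementary scalar lemma $\log\frac{\lambda+1}{2}\le \frac{\lambda-1}{2}$ for all $\lambda>0$, with equality iff $\lambda=1$; this follows by showing the derivative of the gap vanishes only at $\lambda=1$ and that this is a global maximum.

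Because $D_{\text{CS}}$ is symmetric, the bound against $D_{\text{KL}}(q;p)$ reduces to the same calculation with $p$ and $q$ swapped: the eigenvalues become $1/\lambda_i$ and the covariance-only summands are controlled by the same scalar lemma (applied to the reciprocal), while the mean contribution flips sign in each direction. The plan would then conclude by taking the minimum of the two resulting sums.

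The main obstacle I foresee is the ``mean'' contribution $v_i^2\cdot\frac{1-\lambda_i}{2(\lambda_i+1)}$, whose sign depends on whether $\lambda_i\gtrless 1$ and which is not bounded by the covariance-only slack for arbitrary $v$. In directions with $\lambda_i<1$ this term can push $D_{\text{CS}}$ above $D_{\text{KL}}(p;q)$, and the hope is that $D_{\text{KL}}(q;p)$ takes over the bound there because the corresponding reciprocal eigenvalue exceeds $1$. Turning this direction-by-direction picture into a clean global bound against $\min\{D_{\text{KL}}(p;q),D_{\text{KL}}(q;p)\}$ is the delicate step: it likely requires either a careful case split on the sign of $\lambda_i-1$ in each eigendirection, an auxiliary inequality that uniformly controls the mean contribution, or an averaging argument exploiting that $D_{\text{CS}}$ cannot exceed both KLs simultaneously.
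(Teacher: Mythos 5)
Your closed forms, the simultaneous diagonalization, and the covariance-only scalar lemma $\log\frac{\lambda+1}{2}\le\frac{\lambda-1}{2}$ are all correct, and reducing per eigenvalue is in fact cleaner than what the paper does (it bounds $\log|\Sigma_2^{-1}\Sigma_1|$ and $\log|\Sigma_2^{-1}\Sigma_1+I|$ by functions of $\tr(\Sigma_2^{-1}\Sigma_1)$ via AM--GM and then studies a single concave scalar function of the trace). But the obstacle you flag at the end is not a technicality that a cleverer case split or averaging argument will remove: with the normalization of $D_{\text{CS}}$ you are using (the main-text definition in Eq.~(\ref{eq:CS_divergence})), the inequality is simply false, and the reverse KL does not come to the rescue. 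Take $d=1$, $\sigma_1^2=0.01$, $\sigma_2^2=1$, $\mu_1-\mu_2=1$. Then $D_{\text{CS}}=\log\frac{1.01}{0.2}+\frac{1}{1.01}\approx 2.61$, while $D_{\text{KL}}(p;q)=\frac{1}{2}\left(0.01-1+1+\ln 100\right)\approx 2.31$ and $D_{\text{KL}}(q;p)\approx 97$, so $D_{\text{CS}}$ exceeds the minimum. In your notation the culprit is exactly the term $v_i^2\frac{1-\lambda_i}{2(\lambda_i+1)}$: for $\lambda_i\to 0$ it grows like $\frac{v_i^2}{2}$ while the covariance slack in that direction grows only logarithmically in $1/\lambda_i$, and the reverse KL blows up like $1/\lambda_i$ and hence drops out of the minimum.

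The resolution is a constant. The paper's appendix proof works with $\tfrac{1}{2}D_{\text{CS}}$, i.e., $-\log(z_{12})+\frac{1}{2}\log(z_{11})+\frac{1}{2}\log(z_{22})$, as acknowledged in its footnote; the theorem is only true under that halved normalization. With the factor $\tfrac{1}{2}$ the mean contribution on the CS side becomes $\frac{1}{2}(\mu_1-\mu_2)^T(\Sigma_1+\Sigma_2)^{-1}(\mu_1-\mu_2)$, which is dominated by the KL mean contribution $\frac{1}{2}(\mu_1-\mu_2)^T\Sigma_2^{-1}(\mu_1-\mu_2)$ for \emph{every} displacement because $(\Sigma_1+\Sigma_2)^{-1}\preceq\Sigma_2^{-1}$ (in your coordinates, $\frac{1}{2}\cdot\frac{v_i^2}{\lambda_i+1}\le\frac{1}{2}v_i^2$ with no sign issue); this is precisely the paper's matrix-monotonicity lemma. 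The covariance part then closes with the analogous scalar inequality $\log\frac{\lambda+1}{2}+\frac{1}{2}\log\lambda-\lambda+1\le 0$, which is concave with maximum $0$ at $\lambda=1$, and the bound against $D_{\text{KL}}(q;p)$ follows by the symmetry $\lambda_i\mapsto 1/\lambda_i$ exactly as you describe. So your architecture is salvageable, but only after you halve $D_{\text{CS}}$ (or, equivalently, note explicitly that the statement with the main-text constant fails).
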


\begin{corollary}\label{eq:corollary_MI}
For two random vectors $\mathbf{x}$ and $\mathbf{t}$ which follow a joint Gaussian distribution $\mathcal{N}\left(\begin{pmatrix}
\mu_x\\
\mu_t
\end{pmatrix},
\begin{pmatrix}
\Sigma_x & \Sigma_{xt} \\
\Sigma_{tx} & \Sigma_t
\end{pmatrix}\right)$, the CS-QMI is no greater than the Shannon's mutual information:
\begin{equation}
I_{\text{CS}}(\mathbf{x};\mathbf{t}) \leq I(\mathbf{x};\mathbf{t}).
\end{equation}
\end{corollary}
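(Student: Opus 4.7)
The plan is to reduce the corollary to a direct application of Theorem~\ref{theorem}. Recall that by definition $I_{\text{CS}}(\mathbf{x};\mathbf{t}) = D_{\text{CS}}(p(\mathbf{x},\mathbf{t}); p(\mathbf{x})p(\mathbf{t}))$ and $I(\mathbf{x};\mathbf{t}) = D_{\text{KL}}(p(\mathbf{x},\mathbf{t}); p(\mathbf{x})p(\mathbf{t}))$. Hence it suffices to verify that the pair of distributions $p := p(\mathbf{x},\mathbf{t})$ and $q := p(\mathbf{x})p(\mathbf{t})$ both satisfy the hypotheses of Theorem~\ref{theorem}, i.e.\ that they are multivariate Gaussians on the same ambient space, and then appeal to the inequality $D_{\text{CS}}(p;q) \le \min\{D_{\text{KL}}(p;q), D_{\text{KL}}(q;p)\} \le D_{\text{KL}}(p;q)$.

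First I would note that, since $(\mathbf{x},\mathbf{t})$ is jointly Gaussian with the stated mean and covariance, the marginals $\mathbf{x}\sim\mathcal{N}(\mu_x,\Sigma_x)$ and $\mathbf{t}\sim\mathcal{N}(\mu_t,\Sigma_t)$ are themselves Gaussian. Consequently, the product distribution $q = p(\mathbf{x})p(\mathbf{t})$ is the law of two \emph{independent} Gaussian vectors, which is again a joint Gaussian on $\mathbb{R}^{d_x+d_t}$, with mean $(\mu_x,\mu_t)$ and block-diagonal covariance
\begin{equation*}
\Sigma_q = \begin{pmatrix} \Sigma_x & 0 \\ 0 & \Sigma_t \end{pmatrix}.
\end{equation*}
The joint $p$ is Gaussian with the same mean but with the full covariance that includes the off-diagonal blocks $\Sigma_{xt},\Sigma_{tx}$. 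Thus $p$ and $q$ are two $(d_x+d_t)$-variate Gaussians sharing a common mean, satisfying exactly the assumptions of Theorem~\ref{theorem}.

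With this verification in hand, the corollary follows in one line: Theorem~\ref{theorem} yields
\begin{equation*}
D_{\text{CS}}\bigl(p(\mathbf{x},\mathbf{t});\,p(\mathbf{x})p(\mathbf{t})\bigr) \;\le\; D_{\text{KL}}\bigl(p(\mathbf{x},\mathbf{t});\,p(\mathbf{x})p(\mathbf{t})\bigr),
\end{equation*}
which, unfolding the definitions, is precisely $I_{\text{CS}}(\mathbf{x};\mathbf{t}) \le I(\mathbf{x};\mathbf{t})$.

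There is essentially no obstacle here; the only point worth double-checking is that $q = p(\mathbf{x})p(\mathbf{t})$ is genuinely a multivariate Gaussian density on $\mathbb{R}^{d_x+d_t}$ (so that Theorem~\ref{theorem} is applicable), which is immediate from the independence structure and the fact that marginals of a joint Gaussian are Gaussian. No appeal to the reverse KL direction is needed, although it is available for free from Theorem~\ref{theorem} and could be used to state a slightly tighter bound $I_{\text{CS}}(\mathbf{x};\mathbf{t}) \le \min\{I(\mathbf{x};\mathbf{t}),\, D_{\text{KL}}(p(\mathbf{x})p(\mathbf{t});p(\mathbf{x},\mathbf{t}))\}$ if desired.
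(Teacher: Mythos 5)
Your proposal is correct and follows exactly the paper's own argument: identify $I_{\text{CS}}(\mathbf{x};\mathbf{t})$ and $I(\mathbf{x};\mathbf{t})$ as the CS and KL divergences between the joint Gaussian and the block-diagonal product-of-marginals Gaussian, then apply Theorem~\ref{theorem}. Nothing further is needed.
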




\begin{center}
\begin{tabular}{lll}
        \toprule
            & $I_\text{CS}(\mathbf{x};\mathbf{t})$ & $I_\text{CS}(\mathbf{x};\mathbf{t}|y)$ \\ \midrule
        $\tau$ & 0.30 & 0.31  \\
        MIC & 0.38 & 0.47  \\
        \bottomrule
\end{tabular}
\quad
\begin{tabular}{lll}
        \toprule
            & $I_\text{CS}(\mathbf{x};\mathbf{t})$ & $I_\text{CS}(\mathbf{x};\mathbf{t}|y)$ \\ \midrule
        $\tau$ & 0.40 & 0.44  \\
        MIC & 0.46 & 0.54  \\
        \bottomrule
\end{tabular}
\captionof{table}{The dependence between $I_\text{CS}(\mathbf{x};\mathbf{t})$ (or $I_\text{CS}(\mathbf{x};\mathbf{t}|y)$) and generalization gap (measured by Kendall's $\tau$ and MIC) on synthetic data (left) and real-world California housing data (right).} \label{tab:correlation}
\end{center}

Minimizing unnecessary information (by minimizing the dependence between $\mathbf{x}$ and $\mathbf{t}$) to control generalization error has inspired lots of deep learning algorithms. In classification setup, recent study states that given $m$ training samples, with probability $1-\delta$, the generalization gap could be upper bounded by $\sqrt{\frac{2^{I(\mathbf{x};\mathbf{t})}+\log(1/\delta)}{2m}}$~\citep{shwartz-ziv2019representation,galloway2023bounding}, which has been rigorously justified in~\citep{kawaguchi2023does} by replacing $2^{I(\mathbf{x};\mathbf{t})}$ with $I(\mathbf{x};\mathbf{t}|y)$. It is natural to ask if similar observations hold for regression. We hypothesize that the compression on the dependence between $\mathbf{x}$ and $\mathbf{t}$ also plays a fundamental role to predict the generalization of a regression model, and $I_\text{CS}(\mathbf{x};\mathbf{t})$ or $I_\text{CS}(\mathbf{x};\mathbf{t}|y)$ correlates well with the generalization performance of a trained network.

To test this, we train nearly one hundred fully-connected neural networks with varying hyperparameters (depth, width, batch size, dimensionality of $\mathbf{t}$) on both a synthetic nonlinear regression data with $30$ dimensional input and a real-world California housing data, and retain models that reach a stable convergence.
For all models, we measure the dependence between $I_\text{CS}(\mathbf{x};\mathbf{t})$ (or $I_\text{CS}(\mathbf{x};\mathbf{t}|y)$) and the generalization gap (i.e., the performance difference in training and test sets in terms of rooted mean squared error) with both Kendall's $\tau$ and maximal information coefficient (MIC)~\citep{reshef2011detecting}, as shown in Table~\ref{tab:correlation}.
For Kendall's $\tau$, values of $\tau$ close to $1$ indicate strong agreement of two rankings for samples in variables $x$ and $y$, that that is, if $x_i>x_j$, then $y_i>y_j$. Kendall’s $\tau$ matches our motivation well, since we would like to evaluate if a small value of $I_\text{CS}(\mathbf{x};\mathbf{t})$ (or $I_\text{CS}(\mathbf{x};\mathbf{t}|y)$) is likely to indicate a smaller generalization gap. This result is in line with that in~\citep{kawaguchi2023does} and corroborates our hypothesis. See Appendix~\ref{sec:generalization_exp} for details and additional results.




Finally, we would like emphasize that when $p$ and $q$ are sufficiently small, Theorem~\ref{theorem} could be extended without Gaussian assumption, which may enable us to derive tighter generalization error bound in certain learning scenarios. We refer interested readers to Appendix~\ref{sec:theorem_extension} for more discussions.

\subsubsection{Adversarial Robustness Guarantee}\label{sec:adversarial}

Given a network $h_\theta=g(f(\mathbf{x}))$, where $f: \mathbb{R}^{d_X} \mapsto \mathbb{R}^{d_T}$ maps the input to an intermediate layer representation $\mathbf{t}$, and $g: \mathbb{R}^{d_T}\mapsto \mathbb{R}$ maps this intermediate representation $\mathbf{t}$ to the final layer, we assume all functions $h_\theta$ and $g$ we consider are uniformly bounded by $M_\mathcal{X}$ and $M_\mathcal{Z}$, respectively. Let us denote $\mathcal{F}$ and $\mathcal{G}$ the induced RKHSs for kernels $\kappa_X$ and $\kappa_Z$, and assume all functions in $\mathcal{F}$ and $\mathcal{G}$ are uniformly bounded by $M_\mathcal{F}$ and $M_\mathcal{G}$. Based on Remark~\ref{remark_CS_MMD} and the result by~\citep{wang2021revisiting}, let $\mu(\mathbb{P}_{XT})$ and $\mu(\mathbb{P}_{X}\otimes \mathbb{P}_{T})$ denote, respectively,
the (empirical) kernel mean embedding of $\mathbb{P}_{XT}$ and $\mathbb{P}_{X}\otimes \mathbb{P}_{T}$ in the RHKS $\mathcal{F}\otimes \mathcal{G}$, CS-QMI bounds the power of an arbitrary adversary in $\mathcal{S}_r$ when $\sqrt{N}$ is sufficiently large, in which $\mathcal{S}_r$ is a $\ell_\infty$-ball of radius $r$, i.e., $\mathcal{S}_r=\{\delta \in \mathbb{R}^{d_X},\delta_\infty\leq r \}$.

\begin{proposition}
Denote $\gamma=\frac{\sigma M_\mathcal{F} M_\mathcal{G}}{r \sqrt{-2\log o(1)} d_X M_\mathcal{Z}} \left(\mathbb{E}[|h_\theta(\mathbf{x}+\delta)-h_\theta(\mathbf{x})|] - o(r) \right)$,
if $\mathbf{x}\sim \mathcal{N}(0,\sigma^2 I)$ and $\|\mu(\mathbb{P}_{XT})\|_{\mathcal{F}\otimes\mathcal{G}} = \| \mu(\mathbb{P}_{X}\otimes \mathbb{P}_{T}) \|_{\mathcal{F}\otimes\mathcal{G}} = \|\mu\| $,
when $\sqrt{N} \gg |g'(\text{HSIC}(\mathbf{x};\mathbf{t}))|\sigma_H$, then:
\begin{equation}
    \mathbb{P}\left(\widehat{I}_{\text{CS}} (\mathbf{x};\mathbf{t}) \geq g(\gamma) \right) \approx 1 - \Phi\left(\frac{\sqrt{N}(g(\gamma)-g(\text{HSIC}(\mathbf{x};\mathbf{t})))}{|g'(\text{HSIC}(\mathbf{x};\mathbf{t}))|\sigma_H}\right) \rightarrow 1,
\end{equation}
in which $g(x)=-2\log (1-x/(2\|\mu\|^2) )$ is a monotonically increasing function, $\Phi$ is the cumulative distribution function of a standard Gaussian, and $\sqrt{N}(\widehat{\text{HSIC}}_{b}(\mathbf{x};\mathbf{t}) - \text{HSIC}(\mathbf{x};\mathbf{t})) \xrightarrow D \mathcal{N}(0,\sigma_H^2)$.
\end{proposition}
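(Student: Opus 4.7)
The plan is to reduce the statement to a result already established for HSIC by Wang et al.\ (2021) and then transport that bound to CS-QMI via the exact functional relation $\widehat{I}_{\text{CS}}=g(\widehat{\text{HSIC}}_b)$ that is implicit in Remark~\ref{remark_CS_MMD}. Concretely, I will first observe that the joint/product kernel $\kappa_X\otimes\kappa_T$ embeds $\mathbb{P}_{XT}$ and $\mathbb{P}_X\otimes\mathbb{P}_T$ into the tensor-product RKHS $\mathcal{F}\otimes\mathcal{G}$. Using Remark~\ref{remark_CS_MMD} on this pair of distributions, the empirical CS-QMI equals $-2\log\cos\bigl(\mu(\mathbb{P}_{XT}),\mu(\mathbb{P}_X\otimes\mathbb{P}_T)\bigr)$, and under the assumption $\|\mu(\mathbb{P}_{XT})\|=\|\mu(\mathbb{P}_X\otimes\mathbb{P}_T)\|=\|\mu\|$ the denominator simplifies to $\|\mu\|^2$. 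Expanding $\widehat{\text{HSIC}}_b=\|\mu(\mathbb{P}_{XT})-\mu(\mathbb{P}_X\otimes\mathbb{P}_T)\|^2$ then gives $\langle\mu(\mathbb{P}_{XT}),\mu(\mathbb{P}_X\otimes\mathbb{P}_T)\rangle=\|\mu\|^2-\widehat{\text{HSIC}}_b/2$, so
\begin{equation}
\widehat{I}_{\text{CS}}(\mathbf{x};\mathbf{t})\;=\;-2\log\!\left(1-\tfrac{\widehat{\text{HSIC}}_b(\mathbf{x};\mathbf{t})}{2\|\mu\|^2}\right)\;=\;g\bigl(\widehat{\text{HSIC}}_b(\mathbf{x};\mathbf{t})\bigr),
\end{equation}
where $g$ is smooth and strictly increasing on its domain. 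This identity is the workhorse of the argument.

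Next, I invoke the adversarial-robustness lower bound for HSIC proved by Wang et al.\ (2021): under the stated Gaussian input model $\mathbf{x}\sim\mathcal{N}(0,\sigma^2 I)$ and the uniform boundedness assumptions on $h_\theta$, $g$, $\mathcal{F}$, $\mathcal{G}$, for every adversary $\delta\in\mathcal{S}_r$ the population HSIC satisfies $\text{HSIC}(\mathbf{x};\mathbf{t})\geq\gamma$ with $\gamma$ exactly as defined in the statement. This step is a direct appeal to an earlier result; all the constants in $\gamma$ correspond term-by-term to their RKHS bound. Because $g$ is monotonically increasing, this immediately yields $g(\text{HSIC}(\mathbf{x};\mathbf{t}))\geq g(\gamma)$ at the population level, so the event $\{\widehat{I}_{\text{CS}}\geq g(\gamma)\}$ is the event that the estimator $g(\widehat{\text{HSIC}}_b)$ does not fall below its population target $g(\text{HSIC})$ by more than $g(\text{HSIC})-g(\gamma)\geq 0$.

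To quantify this probability, I apply the delta method to the CLT $\sqrt{N}(\widehat{\text{HSIC}}_b-\text{HSIC})\xrightarrow{D}\mathcal{N}(0,\sigma_H^2)$ given in the hypothesis. Since $g$ is continuously differentiable at $\text{HSIC}(\mathbf{x};\mathbf{t})$ (which lies strictly below $2\|\mu\|^2$ under mild non-degeneracy), the delta method yields
\begin{equation}
\sqrt{N}\bigl(\widehat{I}_{\text{CS}}(\mathbf{x};\mathbf{t})-g(\text{HSIC}(\mathbf{x};\mathbf{t}))\bigr)\;\xrightarrow{D}\;\mathcal{N}\!\bigl(0,\,g'(\text{HSIC}(\mathbf{x};\mathbf{t}))^2\sigma_H^2\bigr).
\end{equation}
Standardising and using $\Phi$ for the standard normal CDF then gives
\begin{equation}
\mathbb{P}\!\left(\widehat{I}_{\text{CS}}(\mathbf{x};\mathbf{t})\geq g(\gamma)\right)\;\approx\;1-\Phi\!\left(\frac{\sqrt{N}\,(g(\gamma)-g(\text{HSIC}(\mathbf{x};\mathbf{t})))}{|g'(\text{HSIC}(\mathbf{x};\mathbf{t}))|\sigma_H}\right),
\end{equation}
which is the stated approximation. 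The regime $\sqrt{N}\gg|g'(\text{HSIC})|\sigma_H$ combined with $g(\gamma)-g(\text{HSIC})\leq 0$ sends the $\Phi$ argument to $-\infty$ and the probability to $1$.

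The main obstacle I expect is the second step: making the appeal to the HSIC adversarial-robustness bound rigorous with the particular normalisation used here (the $o(r)$ and $o(1)$ terms come from the Gaussian input model and a Lipschitz-style expansion of $\mathbb{E}|h_\theta(\mathbf{x}+\delta)-h_\theta(\mathbf{x})|$ in $r$, which requires cleanly tracking the dimension factor $d_X$ and the embedding norms $M_\mathcal{F}M_\mathcal{G}$). The delta-method step and the kernel-embedding rewriting are essentially routine once the identity $\widehat{I}_{\text{CS}}=g(\widehat{\text{HSIC}}_b)$ is in hand.
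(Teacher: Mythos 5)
Your proposal follows essentially the same route as the paper's own proof: establishing the exact monotone relation $\widehat{I}_{\text{CS}}=g(\widehat{\text{HSIC}}_b)$ from the kernel mean embedding identity under the equal-norm assumption, importing the HSIC adversarial-robustness lower bound $\text{HSIC}(\mathbf{x};\mathbf{t})\geq\gamma$ from Wang et al.\ (2021), and applying the delta method to the HSIC central limit theorem to obtain the Gaussian approximation and the limit. The argument is correct as far as the paper's own standard of rigor goes (including the shared caveat that the convergence to $1$ requires $g(\gamma)<g(\text{HSIC})$ strictly), so no substantive differences to report.
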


\section{Experiments}

We perform experiments on four benchmark regression datasets: California Housing, Appliance Energy, Beijing PM2.5, and Bike Sharing from the UCI repository. To showcase the scalability of CS-IB to high-dimensional data (e.g., images), we additionally report its performance on rotation MNIST and UTKFace~\citep{zhang2017age}, in which the tasks are respectively predicting the rotation angle of MNIST digits and estimating the age of persons by their face images. We compare CS-IB to popular deep IB approaches that could be used for regression tasks. These include VIB~\citep{alemi2017deep}, NIB~\citep{kolchinsky2019nonlinear}, squared-NIB~\citep{kolchinsky2018caveats}, convex-NIB with exponential function~\citep{rodriguez2020convex} and HSIC-bottleneck~\citep{wang2021revisiting}. Similar to~\citep{kolchinsky2019nonlinear,kolchinsky2018caveats,rodriguez2020convex}, we optimize all competing methods for different values of $\beta$, producing a series of models that explore the trade-off between compression and prediction.


The network $h_\theta=g(f(\mathbf{x}))$ is consists of two parts: a stochastic encoder $\mathbf{t}=f_{\textmd{enc}}(\mathbf{x})+w$ where $w$ is zero-centered Gaussian noise with covariance $\texttt{diag}(\sigma^2_{\theta})$ and $p(\mathbf{t}|\mathbf{x})=\mathcal{N}(f_{\textmd{enc}}(\mathbf{x}),\texttt{diag}(\sigma^2_{\theta}))$, and a deterministic decoder $y=g_{\textmd{dec}}(\mathbf{t})$. For benchmark regression datasets, the encoder $f_{\textmd{enc}}$ is a $3$-layer fully-connected network or LSTM. For rotation MNIST and UTKFace, we use VGG-16~\citep{simonyan2015very} as the backbone architecture. Detail on experimental setup is in Appendix~\ref{sec:real_setup}.

\subsection{Behaviors in the Information Plane}\label{sec:4.1}


For each IB approach, we traverse different values of $\beta\geq 0$. Specifically, we vary $\beta \in [10^{-3},10]$ for CS-IB, $\beta \in [10^{-2},1]$ for NIB, square-NIB, and exp-NIB, $\beta \in [10^{-6},10^{-3}]$ for VIB, and $\beta \in [1,10]$ for the $\textmd{HSIC}(\mathbf{x};\mathbf{t})$ term in HSIC-bottleneck. These ranges were chosen empirically so that the resulting models fully explore the IB curve. To fairly compare the capability of prediction under different compression levels, we define the compression ratio $r$ at $\beta=\beta^*$ as $1- I(\mathbf{x};\mathbf{t})_{\beta=\beta^*} / I(\mathbf{x};\mathbf{t})_{\beta=0}$.
Hence, $r$ equals $0$ when $\beta=0$ (i.e., no compression term in the IB objective). Intuitively, a large $\beta$ would result in small value of $I(\mathbf{x};\mathbf{t})$, and hence large $r$. Here, $I(\mathbf{x};\mathbf{t})$ is calculated by each approach's own estimator, whereas the true value of $I(y;\mathbf{t})$ is approximated with $\frac{1}{2}\textmd{log}(\textmd{var}(y)/\textmd{MSE})$~\citep{kolchinsky2019nonlinear}. The results are summarized in Fig.~\ref{fig:IB_plane} and Table~\ref{tab:full_results}. Interestingly, when $r=0$, our CS-IB have already demonstrated an obvious performance gain, which implies that our prediction term (i.e., Eq.~(\ref{eq:CS_ext1})) alone is more helpful than MSE to extract more usable information from input $\mathbf{x}$ to predict $y$.
For each data, we additionally report the best performance achieved when $r\neq 0$ (see Table~\ref{tab:best} in Appendix~\ref{sec:more_results}). Again, our CS-IB outperforms others.
We also perform an ablation study, showing that Eq.~(\ref{eq:CS_ext1}) could also improve the performances of NIB, etc. (by replacing their MSE counterpart), although they are still inferior to CS-IB. See Appendix~\ref{sec:more_results} for additional results.

\begin{table*}[ht]
\caption{RMSE for different deep IB approaches with compression ratio $r=0$ and $r=0.5$. When $r=0$, CS-IB uses prediction term $D_{\text{CS}}(p(y|\mathbf{x});q_\theta(\hat{y}|\mathbf{x}))$ in Eq.~(\ref{eq:CS_ext1}), whereas others use MSE.}
\label{tab:full_results}
\centering
\begin{adjustbox}{width=\textwidth}
\begin{tabular}{l|cc|cc|cc|cc|cc|cc}
\toprule
\multirow{2}{*}{Model} &\multicolumn{2}{c|}{Housing} & \multicolumn{2}{c|}{Energy} & \multicolumn{2}{c|}{PM2.5} & \multicolumn{2}{c|}{Bike}& \multicolumn{2}{c|}{Rotation MNIST}& \multicolumn{2}{c}{UTKFace}\\
\cline{2-13}
& 0 & 0.5 &  0 & 0.5 &  0 & 0.5 &  0 & 0.5&  0 & 0.5 &  0 & 0.5\\
\hline
VIB

&0.258&0.347
&0.059&0.071
&0.025&0.038
&0.428&0.523
&4.351&5.358
&8.870&9.258
         \\
NIB

&0.258&0.267
&0.059&0.060
&0.025&0.034
&0.428&0.435
&4.351&4.102
&8.870&8.756
       \\
Square-NIB

&0.258&0.293
&0.059&0.063
&0.025&0.028
&0.428&0.447
&4.351&4.257
&8.870&8.712
         \\
Exp-NIB

&0.258&0.287
&0.059&0.061
&0.025&0.030
&0.428&0.458
&4.351&4.285
&8.870&8.917
     \\
HSIC-bottlenck

&0.258&0.371
&0.059&0.065
&0.025&0.031
&0.428&0.451
&4.351&4.573
&8.870&8.852
         \\
\hline
CS-IB

&\textbf{0.251}&\textbf{0.245}
&\textbf{0.056}&\textbf{0.058}
&\textbf{0.022}&\textbf{0.027}
&\textbf{0.404}&\textbf{0.412}
&\textbf{4.165}&\textbf{3.930}
&\textbf{8.702}&\textbf{8.655}
     \\
\bottomrule
\end{tabular}
\end{adjustbox}
\end{table*}

\begin{figure}
     \centering
     \begin{subfigure}[b]{0.45\textwidth}
         \centering
         \includegraphics[width=\textwidth]{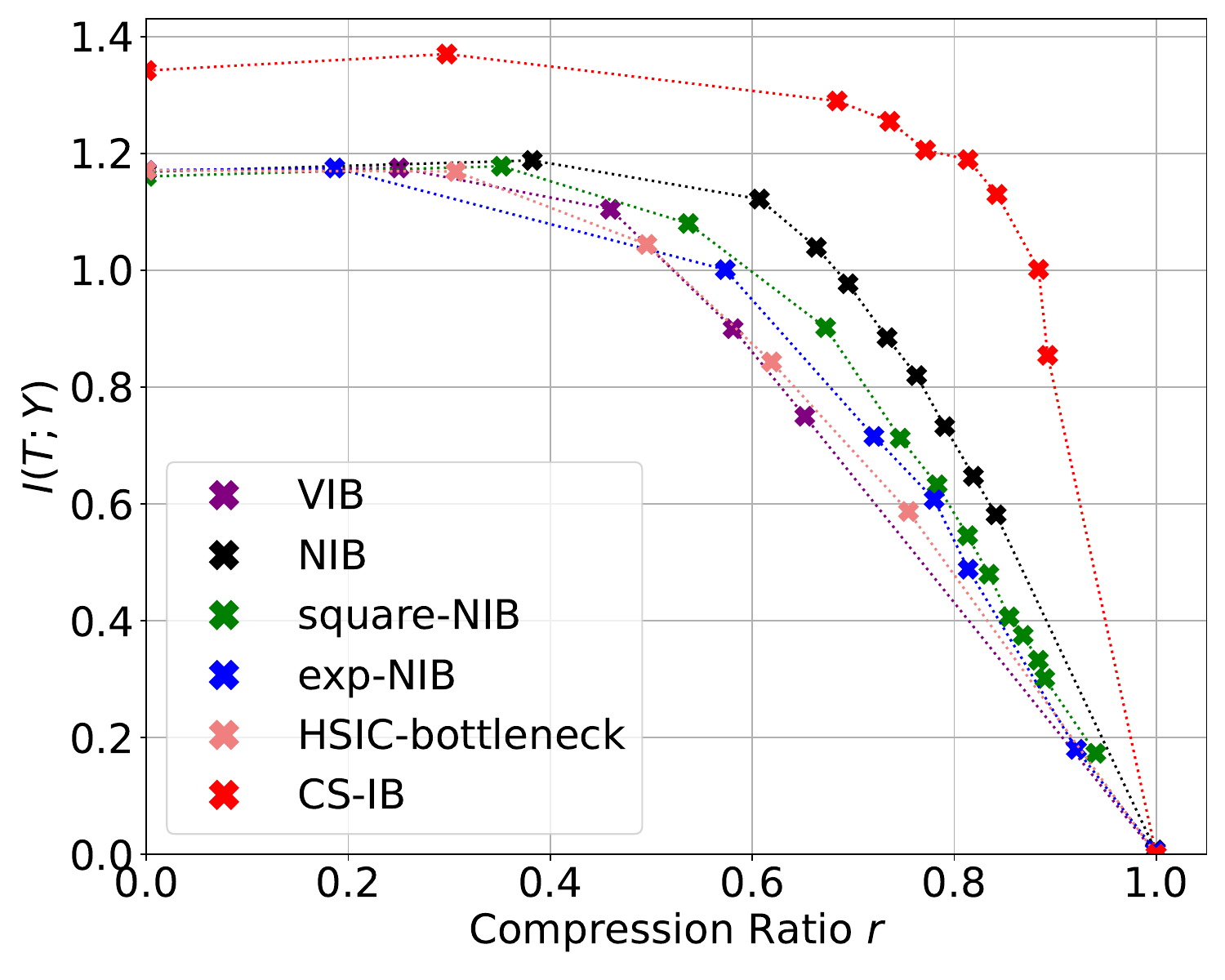}
         \caption{California Housing}
     \end{subfigure}
     \hfill
     \begin{subfigure}[b]{0.45\textwidth}
         \centering
         \includegraphics[width=\textwidth]{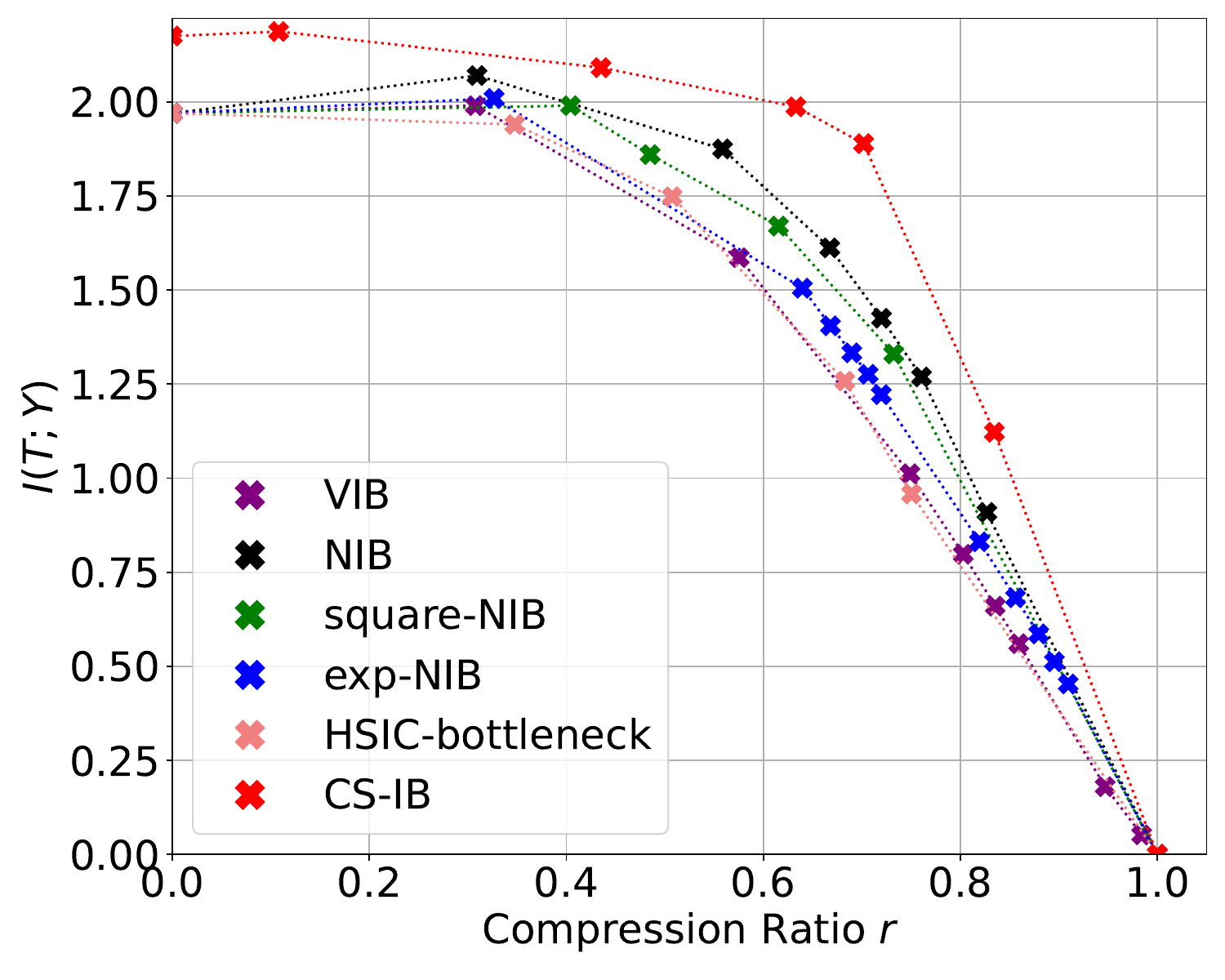}
         \caption{Beijing PM2.5}
     \end{subfigure}
        \caption{Information plane diagrams on California Housing and Beijing PM2.5 datasets.}
        \label{fig:IB_plane}
\end{figure}

\subsection{Adversarial Robustness}

We then evaluate the adversarial robustness of the model trained with our CS-IB objective. Different types of adversarial attacks have been proposed to ``fool" models by adding small carefully designed perturbations on the input. Despite extensive studies on adversarial robustness of classification networks, the adversarial robustness in regression setting is scarcely investigated but of crucial importance~\citep{nguyen2019adversarial,gupta2021adversarial}. In this section, we use the most basic way to evaluate adversarial robustness, that is the regression performance on adversarially perturbed versions of the test set, also called the adversarial examples.


There are no formal definitions on adversarial attacks in the regression setting, we follow~\citep{nguyen2019adversarial} and consider adversarial attack as a potential symptom of numerical instability in the learned function. That is, we aim to learn a numerically stable function $f_\theta$ such that the output of two points that are near each other should be similar:
\begin{equation}
|f_\theta(\mathbf{x}) - f_\theta(\mathbf{x}+\Delta_\mathbf{x})| \leq \delta, \quad \text{s.t.,} \quad \|\Delta_\mathbf{x}\|_p < \epsilon.
\end{equation}

Different to classification, there are no natural margins in regression tasks. Hence, we just consider untargeted attack and define an adversarial example $\tilde{\mathbf{x}}$ as a point within a $\ell_p$ ball with radius $\epsilon$ around $\mathbf{x}$ that causes the learned function to produce an output with the largest deviation:
\begin{equation}\label{eq:adversarial_example}
\tilde{\mathbf{x}} = \argmax_{\|\mathbf{x}-\mathbf{x}'\|_p < \epsilon} \mathcal{L}(f_\theta(\mathbf{x}'),y),
\end{equation}
where $\mathcal{L}$ is a loss function such as mean squared error.

We apply two commonly used ways to solve Eq.~(\ref{eq:adversarial_example}): the Fast Gradient Sign Attack (FGSM)~\citep{goodfellow2015explaining} and the Projected Gradient Descent (PGD)~\citep{madry2018towards}. In our experiments, we evaluate performance against both white-box FGSM attack with perturbation $\epsilon=1/255$ for two image datasets (i.e, Rotation MNIST and UTKFace), and $\epsilon=0.1$ for the remaining four benchmark regression datasets and a white-box $5$-step PGD ($\ell_\infty$) attack with perturbation $\rho=0.3$ and step size $\alpha=0.1$. The RMSE in the test set is shown in Table~\ref{table:robustness}. As can be seen, our CS-IB outperforms other IB approaches for both types of attacks. This result corroborates our analysis in Section~\ref{sec:adversarial}.

\begin{table}[ht]
\caption{White-box robustness (in terms of RMSE) with FGSM and PGD attacks. The best performance is highlighted. }\label{table:robustness}
 \centering
\begin{threeparttable}
\renewcommand{\arraystretch}{1}
\begin{adjustbox}{width=\textwidth}
    \begin{tabular}{ccccccccccccc}
    \toprule
    \multirow{2}{*}{Method}&
    \multicolumn{2}{c}{Housing}&\multicolumn{2}{c}{Energy}&\multicolumn{2}{c}{PM2.5}&\multicolumn{2}{c}{Bike}&\multicolumn{2}{c}{Rotation MNIST}&\multicolumn{2}{c}{UTKFace}\cr
    \cmidrule(lr){2-3} \cmidrule(lr){4-5}\cmidrule(lr){6-7}\cmidrule(lr){8-9}\cmidrule(lr){10-11}\cmidrule(lr){12-13}
    &FGSM&$\textmd{PGD}^{5}$&FGSM&$\textmd{PGD}^{5}$&FGSM&$\textmd{PGD}^{5}$&FGSM&$\textmd{PGD}^{5}$&FGSM&$\textmd{PGD}^{5}$&FGSM&$\textmd{PGD}^{5}$\cr
    \midrule
        VIB
        &0.706
        &0.917
        &0.502
        &0.543
        &0.432
        &0.464
        &1.633
        &2.072
        &6.754
        &7.109
        &12.151
        &13.172\cr
    NIB
    &0.641
    &0.732
    &0.394
    &0.471
    &0.381
    &0.415
    &1.487
    &1.840
    &5.577
    &6.898
    &11.375
    &12.654\cr
    Square-NIB
    &0.649
    &0.789
    &0.441
    &0.463
    &0.383
    &0.423
    &1.592
    &1.937
    &5.413
    &6.813
    &11.056
    &12.785\cr
    Exp-NIB
    &0.661
    &0.784
    &0.434
    &0.491
    &0.367
    &0.435
    &1.532
    &1.952
    &5.850
    &6.845
    &11.687
    &12.895\cr
    HSIC-bottleneck
    &0.651
    &0.765
    &0.385
    &0.485
    &0.311
    &0.349
    &1.519
    &2.011
    &5.785
    &6.923
    &11.457
    &12.776\cr
\midrule
    Ours
    &\textbf{0.635}
    &\textbf{0.755}
    &\textbf{0.290}
    &\textbf{0.402}
    &\textbf{0.278}
    &\textbf{0.324}
    &\textbf{1.478}
    &\textbf{1.786}
    &\textbf{5.023}
    &\textbf{6.543}
    &\textbf{10.824}
    &\textbf{12.058}\cr
    \bottomrule
    \end{tabular}
    \end{adjustbox}
    \end{threeparttable}

\end{table}


\section{Conclusion}
We discuss the implementation of deep information bottleneck (IB) for the regression setup on arbitrarily distributed $p(\mathbf{x},y)$. By making use of the Cauchy-Schwarz (CS) divergence, we obtain a new prediction term that enhances numerical stability of the trained model and also avoids Gaussian assumption on the decoder. We also obtain a new compression term that estimates the true mutual information values (rather than an upper bound) and has theoretical guarantee on adversarial robustness. Besides, we show that CS divergence is always smaller than the popular Kullback-Leibler (KL) divergence, thus enabling tighter generalization error bound. Experiments on four benchmark datasets and two high-dimensional image datasets against other five deep IB approaches over a variety of deep architectures (e.g., LSTM and VGG-16) suggest that our prediction term is scalable and helpful for extracting more usable information from input $\mathbf{x}$ to predict $y$; the compression term also improves generalization. Moreover, our model always achieves the best trade-off in terms of prediction accuracy and compression ratio. Limitations and future work are discussed in Appendix~\ref{sec:limitation}.


\subsubsection*{Acknowledgments}
The authors would like to thank the anonymous reviewers for constructive comments. The authors would also like to thank Dr. Yicong Lin from the Vrije Universiteit Amsterdam for helpful discussions on proofs in Appendix~\ref{sec:properties}, and Mr. Kaizhong Zheng from the Xi'an Jiaotong University for performing initial study on extending CS-IB to predict the age of patients with a graph neural network in Appendix~\ref{sec:limitation}. This work was funded in part by the Research Council of Norway (RCN) under grant 309439, and the U.S. ONR under grants N00014-18-1-2306, N00014-21-1-2324, N00014-21-1-2295, the DARPA under grant FA9453-18-1-0039.

\textbf{Reproducibility Statement.} To ensure reproducibility, we include complete proofs to our theoretical results in Appendix~\ref{sec:proofs}, thorough justifications on the properties or advantages of our method in Appendix~\ref{sec:properties}, detailed explanations of our experimental setup in Appendix~\ref{sec:setup}. Our code is available at \url{https://github.com/SJYuCNEL/Cauchy-Schwarz-Information-Bottleneck}.

\bibliographystyle{iclr2024_conference}
\bibliography{CS_IB_ICLR}


\newpage

\appendix

The appendix is organized into the following topics and sections:
\section*{Table of Contents}
\hrule
\begin{itemize}
    \item[A] \hyperref[sec:proofs]{Proofs}
    \begin{itemize}
        \item[A.1] \hyperref[sec:proof_proposition1]{Proof of Proposition 1}
        \item[A.2] \hyperref[sec:proof_proposition2]{Proof of Proposition 2}
        \item[A.3] \hyperref[sec:proof_proposition3]{Proof of Proposition 3}
        \item[A.4] \hyperref[sec:proof_proposition4]{Proof of Proposition 4}
        \item[A.5] \hyperref[sec:proof_theorem1]{Proof of Theorem 1}
        \item[A.6] \hyperref[sec:proof_corollary1]{Proof of Corollary 1}
    \end{itemize}
    \item[B] \hyperref[sec:properties]{Properties and Analysis}
    \begin{itemize}
        \item[B.1] \hyperref[sec:divergence_family]{The Relationship between CS Divergence, KL Divergence and MMD}
        \item[B.2] \hyperref[sec:bias]{The Bias Analysis}
        \item[B.3] \hyperref[sec:prediction_term]{Advantages of the Prediction Term $\widehat{D}_{\text{CS}}(p(y|\mathbf{x});q_\theta(\hat{y}|\mathbf{x}))$}
        \item[B.4] \hyperref[sec:asymptotic_property]{Asymptotic Property of CS Divergence Estimator}
        \item[B.5] \hyperref[sec:theorem_extension]{Extension of Theorem~\ref{theorem_appendix} and its Implication}
    \end{itemize}
    \item[C] \hyperref[sec:setup]{Experimental Details and Additional Results}
    \begin{itemize}
        \item[C.1] \hyperref[sec:generalization_exp]{Effects of $I_{\text{CS}}(\mathbf{x};\mathbf{t})$ on Generalization}
        \item[C.2] \hyperref[sec:real_setup]{Experimental Setup in Real-World Regression Datasets}
        \item[C.3] \hyperref[sec:adversarial_appendix]{Adversarial Robustness}
        \item[C.4] \hyperref[sec:more_results]{Additional Results of Section~\ref{sec:4.1} and an Ablation Study}
        \item[C.5] \hyperref[sec:toy]{Comparing CS Divergence with Variational KL Divergence}
    \end{itemize}
    \item[D] \hyperref[sec:limitation]{Limitations and Future Work}
\end{itemize}
\hrule


\section{Proofs}\label{sec:proofs}

\addtocounter{proposition}{-4}

\subsection{Proof to Proposition 1} \label{sec:proof_proposition1}

\begin{proposition}~\citep{rodriguez2019information}
With a Gaussian assumption on $q_\theta(\hat{y}|t)$, maximizing $I_\theta(y;\mathbf{t})$ essentially minimizes $D_{\text{KL}}(p(y|\mathbf{x});q_\theta (\hat{y}|\mathbf{x}))$, both of which could be approximated by minimizing a MSE loss.
\end{proposition}

\begin{proof}
    A complete proof is in~\citep{rodriguez2019information}. Intuitively, given a feed-forward neural network $h_\theta=f\circ g$, where $f$ is the encoder and $g$ is the decoder, minimizing $D_{\text{KL}}(p(y|\mathbf{x});q_\theta (\hat{y}|\mathbf{x}))$ can be approximated by minimizing a MSE loss $\mathcal{L}_{\text{MSE}}(\theta)=\mathbb{E}[(y-h_\theta(\mathbf{x}))^2]=\mathbb{E}[(y-g_\theta(\mathbf{t}))^2]$ (under Gaussian assumption)~\citep{vera2023role},
    whereas minimizing $\mathcal{L}_{\text{CE}}(\theta)$ or $\mathcal{L}_{\text{MSE}}(\theta)$ maximizes $I(y;\mathbf{t})$ (Proposition 3.2 in~\citep{rodriguez2019information}).
\end{proof}

\subsection{Proof to Proposition 2} \label{sec:proof_proposition2}

\textbf{Proposition 2.} Given observations $\{(\mathbf{x}_i,y_i,\hat{y}_i )\}_{i=1}^N$, where $\mathbf{x}\in \mathbb{R}^p$ denotes a $p$-dimensional input variable, $y$ is the desired response, and $\hat{y}$ is the predicted output generated by a model $f_\theta$. Let $K$, $L^1$ and $L^2$ denote, respectively, the Gram matrices\footnote{In kernel learning, the Gram or kernel matrix is a symmetric matrix where each entry is the inner product of the corresponding data points in a reproducing kernel Hilbert space (RKHS), defined by kernel function $\kappa$.} for the variable $\mathbf{x}$, $y$, and $\hat{y}$ (i.e., $K_{ij}=\kappa(\mathbf{x}_i,\mathbf{x}_j)$, $L_{ij}^1=\kappa(y_i,y_j)$ and $L_{ij}^2=\kappa(\hat{y}_i,\hat{y}_j)$, in which $\kappa$ is a Gaussian kernel and takes the form of $\kappa=\exp \left(-\frac{\left\|\cdot\right\|^{2}}{2 \sigma^{2}}\right)$). Further, let $L^{21}$ denote the Gram matrix between $\hat{y}$ and $y$ (i.e., $L_{ij}^{21}=\kappa(\hat{y}_i,y_j)$). The empirical estimation of $D_{\text{CS}}(p(y|\mathbf{x});q_\theta(\hat{y}|\mathbf{x}))$ is given by:
\begin{equation}
\begin{split}
& \widehat{D}_{\text{CS}}(p(y|\mathbf{x});q_\theta(\hat{y}|\mathbf{x}))  = \log\left( \sum_{j=1}^N \left( \frac{ \sum_{i=1}^N K_{ji} L_{ji}^1 }{ (\sum_{i=1}^N K_{ji})^2 } \right) \right) \\
& + \log\left( \sum_{j=1}^N \left( \frac{ \sum_{i=1}^N K_{ji} L_{ji}^2 }{ (\sum_{i=1}^N K_{ji})^2 } \right) \right)
 - 2 \log \left( \sum_{j=1}^N \left( \frac{ \sum_{i=1}^N K_{ji} L_{ji}^{21} }{ (\sum_{i=1}^N K_{ji})^2 } \right) \right).
\end{split}
\end{equation}

\begin{proof}
Our derivation mainly borrows the proof from~\citep{yu2023conditional}. By definition, we have\footnote{$p(\mathbf{x},y)=p(y|\mathbf{x})p(\mathbf{x})$ and $q_\theta(\mathbf{x},\hat{y})=q_\theta(\hat{y}|\mathbf{x})p(\mathbf{x})$.}:
\begin{equation}\label{eq:CS_loss}
\begin{split}
D_{\text{CS}}(p(y|\mathbf{x});q_\theta(\hat{y}|\mathbf{x})) = & \log \left(\int p^2(y|\mathbf{x})d\mathbf{x}dy\right) +  \log \left(\int q_\theta^2(\hat{y}|\mathbf{x})d\mathbf{x}dy\right)  \\
& - 2 \log \left(\int p(y|\mathbf{x}) q_\theta(\hat{y}|\mathbf{x}) d\mathbf{x}dy\right) \\
 = & \log \left(\int \frac{p^2(\mathbf{x},y)}{p^2(\mathbf{x})} d\mathbf{x}dy\right) + \log \left(\int \frac{q_\theta^2(\mathbf{x},\hat{y})}{p^2(\mathbf{x})} d\mathbf{x}dy\right) \\
 & - 2 \log \left(\int \frac{p(\mathbf{x},y) q_\theta(\mathbf{x},\hat{y})}{p^2(\mathbf{x})} d\mathbf{x}dy\right).
\end{split}
\end{equation}

\textbf{[Estimation of the conditional quadratic terms $\int \frac{p^2(\mathbf{x},y)}{p^2(\mathbf{x})} d\mathbf{x}dy$ and $\int \frac{q_\theta^2(\mathbf{x},\hat{y})}{p^2(\mathbf{x})} d\mathbf{x}dy$]}

The empirical estimation of $\int \frac{p^2(\mathbf{x},y)}{p^2(\mathbf{x})} d\mathbf{x}dy$ can be expressed as:
\begin{equation}
\int \frac{{p}^2(\mathbf{x},y)}{p^2(\mathbf{x})} d\mathbf{x}dy = \mathbb{E}_{p(\mathbf{x},y)} \left[ \frac{p(\mathbf{x},y)}{p^2(\mathbf{x})} \right] \approx \frac{1}{N} \sum_{j=1}^N \frac{p(\mathbf{x}_j,y_j)}{p^2(\mathbf{x}_j)}.
\end{equation}

By kernel density estimator (KDE), we have:
\begin{equation}
\frac{p(\mathbf{x}_j,y_j)}{p^2(\mathbf{x}_j)} \approx N \frac{\sum_{i=1}^N \kappa_\sigma(\mathbf{x}_j - \mathbf{x}_i)\kappa_\sigma(y_j - y_i) }{ \left(\sum_{i=1}^N \kappa_\sigma (\mathbf{x}_j - \mathbf{x}_i)\right)^2 }.
\end{equation}

Therefore,
\begin{equation}\label{eq:quadratic1}
\int \frac{p^2(\mathbf{x},y)}{p^2(\mathbf{x})} d\mathbf{x}dy \approx \sum_{j=1}^N \left( \frac{\sum_{i=1}^N \kappa_\sigma(\mathbf{x}_j - \mathbf{x}_i)\kappa_\sigma(y_j - y_i) }{ \left(\sum_{i=1}^N \kappa_\sigma (\mathbf{x}_j - \mathbf{x}_i)\right)^2 } \right)
 = \sum_{j=1}^N \left( \frac{ \sum_{i=1}^N K_{ji} L_{ji}^1 }{ (\sum_{i=1}^N K_{ji})^2 } \right).
\end{equation}

Similarly, the empirical estimation of $\int \frac{q_\theta^2(\mathbf{x},\hat{y})}{p^2(\mathbf{x})} d\mathbf{x}dy$ is given by:
\begin{equation}\label{eq:quadratic2}
\int \frac{q_\theta^2(\mathbf{x},\hat{y})}{p^2(\mathbf{x})} d\mathbf{x}dy \approx \sum_{j=1}^N \left( \frac{\sum_{i=1}^N \kappa_\sigma(\mathbf{x}_j - \mathbf{x}_i)\kappa_\sigma(\hat{y}_j - \hat{y}_i) }{ \left(\sum_{i=1}^N \kappa_\sigma (\mathbf{x}_j - \mathbf{x}_i)\right)^2 } \right)
 = \sum_{j=1}^N \left( \frac{ \sum_{i=1}^N K_{ji} L_{ji}^2 }{ (\sum_{i=1}^N K_{ji})^2 } \right).
\end{equation}

\textbf{[Estimation of the cross term $\int \frac{p(\mathbf{x},y)q_\theta(\mathbf{x},\hat{y})}{p^2(\mathbf{x})} d\mathbf{x}dy$]}

The empirical estimation of $\int \frac{p(\mathbf{x},y)q_\theta(\mathbf{x},\hat{y})}{p^2(\mathbf{x})} d\mathbf{x}dy$ can be expressed as:
\begin{equation}
\int \frac{p(\mathbf{x},y)q_\theta(\mathbf{x},\hat{y})}{p^2(\mathbf{x})} d\mathbf{x}dy = \mathbb{E}_{p(\mathbf{x},y)} \left[ \frac{q_\theta(\mathbf{x},\hat{y})}{p^2(\mathbf{x})} \right] \approx \frac{1}{N} \sum_{j=1}^N \frac{q_\theta(\mathbf{x}_j,\hat{y}_j)}{p^2(\mathbf{x}_j)}.
\end{equation}

By KDE, we further have:
\begin{equation}
\frac{q_\theta(\mathbf{x}_j,\hat{y}_j)}{p^2(\mathbf{x}_j)} \approx N \frac{\sum_{i=1}^N \kappa_\sigma(\mathbf{x}_j - \mathbf{x}_i)\kappa_\sigma(\hat{y}_j - y_i) }{\left(\sum_{i=1}^N \kappa_\sigma (\mathbf{x}_j - \mathbf{x}_i)\right)^2}.
\end{equation}

Therefore,
\begin{equation}\label{eq:cross}
\int_\mathcal{X}\int_\mathcal{Y} \frac{p_s(\mathbf{x},\mathbf{y}){p_t}(\mathbf{x},\mathbf{y})}{p_s(\mathbf{x}){p_t}(\mathbf{x})} d\mathbf{x}d\mathbf{y} \approx \sum_{j=1}^N \left( \frac{\sum_{i=1}^N \kappa_\sigma(\mathbf{x}_j - \mathbf{x}_i)\kappa_\sigma(\hat{y}_j - y_i) }{\left(\sum_{i=1}^N \kappa_\sigma (\mathbf{x}_j - \mathbf{x}_i)\right)^2} \right)
 = \sum_{j=1}^N \left( \frac{ \sum_{i=1}^N K_{ji} L_{ji}^{21} }{ (\sum_{i=1}^N K_{ji})^2 } \right).
\end{equation}

Combine Eqs.~(\ref{eq:quadratic1}), (\ref{eq:quadratic2}) and (\ref{eq:cross}) with Eq.~(\ref{eq:CS_loss}), an empirical estimation to $D_{\text{CS}}(p(y|\mathbf{x});q_\theta(\hat{y}|\mathbf{x}))$ is given by:
\begin{equation}
\begin{split}
& \widehat{D}_{\text{CS}}(p(y|\mathbf{x});q_\theta(\hat{y}|\mathbf{x}))  = \log\left( \sum_{j=1}^N \left( \frac{ \sum_{i=1}^N K_{ji} L_{ji}^1 }{ (\sum_{i=1}^N K_{ji})^2 } \right) \right) \\
& + \log\left( \sum_{j=1}^N \left( \frac{ \sum_{i=1}^N K_{ji} L_{ji}^2 }{ (\sum_{i=1}^N K_{ji})^2 } \right) \right)
 - 2 \log \left( \sum_{j=1}^N \left( \frac{ \sum_{i=1}^N K_{ji} L_{ji}^{21} }{ (\sum_{i=1}^N K_{ji})^2 } \right) \right).
\end{split}
\end{equation}

\end{proof}

\subsection{Proof to Proposition 3} \label{sec:proof_proposition3}

\addtocounter{proposition}{+1}

\begin{proposition}
Given $N$ pairs of observations $\{(\mathbf{x}_i,\mathbf{t}_i)\}_{i=1}^N$, each sample contains two different types of measurements $\mathbf{x}\in \mathcal{X}$ and $\mathbf{t}\in\mathcal{T}$ obtained from the same realization. Let $K$ and $Q$ denote, respectively, the Gram matrices for variable $\mathbf{x}$ and variable $\mathbf{t}$. The empirical estimator of CS-QMI is given by:
\begin{equation}\label{eq:CS_QMI_est_appendix}
\begin{split}
\widehat{I}_{\text{CS}}(\mathbf{x};\mathbf{t}) & = \log \left( \frac{1}{N^2} \sum_{i,j}^N K_{ij}Q_{ij} \right) + \log \left( \frac{1}{N^4} \sum_{i,j,q,r}^N K_{ij}Q_{qr} \right) -2\log \left( \frac{1}{N^3} \sum_{i,j,q}^N K_{ij}Q_{iq} \right) \\
& = \log \left( \frac{1}{N^2} \tr(KQ) \right) + \log\left( \frac{1}{N^4} \mathds{1}^TK\mathds{1} \mathds{1}^TQ\mathds{1} \right) - 2\log \left( \frac{1}{N^3} \mathds{1}^T KQ \mathds{1} \right).
\end{split}
\end{equation}
\end{proposition}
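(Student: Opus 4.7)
The plan is to start from the definition of CS-QMI given earlier in the paper, namely
\[
I_{\text{CS}}(\mathbf{x};\mathbf{t}) = \log\!\int p^2(\mathbf{x},\mathbf{t})\,d\mathbf{x}d\mathbf{t} + \log\!\int p^2(\mathbf{x})p^2(\mathbf{t})\,d\mathbf{x}d\mathbf{t} - 2\log\!\int p(\mathbf{x},\mathbf{t})p(\mathbf{x})p(\mathbf{t})\,d\mathbf{x}d\mathbf{t},
\]
and to estimate each of the three integrals separately by plugging in Parzen (KDE) estimates of $p(\mathbf{x})$, $p(\mathbf{t})$ and the joint $p(\mathbf{x},\mathbf{t})$. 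The marginals will be written as $\hat p(\mathbf{x})=\tfrac1N\sum_i\kappa_\sigma(\mathbf{x}-\mathbf{x}_i)$, $\hat p(\mathbf{t})=\tfrac1N\sum_i\kappa_\sigma(\mathbf{t}-\mathbf{t}_i)$, and the joint as $\hat p(\mathbf{x},\mathbf{t})=\tfrac1N\sum_i\kappa_\sigma(\mathbf{x}-\mathbf{x}_i)\kappa_\sigma(\mathbf{t}-\mathbf{t}_i)$, using a product Gaussian kernel.

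The computational engine is the convolution identity for Gaussian kernels, $\int \kappa_\sigma(\mathbf{u}-\mathbf{u}_i)\kappa_\sigma(\mathbf{u}-\mathbf{u}_j)\,d\mathbf{u} = \kappa_{\sqrt{2}\sigma}(\mathbf{u}_i-\mathbf{u}_j)$, which (after absorbing the $\sqrt{2}$ factor into the kernel bandwidth that defines the Gram matrices $K_{ij}$ and $Q_{ij}$) collapses every double integral over the sample space into a double sum over indices. Step by step: (i) for $\int \hat p^2(\mathbf{x},\mathbf{t})$, expanding the square produces a double sum $\tfrac1{N^2}\sum_{i,j}$ of products $\kappa_\sigma(\mathbf{x}-\mathbf{x}_i)\kappa_\sigma(\mathbf{x}-\mathbf{x}_j)\kappa_\sigma(\mathbf{t}-\mathbf{t}_i)\kappa_\sigma(\mathbf{t}-\mathbf{t}_j)$, whose integral factorizes into a product of two one-variable convolutions and therefore yields $\tfrac1{N^2}\sum_{i,j}K_{ij}Q_{ij}=\tfrac1{N^2}\tr(KQ)$; (ii) for $\int \hat p^2(\mathbf{x})\hat p^2(\mathbf{t})$, the integrand factorizes into an $\mathbf{x}$-part and a $\mathbf{t}$-part, each producing an independent double sum, giving $\tfrac1{N^4}\sum_{i,j,q,r}K_{ij}Q_{qr}=\tfrac1{N^4}(\mathds{1}^TK\mathds{1})(\mathds{1}^TQ\mathds{1})$; (iii) for the cross term $\int \hat p(\mathbf{x},\mathbf{t})\hat p(\mathbf{x})\hat p(\mathbf{t})$, the triple product has one index tied between the $\mathbf{x}$-kernel and the $\mathbf{t}$-kernel (coming from the joint KDE) while the other two indices are free, so the integral over $\mathbf{x}$ couples indices $(i,j)$ via $K_{ij}$ and the integral over $\mathbf{t}$ couples indices $(i,q)$ via $Q_{iq}$, yielding $\tfrac1{N^3}\sum_{i,j,q}K_{ij}Q_{iq}=\tfrac1{N^3}\mathds{1}^TKQ\mathds{1}$.

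Substituting the three estimates into the definition of $I_{\text{CS}}$ immediately gives the first line of the claimed formula, and rewriting the three summations as the trace and quadratic-form expressions gives the second line, from which the $\mathcal{O}(N^2)$ cost is read off. The main obstacle is essentially bookkeeping: correctly tracking which of the three sample indices in the cross term is shared (the one coming from the joint KDE) and which are free, so that the $\mathbf{x}$- and $\mathbf{t}$-integrals produce $K_{ij}$ and $Q_{iq}$ respectively (sharing the index $i$), rather than some spurious decoupled pattern. Once that index bookkeeping is right, the rest follows purely mechanically from the Gaussian-convolution identity and the definitions of $K$ and $Q$.
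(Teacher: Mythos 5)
Your derivation is correct, but it is not the route the paper takes, and the distinction matters. You use the \emph{plug-in} estimator: insert the Parzen estimates directly into the integrals and evaluate $\int \kappa_\sigma(\mathbf{u}-\mathbf{u}_i)\kappa_\sigma(\mathbf{u}-\mathbf{u}_j)\,d\mathbf{u}=\kappa_{\sqrt{2}\sigma}(\mathbf{u}_i-\mathbf{u}_j)$ in closed form. The paper instead uses the \emph{resubstitution} estimator: each integral is first rewritten as an expectation, e.g.\ $\int p^2(\mathbf{x},\mathbf{t})\,d\mathbf{x}d\mathbf{t}=\mathbb{E}_{p(\mathbf{x},\mathbf{t})}[p(\mathbf{x},\mathbf{t})]$ and $\int p(\mathbf{x},\mathbf{t})p(\mathbf{x})p(\mathbf{t})\,d\mathbf{x}d\mathbf{t}=\mathbb{E}_{p(\mathbf{x},\mathbf{t})}[p(\mathbf{x})p(\mathbf{t})]$, the outer expectation is replaced by a sample mean over the observed pairs, and only the inner density is estimated by KDE; no integral over the ambient space is ever computed. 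The index bookkeeping you worry about (the shared index $i$ in the cross term) comes out the same either way, and both routes land on the displayed sums. What differs is the interpretation and scope: your convolution identity is specific to Gaussian kernels and produces Gram matrices at bandwidth $\sqrt{2}\sigma$ rather than $\sigma$, so to match the stated formula exactly you must silently rescale the bandwidth, as you note; the paper's resubstitution argument works for any valid kernel at the original bandwidth, and this is precisely the distinction the authors draw in their Appendix B.4 (contrasting their estimator with the plug-in estimator of Jenssen et al.), where the resubstitution form is also what their bias/variance analysis is built on. In short: your proof establishes an estimator of the same algebraic form and is a legitimate alternative derivation, but it proves a slightly different (bandwidth-rescaled, Gaussian-only) statement unless you make the rescaling convention explicit.
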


\begin{proof}
By definition, we have:
\begin{equation}\label{eq:CS_QMI}
\begin{split}
& I_{\text{CS}}(\mathbf{x},\mathbf{t}) =D_{\text{CS}}(p(\mathbf{x},\mathbf{t});p(\mathbf{x})p(\mathbf{t})) = -\log\left( \frac{\Big| \int p(\mathbf{x},\mathbf{t})p(\mathbf{x})p(\mathbf{t})d\mathbf{x}d\mathbf{t} \Big|^2}{\int p^2(\mathbf{x},\mathbf{t}) d\mathbf{x}d\mathbf{t} \int p^2(\mathbf{x}) p^2(\mathbf{t}) d\mathbf{x}d\mathbf{t}} \right) \\
& = \log \left( \int p^2(\mathbf{x},\mathbf{t}) d\mathbf{x}d\mathbf{t} \right) + \log \left( \int p^2(\mathbf{x}) p^2(\mathbf{t}) d\mathbf{x}d\mathbf{t} \right) - 2\log \left( \int p(\mathbf{x},\mathbf{t})p(\mathbf{x})p(\mathbf{t})d\mathbf{x}d\mathbf{t} \right)
\end{split}
\end{equation}

Again, all three terms inside the ``$\log$" can be estimated by KDE as follows,
\begin{equation} \label{eq:CS_QMI_term1}
    \int p^2(\mathbf{x},\mathbf{t}) d\mathbf{x}d\mathbf{t} = \frac{1}{N^2} \sum_{i=1}^N\sum_{j=1}^N \kappa (\mathbf{x}_i-\mathbf{x}_j) \kappa (\mathbf{t}_i-\mathbf{t}_j) = \frac{1}{N^2} \sum_{i,j}^N K_{ij}Q_{ij},
\end{equation}

\begin{equation} \label{eq:CS_QMI_term2}
\begin{split}
\int p(\mathbf{x},\mathbf{t})p(\mathbf{x})p(\mathbf{t})d\mathbf{x}d\mathbf{t}
    & = \mathbb{E}_{p(\mathbf{x},\mathbf{t})}\left[ p(\mathbf{x})p(\mathbf{t}) \right] \\
    & = \frac{1}{N} \sum_{i=1}^N \left[ \left( \frac{1}{N} \sum_{j=1}^N \kappa(\mathbf{x}_i-\mathbf{x}_j) \right) \left( \frac{1}{N} \sum_{q=1}^N \kappa(\mathbf{t}_i-\mathbf{t}_q) \right) \right] \\
    & = \frac{1}{N^3} \sum_{i=1}^N \sum_{j=1}^N \sum_{q=1}^N \kappa(\mathbf{x}_i-\mathbf{x}_q) \kappa (\mathbf{t}_i-\mathbf{t}_q) \\
    & = \frac{1}{N^3} \sum_{i,j,q}^N K_{ij}Q_{iq},
\end{split}
\end{equation}

\begin{equation} \label{eq:CS_QMI_term3}
\begin{split}
    \int p^2(\mathbf{x}) p^2(\mathbf{t}) d\mathbf{x}d\mathbf{t} & = \left[\frac{1}{N^2} \sum_{i=1}^N \sum_{j=1}^N \kappa (\mathbf{x}_i-\mathbf{x}_j) \right] \left[ \frac{1}{N^2} \sum_{q=1}^N \sum_{r=1}^N \kappa (\mathbf{t}_q-\mathbf{t}_r) \right] \\ \\
    & = \frac{1}{N^4} \sum_{i=1}^N \sum_{j=1}^N \sum_{q=1}^N \sum_{r=1}^N \kappa (\mathbf{x}_i-\mathbf{x}_j) \kappa (\mathbf{t}_q-\mathbf{t}_r) = \frac{1}{N^4} \sum_{i,j,q,r}^N K_{ij}Q_{qr}.
\end{split}
\end{equation}

By plugging Eqs.~(\ref{eq:CS_QMI_term1})-(\ref{eq:CS_QMI_term3}) into Eq.~(\ref{eq:CS_QMI}), we obtain:
\begin{equation}\label{eq:CS_QMI_naive}
\widehat{I}_{\text{CS}}(\mathbf{x};\mathbf{t}) = \log \left( \frac{1}{N^2} \sum_{i,j}^N K_{ij}Q_{ij} \right) + \log \left( \frac{1}{N^4} \sum_{i,j,q,r}^N K_{ij}Q_{qr} \right) -2\log \left( \frac{1}{N^3} \sum_{i,j,q}^N K_{ij}Q_{iq} \right).
\end{equation}

An exact and na\"ive computation of Eq.~(\ref{eq:CS_QMI_naive}) would require $\mathcal{O}(N^{4})$ operations. However, an equivalent form which needs $\mathcal{O}(n^2)$ operations can be formulated as (both $K$ and $Q$ are symmetric):
\begin{equation}
    \widehat{I}_{\text{CS}}(\mathbf{x};\mathbf{t}) = \log \left( \frac{1}{N^2} \tr(KQ) \right) + \log\left( \frac{1}{N^4} \mathds{1}^TK\mathds{1} \mathds{1}^TQ\mathds{1} \right) - 2\log \left( \frac{1}{N^3} \mathds{1}^T KQ \mathds{1} \right).
\end{equation}
where $\mathds{1}$ is a vector of $1$s of relevant dimension.

\end{proof}

\subsection{Proof to Proposition 4}\label{sec:proof_proposition4}

Given a network $h_\theta=g(f(\mathbf{x}))$, where $f: \mathbb{R}^{d_X} \mapsto \mathbb{R}^{d_T}$ maps the input to an intermediate layer representation $\mathbf{t}$, and $g: \mathbb{R}^{d_T}\mapsto \mathbb{R}$ maps this intermediate representation $\mathbf{t}$ to the final layer, we assume all functions $h_\theta$ and $g$ we consider are uniformly bounded by $M_\mathcal{X}$ and $M_\mathcal{Z}$, respectively. Let us denote $\mathcal{F}$ and $\mathcal{G}$ the induced RKHSs for kernels $\kappa_X$ and $\kappa_Z$, and assume all functions in $\mathcal{F}$ and $\mathcal{G}$ are uniformly bounded by $M_\mathcal{F}$ and $M_\mathcal{G}$. Based on Remark~\ref{remark_CS_MMD} and the result by~\citep{wang2021revisiting}, let $\mu(\mathbb{P}_{XT})$ and $\mu(\mathbb{P}_{X}\otimes \mathbb{P}_{T})$ denote, respectively,
the (empirical) kernel mean embedding of $\mathbb{P}_{XT}$ and $\mathbb{P}_{X}\otimes \mathbb{P}_{T}$ in the RHKS $\mathcal{F}\otimes \mathcal{G}$, CS-QMI bounds the power of an arbitrary adversary in $\mathcal{S}_r$ when $\sqrt{N}$ is sufficiently large, in which $\mathcal{S}_r$ is a $\ell_\infty$-ball of radius $r$, i.e., $\mathcal{S}_r=\{\delta \in \mathbb{R}^{d_X},\delta_\infty\leq r \}$.

\begin{proposition}\label{proposition_robustness}
Denote $\gamma=\frac{\sigma M_\mathcal{F} M_\mathcal{G}}{r \sqrt{-2\log o(1)} d_X M_\mathcal{Z}} \left(\mathbb{E}[|h_\theta(\mathbf{x}+\delta)-h_\theta(\mathbf{x})|] - o(r) \right)$,
if $\mathbf{x}\sim \mathcal{N}(0,\sigma^2 I)$ and $\|\hat{\mu}(\mathbb{P}_{XT})\|_{\mathcal{F}\otimes\mathcal{G}} = \| \hat{\mu}(\mathbb{P}_{X}\otimes \mathbb{P}_{T}) \|_{\mathcal{F}\otimes\mathcal{G}} = \|\hat{\mu}\| $,
then:
\begin{equation}
    \mathbb{P}\left(\widehat{I}_{\text{CS}} (\mathbf{x};\mathbf{t}) \geq g(\gamma) \right) = 1 - \Phi\left(\frac{\sqrt{N}(g(\gamma)-g(\text{HSIC}(\mathbf{x};\mathbf{t})))}{|g'(\text{HSIC}(\mathbf{x};\mathbf{t}))|\sigma_H}\right),
\end{equation}
in which $g(x)=-2\log (1-x/(2\|\hat{\mu}\|^2) )$ is a monotonically increasing function, $\Phi$ is the cumulative distribution function of a standard Gaussian, and $\sqrt{N}(\widehat{\text{HSIC}}_{b}(\mathbf{x};\mathbf{t}) - \text{HSIC}(\mathbf{x};\mathbf{t})) \xrightarrow D \mathcal{N}(0,\sigma_H^2)$.
\end{proposition}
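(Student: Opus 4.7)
The plan is to reduce the CS-QMI to a smooth monotone transformation of the biased HSIC estimator, import the adversarial HSIC lower bound of \citep{wang2021revisiting}, and then invoke the stated central limit theorem via the delta method. First, I apply Remark~\ref{remark_CS_MMD} to the empirical CS divergence between $\mathbb{P}_{XT}$ and $\mathbb{P}_X\otimes\mathbb{P}_T$ in the product RKHS $\mathcal{F}\otimes\mathcal{G}$, obtaining $\widehat{I}_{\text{CS}}(\mathbf{x};\mathbf{t}) = -2\log\cos(\hat\mu(\mathbb{P}_{XT}),\hat\mu(\mathbb{P}_X\otimes\mathbb{P}_T))$. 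Using the MMD identity in the same remark together with the equal-norm hypothesis $\|\hat\mu(\mathbb{P}_{XT})\| = \|\hat\mu(\mathbb{P}_X\otimes\mathbb{P}_T)\| = \|\hat\mu\|$, the inner product simplifies to $\|\hat\mu\|^2 - \widehat{\text{HSIC}}_b/2$, so $\cos = 1 - \widehat{\text{HSIC}}_b/(2\|\hat\mu\|^2)$. This yields the exact identity $\widehat{I}_{\text{CS}}(\mathbf{x};\mathbf{t}) = g(\widehat{\text{HSIC}}_b(\mathbf{x};\mathbf{t}))$ with $g$ strictly increasing on its domain, and hence the event $\{\widehat{I}_{\text{CS}} \geq g(\gamma)\}$ coincides with $\{\widehat{\text{HSIC}}_b \geq \gamma\}$.

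Second, I would invoke the adversarial bound of \citep{wang2021revisiting}, which under $\mathbf{x}\sim\mathcal{N}(0,\sigma^2 I)$ and the uniform boundedness of $h_\theta$, $g$, and the RKHS functions by $M_\mathcal{X}$, $M_\mathcal{Z}$, $M_\mathcal{F}$, $M_\mathcal{G}$ relates the expected adversarial displacement $\mathbb{E}[|h_\theta(\mathbf{x}+\delta)-h_\theta(\mathbf{x})|]$ to a monotone function of $\text{HSIC}(\mathbf{x};\mathbf{t})$. Inverting that inequality produces exactly the threshold $\gamma$ appearing in the statement; I would not re-derive the underlying Gaussian concentration, only transcribe it and let the leading constants be absorbed into $\gamma$. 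Next I apply the delta method to the stated central limit theorem $\sqrt{N}(\widehat{\text{HSIC}}_b-\text{HSIC}) \xrightarrow{D} \mathcal{N}(0,\sigma_H^2)$: differentiability and non-vanishing of $g'(\text{HSIC})$ on the relevant interval give $\sqrt{N}(g(\widehat{\text{HSIC}}_b)-g(\text{HSIC})) \xrightarrow{D} \mathcal{N}(0,|g'(\text{HSIC})|^2\sigma_H^2)$. Combining with the event equality from step one yields
\begin{equation*}
\mathbb{P}\bigl(\widehat{I}_{\text{CS}}(\mathbf{x};\mathbf{t}) \geq g(\gamma)\bigr) \;\approx\; 1 - \Phi\!\left(\frac{\sqrt{N}\,\bigl(g(\gamma)-g(\text{HSIC}(\mathbf{x};\mathbf{t}))\bigr)}{|g'(\text{HSIC}(\mathbf{x};\mathbf{t}))|\,\sigma_H}\right),
\end{equation*}
and in the regime $\sqrt{N} \gg |g'(\text{HSIC})|\sigma_H$ with $\gamma \leq \text{HSIC}$ (guaranteed by the imported adversarial lower bound), the argument of $\Phi$ diverges to $-\infty$, so the right-hand side tends to one.

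The main obstacle I expect is the careful bookkeeping inside $\gamma$: the factor $\sqrt{-2\log o(1)}$, the dimension scaling $d_X$, and the $o(r)$ remainder all trace back to Gaussian concentration of $\|\delta\|_2$ versus $\|\delta\|_\infty$ on $\mathcal{S}_r$, together with the Lipschitz expansion of $h_\theta$ through the RKHS bounds $M_\mathcal{F}, M_\mathcal{G}$; harmonizing these with the equal-norm normalization $\|\hat\mu\|$ (essentially a rescaling that makes $g$ a well-defined contraction on its effective domain) and confirming that $\text{HSIC}$ remains in the interior where $g'$ is bounded away from zero is the only nontrivial calculation. Everything else reduces to an algebraic identification plus one application of the classical delta method.
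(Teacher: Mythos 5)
Your proposal follows essentially the same route as the paper's own proof: the same identity $\widehat{I}_{\text{CS}}(\mathbf{x};\mathbf{t}) = g(\widehat{\text{HSIC}}_b(\mathbf{x};\mathbf{t}))$ derived from Remark~\ref{remark_CS_MMD} under the equal-norm hypothesis, the same imported adversarial lower bound $\text{HSIC}(\mathbf{x};\mathbf{t}) \geq \gamma$ from \citep{wang2021revisiting}, and the same delta-method application to the HSIC central limit theorem to obtain the Gaussian tail expression and the convergence to one when $\sqrt{N} \gg |g'(\text{HSIC}(\mathbf{x};\mathbf{t}))|\sigma_H$. The only cosmetic difference is that you make the event equivalence $\{\widehat{I}_{\text{CS}} \geq g(\gamma)\} = \{\widehat{\text{HSIC}}_b \geq \gamma\}$ explicit, which the paper leaves implicit.
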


\begin{proof}
We first provide the following two Lemmas.

\begin{lemma}[asymptotic distribution of $\text{HSIC}_{b}(\mathbf{x};\mathbf{t})$ when $\mathbb{P}_{XT}\neq \mathbb{P}_{X}\otimes \mathbb{P}_{T}$~\citep{gretton2007kernel}]\label{lemma_asymptotic_HSIC}
    Let
\begin{equation}
    h_{ijqr} = \frac{1}{4!} \sum_{t,u,v,w}^{i,j,q,r} K_{tu} Q_{tu} + K_{tu} Q_{vw} - 2 K_{tu} Q_{tv},
\end{equation}
where the sum represents all ordered quadruples $(t, u, v, w)$ drawn without replacement from $(i, j, q, r)$, and assume $\mathbb{E}(h^2)\leq \infty$. Under $\mathcal{H}_1$ that is $\mathbb{P}_{XT}\neq \mathbb{P}_{X}\otimes \mathbb{P}_{T}$\footnote{In our application, $\mathbf{x}$ and $\mathbf{t}$ will never be completely independent. Otherwise, the latent representation $\mathbf{t}$ learns no meaningful information from $\mathbf{x}$ and the training fails.}, $\text{HSIC}_{b}(\mathbf{x};\mathbf{t})$ converges in distribution as $N\rightarrow \infty$ to a Gaussian according to:
\begin{equation}
    \sqrt{N} (\widehat{\text{HSIC}}_{b}(\mathbf{x};\mathbf{t}) - \text{HSIC}(\mathbf{x};\mathbf{t})) \xrightarrow D \mathcal{N}(0,\sigma_H^2),
\end{equation}
where $\sigma_H^2 = 16\left( \mathbb{E}_i (\mathbb{E}_{j,q,r} H_{ijqr})^2 - \text{HSIC}(\mathbf{x};\mathbf{t}) \right)$.
\end{lemma}

\begin{lemma}[adversarial robustness guarantee of $\text{HSIC}(\mathbf{x};\mathbf{t}) $~\citep{wang2021revisiting}]\label{lemma_HSIC_robustness}
    Assume $\mathbf{x}\sim \mathcal{N}(0,\sigma^2 I)$. Also assume all functions $h_\theta$ and $g$ we consider are uniformly bounded respectively by $M_\mathcal{X}$ and $M_\mathcal{Z}$, and all functions in $\mathcal{F}$ and $\mathcal{G}$ are uniformly bounded respectively by $M_\mathcal{F}$ and $M_\mathcal{G}$\footnote{We refer interested readers to Assumptions 1 and 2 in~\citep{wang2021revisiting} for the mathematical formulations regarding these two assumptions.}, then:
\begin{equation}
    \frac{r \sqrt{-2\log o(1)} d_X M_\mathcal{Z}}{\sigma M_\mathcal{F} M_\mathcal{G}} \text{HSIC}(\mathbf{x};\mathbf{t}) + o(r) \geq \mathbb{E}[|h_\theta(\mathbf{x}+\delta)-h_\theta(\mathbf{x})|],
\end{equation}
    for all $\delta \in \mathcal{S}_r$.
\end{lemma}

By Remark~\ref{remark_CS_MMD}, for samples $\{\mathbf{x}_i^p\}_{i=1}^m$ and $\{\mathbf{x}_i^q\}_{i=1}^n$, drawn~\emph{i.i.d.} from respectively any two square-integral distributions $p$ and $q$,
\begin{equation}
    \widehat{D}_{\text{CS}} (p;q) = -2\log \left( \frac{\langle \boldsymbol{\mu}_p,\boldsymbol{\mu}_q \rangle_\mathcal{H}}{\|\boldsymbol{\mu}_p\|_\mathcal{H} \|\boldsymbol{\mu}_q\|_\mathcal{H} } \right)
    = -2\log \cos(\boldsymbol{\mu}_p,\boldsymbol{\mu}_q),
\end{equation}
and
\begin{equation}
    \widehat{\text{MMD}}^2(p;q) = \langle \boldsymbol{\mu}_p,\boldsymbol{\mu}_q \rangle_\mathcal{H}^2 = \|\boldsymbol{\mu}_p\|_\mathcal{H}^2 + \|\boldsymbol{\mu}_q\|_\mathcal{H}^2 - 2 \langle \boldsymbol{\mu}_p,\boldsymbol{\mu}_q \rangle_\mathcal{H},
\end{equation}
in which $\boldsymbol{\mu}_p$ and $\boldsymbol{\mu}_q$ refer to, respectively, the (empirical) kernel mean embeddings of $\{\mathbf{x}_i^p\}_{i=1}^m$ and $\{\mathbf{x}_i^q\}_{i=1}^n$ in the Reproducing kernel Hilbert space (RKHS) $\mathcal{H}$.

Let $p=\mathbb{P}_{XT}$ (the joint distribution) and $q=\mathbb{P}_X\otimes \mathbb{P}_T$ (the product of marginal distributions). Further, let $\mu(\mathbb{P}_{XT})$ and $\mu(\mathbb{P}_{X}\otimes \mathbb{P}_{T})$ denote, respectively, the (empirical) kernel mean embedding of $\mathbb{P}_{XT}$ and $\mathbb{P}_{X}\otimes \mathbb{P}_{T}$ in RKHS $\mathcal{F}\otimes \mathcal{G}$, we have:
\begin{equation}
    \widehat{I}_{\text{CS}}(\mathbf{x};\mathbf{t}) = \widehat{D}_{\text{CS}}(\mathbb{P}_{XT};\mathbb{P}_X\otimes \mathbb{P}_T) = -2\log \left( \frac{\langle \mu(\mathbb{P}_{XT}),\mu(\mathbb{P}_{X}\otimes \mathbb{P}_{T}) \rangle_{\mathcal{F}\otimes \mathcal{G}} }{\|\mu(\mathbb{P}_{XT})\|_{\mathcal{F}\otimes \mathcal{G}} \|\mu(\mathbb{P}_{X}\otimes \mathbb{P}_{T})\|_{\mathcal{F}\otimes \mathcal{G}} } \right),
\end{equation}
and
\begin{equation}
\begin{split}
    \widehat{\text{HSIC}}(\mathbf{x};\mathbf{t}) & = \widehat{\text{MMD}}^2(\mathbb{P}_{XT};\mathbb{P}_X\otimes \mathbb{P}_T) \\
    & = \|\mu(\mathbb{P}_{XT})\|^2_{\mathcal{F}\otimes \mathcal{G}} + \|\mu(\mathbb{P}_{X}\otimes \mathbb{P}_{T})\|^2_{\mathcal{F}\otimes \mathcal{G}}
    - 2 \langle \mu(\mathbb{P}_{XT}),\mu(\mathbb{P}_{X}\otimes \mathbb{P}_{T}) \rangle_{\mathcal{F}\otimes \mathcal{G}}.
\end{split}
\end{equation}

If $\|\mu(\mathbb{P}_{XT})\|_{\mathcal{F}\otimes\mathcal{G}} = \| \mu(\mathbb{P}_{X}\otimes \mathbb{P}_{T}) \|_{\mathcal{F}\otimes\mathcal{G}} = \|\mu\| $ (i.e., the norm of the (empirical) kernel mean embeddings for $\mathbb{P}_{XT}$ and $\mathbb{P}_{X}\otimes \mathbb{P}_{T}$ is the same, please also refer to Fig.~\ref{fig:CS_QMI_HSIC_geometric} for an geometrical interpretation), then:
\begin{equation}
\widehat{I}_{\text{CS}}(\mathbf{x};\mathbf{t}) = g(\widehat{\text{HSIC}}(\mathbf{x};\mathbf{t})) = -2\log (1-\frac{\widehat{\text{HSIC}}(\mathbf{x};\mathbf{t})}{2 \|\hat{\mu}\|^2}).
\end{equation}
Here, $g(x)$ is a monotonically increasing function.

\begin{figure}[htbp]
		\centering
		\includegraphics[width=0.95\linewidth]{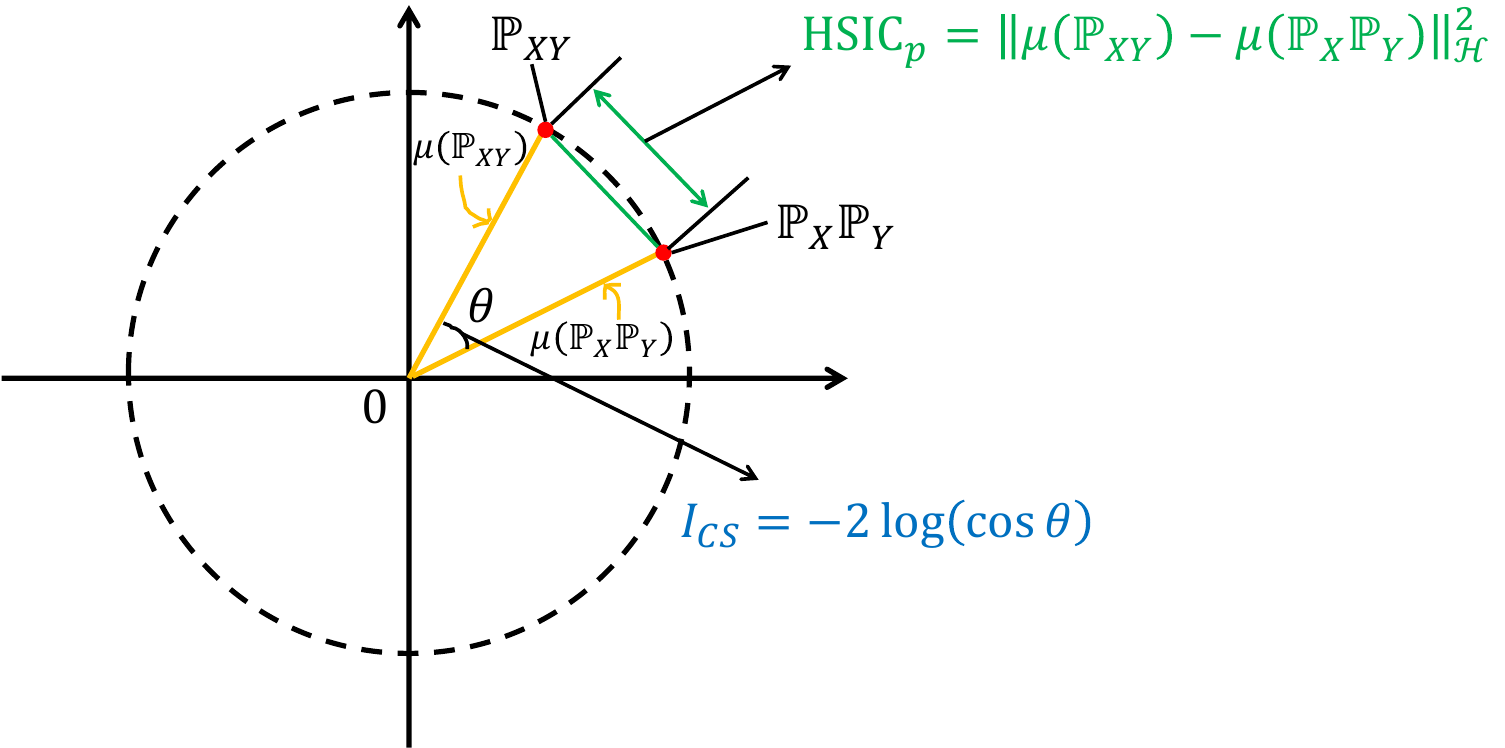}
	\caption{Geometrical interpretation of CS-QMI and HSIC, in which $\mathcal{H}=\mathcal{F}\otimes\mathcal{G}$. When $\|\mu(\mathbb{P}_{XT})\|_{\mathcal{H}} = \| \mu(\mathbb{P}_{X}\otimes \mathbb{P}_{T}) \|_{\mathcal{H}} = \|\mu\| $, CS-QMI and HSIC has a monotonic relationship.}
	\label{fig:CS_QMI_HSIC_geometric}
\end{figure}

By applying the delta method to Lemma~\ref{lemma_asymptotic_HSIC}, we obtain:
\begin{equation} \label{eq:52}
    \sqrt{N}( \widehat{I}_{\text{CS}}(\mathbf{x};\mathbf{t}) - g(\text{HSIC}(\mathbf{x};\mathbf{t}) ) \xrightarrow D \mathcal{N}(0, [g'(\text{HSIC}(\mathbf{x};\mathbf{t}))]^2 \sigma_H^2).
\end{equation}

Here, $g'(x) = \frac{1}{\|\mu\|^2 - x/2}$ exists and is always non-zero if $\|\hat{\mu}(\mathbb{P}_{XT})\|_{\mathcal{F}\otimes\mathcal{G}} $ and $\| \hat{\mu}(\mathbb{P}_{X}\otimes \mathbb{P}_{T}) \|_{\mathcal{F}\otimes\mathcal{G}}$ are not orthogonal.

Then, by applying Lemma~\ref{lemma_HSIC_robustness},
\begin{equation}
    \text{HSIC}(\mathbf{x};\mathbf{t}) \geq \frac{\sigma M_\mathcal{F} M_\mathcal{G}}{r \sqrt{-2\log o(1)} d_X M_\mathcal{Z}} \left(\mathbb{E}[|h_\theta(\mathbf{x}+\delta)-h_\theta(\mathbf{x})|] - o(r) \right) = \gamma.
\end{equation}

Hence,
\begin{equation}
    g(\text{HSIC}(\mathbf{x};\mathbf{t})) \geq g(\gamma).
\end{equation}

Then,
\begin{equation}
\begin{split}
    \mathbb{P}\left(\widehat{I}_{\text{CS}} (\mathbf{x};\mathbf{t}) \geq g(\gamma) \right)  & = \mathbb{P} (\sqrt{N}( \widehat{I}_{\text{CS}}(\mathbf{x};\mathbf{t}) - g(\text{HSIC}(\mathbf{x};\mathbf{t}) ) \geq \sqrt{N}( g(\gamma) - g(\text{HSIC}(\mathbf{x};\mathbf{t}) ) ) \\
    & \approx 1- \Phi\left(\frac{\sqrt{N}(g(\gamma)-g(\text{HSIC}(\mathbf{x};\mathbf{t})))}{|g'(\text{HSIC}(\mathbf{x};\mathbf{t}))|\sigma_H}\right). \quad \text{By applying Eq.~(\ref{eq:52}).}
\end{split}
\end{equation}

By Lemma~\ref{lemma_HSIC_robustness},
\begin{equation}
g(\gamma)-g(\text{HSIC}(\mathbf{x};\mathbf{t})) \leq 0.
\end{equation}

Hence, $\mathbb{P}\left(\widehat{I}_{\text{CS}} (\mathbf{x};\mathbf{t}) \geq g(\gamma) \right)$ is at least $0.5$.

On the other hand, when $\sqrt{N} \gg |g'(\text{HSIC}(\mathbf{x};\mathbf{t}))|\sigma_H$, we have:
\begin{equation}
    \mathbb{P}\left(\widehat{I}_{\text{CS}} (\mathbf{x};\mathbf{t}) \geq g(\gamma) \right) \rightarrow 1.
\end{equation}

\end{proof}

In fact, even when the condition $\|\mu(\mathbb{P}_{XT})\|_{\mathcal{H}} = \| \mu(\mathbb{P}_{X}\otimes \mathbb{P}_{T}) \|_{\mathcal{H}} = \|\mu\| $ does not hold, the following Lemma provides a supplement.

\begin{lemma}\label{lemma_CS_QMI_HSIC}
Minimizing the empirical estimator of CS-QMI (i.e., Eq.~(\ref{eq:CS_QMI_est})) implies the minimization of that of HSIC.
\end{lemma}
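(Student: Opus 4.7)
The plan is to exploit the algebraic correspondence between $\widehat{I}_{\text{CS}}$ and $\widehat{\text{HSIC}}_b$ that is already visible in Eq.~(\ref{eq:CS_QMI_est}) and Eq.~(\ref{eq:HSIC_estimator}). Writing $P:=\frac{1}{N^2}\tr(KQ)$, $M:=\frac{1}{N^4}(\mathds{1}^\top K\mathds{1})(\mathds{1}^\top Q\mathds{1})$, and $R:=\frac{1}{N^3}\mathds{1}^\top KQ\mathds{1}$, the two estimators read $\widehat{I}_{\text{CS}}=\log P+\log M-2\log R$ and $\widehat{\text{HSIC}}_b=P+M-2R$. By Remark~\ref{remark_CS_MMD} these three quantities are precisely $\|\mu(\mathbb{P}_{XT})\|^2$, $\|\mu(\mathbb{P}_X\otimes\mathbb{P}_T)\|^2$, and $\langle\mu(\mathbb{P}_{XT}),\mu(\mathbb{P}_X\otimes\mathbb{P}_T)\rangle$ in the tensor RKHS $\mathcal{F}\otimes\mathcal{G}$, so the Cauchy--Schwarz inequality there gives $R\le\sqrt{PM}$ and hence $\widehat{I}_{\text{CS}}\ge 0$.

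First I would apply the elementary inequality $-\log u\ge 1-u$ to $u:=R^2/(PM)\in(0,1]$, which yields $PM-R^2\le PM\cdot\widehat{I}_{\text{CS}}$. Factoring the left-hand side as $(\sqrt{PM}-R)(\sqrt{PM}+R)$ and dividing through by $\sqrt{PM}+R\ge\sqrt{PM}$ gives the clean control
\begin{equation*}
\sqrt{PM}-R\;\le\;\tfrac{\sqrt{PM}}{2}\,\widehat{I}_{\text{CS}}.
\end{equation*}
Combining this with the identity $\widehat{\text{HSIC}}_b=(\sqrt{P}-\sqrt{M})^2+2(\sqrt{PM}-R)$ produces
\begin{equation*}
\widehat{\text{HSIC}}_b\;\le\;(\sqrt{P}-\sqrt{M})^2\;+\;\sqrt{PM}\,\widehat{I}_{\text{CS}},
\end{equation*}
which shows that any decrease of $\widehat{I}_{\text{CS}}$ directly shrinks the cross-term in $\widehat{\text{HSIC}}_b$ at a rate proportional to the embedding magnitudes.

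The second step is to argue that the residual $(\sqrt{P}-\sqrt{M})^2$ is also controlled along the CS-QMI gradient flow. Because $P$ and $M$ enter $\widehat{I}_{\text{CS}}$ only through the symmetric combination $\log(PM)$, any infinitesimal update on $\theta$ that leaves $PM$ invariant leaves $\widehat{I}_{\text{CS}}$ invariant, while the update that minimizes $\widehat{I}_{\text{CS}}$ acts by pulling $R$ up toward $\sqrt{PM}$; both operations preserve the symmetry between $P$ and $M$. Under the boundedness assumption on Gaussian kernel mean embeddings used elsewhere in the paper, this keeps $P$ and $M$ on the same order of magnitude and therefore keeps the residual small. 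The main obstacle is that $(\sqrt{P}-\sqrt{M})^2$ does not identically vanish when $\widehat{I}_{\text{CS}}=0$: CS-QMI only forces $\mu(\mathbb{P}_{XT})\parallel\mu(\mathbb{P}_X\otimes\mathbb{P}_T)$, not exact equality. I would close this gap by restricting to characteristic kernels and noting that when the constant function lies in (or is approximated by) the RKHS, the probability-mass normalization $\langle\mu(\cdot),1\rangle=1$ pins the proportionality constant to $1$, forcing $\sqrt{P}=\sqrt{M}$ at the minimum; otherwise the displayed inequality already certifies the weaker but sufficient statement that minimizing CS-QMI cannot make HSIC grow and drives it down up to this symmetric residual.
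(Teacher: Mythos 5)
Your route is genuinely different from the paper's. The paper never compares the \emph{values} of the two estimators; it compares their \emph{gradients}: writing both $\widehat{D}_{\text{CS}}$ and $\widehat{\text{MMD}}^2$ in terms of $\|\boldsymbol{\mu}_p\|^2$, $\|\boldsymbol{\mu}_q\|^2$, $\langle\boldsymbol{\mu}_p,\boldsymbol{\mu}_q\rangle$, it observes that $\partial\widehat{D}_{\text{CS}}/\partial\mathbf{x}_j$ is obtained from $\partial\widehat{\text{MMD}}^2/\partial\mathbf{x}_j$ by positively rescaling each of the three terms, concludes that a gradient step on the CS objective decreases MMD, and supports this with a simulation. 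Your value-level inequality $\widehat{\text{HSIC}}_b \le (\sqrt{P}-\sqrt{M})^2 + c\sqrt{PM}\,\widehat{I}_{\text{CS}}$ is a legitimate alternative strategy and in some ways a more checkable claim, but it does not close.

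The genuine gap is the residual $(\sqrt{P}-\sqrt{M})^2$, and neither of your two attempts to control it works. First, the gradient-flow symmetry argument fails: precisely because $P$ and $M$ enter $\widehat{I}_{\text{CS}}$ only through $\log(PM)$, the CS objective is blind to redistributions that increase $P$ and decrease $M$ at fixed product, so minimizing $\widehat{I}_{\text{CS}}$ places no constraint whatsoever on $(\sqrt{P}-\sqrt{M})^2$; algebraically one can have $\widehat{I}_{\text{CS}}=0$ with $P=100$, $M=0.01$, $R=1$ and $\widehat{\text{HSIC}}_b\approx 98$. Second, the normalization argument fails for the kernels actually used: the constant function is not in the Gaussian RKHS, and the empirical embeddings $\mu(\mathbb{P}_{XT})$, $\mu(\mathbb{P}_X\otimes\mathbb{P}_T)$ are not normalized so that $\langle\mu(\cdot),1\rangle=1$; zero empirical CS-QMI only forces the two embeddings to be parallel, not equal in norm. (This is exactly why the paper's Proposition on adversarial robustness has to \emph{assume} $\|\mu(\mathbb{P}_{XT})\|=\|\mu(\mathbb{P}_X\otimes\mathbb{P}_T)\|$ as a hypothesis rather than derive it.) There is also a small constant slip: dividing $PM-R^2\le PM\,\widehat{I}_{\text{CS}}$ by $\sqrt{PM}+R\ge\sqrt{PM}$ yields $\sqrt{PM}-R\le\sqrt{PM}\,\widehat{I}_{\text{CS}}$, not $\tfrac{\sqrt{PM}}{2}\widehat{I}_{\text{CS}}$; the factor $\tfrac12$ would require $R\ge\sqrt{PM}$, which is the reverse of Cauchy--Schwarz. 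To salvage your approach you would either have to add the equal-norms hypothesis explicitly (under which $\widehat{I}_{\text{CS}}=g(\widehat{\text{HSIC}}_b)$ with $g$ monotone, as the paper already uses elsewhere), or switch to the paper's gradient-direction argument.
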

\begin{proof}
First observe that the functional relationship between the estimator of CS-QMI and HSIC is equivalent to the functional relationship between the estimator of the Cauchy--Schwarz divergence and squared MMD.
This is due to HSIC being defined as MMD between two specific distributions, and CS-QMI being defined as the CS-Divergence between the same two distributions. Without loss of generality, we base this analysis on the Cauchy-Schwarz divergence and squared MMD.

Given samples $\mathcal{X}^{(p)} = \{\mathbf{x}_i^p\}_{i=1}^m$, $\mathcal{X}^{(q)} = \{\mathbf{x}_i^q\}_{i=1}^n$ and $\mathcal{X} = \mathcal{X}^{(p)} \cup \mathcal{X}^{(q)} = \{\mathbf{x}_j\}_{j=1}^{m + n}$, we have:
\begin{equation}
\begin{split}
\widehat{D}_{\text{CS}} (p;q) & = \log\left( \|\boldsymbol{\mu}_p\|_\mathcal{H}^2 \right) + \log\left(\|\boldsymbol{\mu}_q\|_\mathcal{H}^2 \right) - 2 \log\left(\langle \boldsymbol{\mu}_p,\boldsymbol{\mu}_q \rangle_\mathcal{H} \right) \\
& = -2\log \left( \frac{\langle \boldsymbol{\mu}_p,\boldsymbol{\mu}_q \rangle_\mathcal{H}}{\|\boldsymbol{\mu}_p\|_\mathcal{H} \|\boldsymbol{\mu}_q\|_\mathcal{H} } \right) = -2\log \cos(\boldsymbol{\mu}_p,\boldsymbol{\mu}_q),
\end{split}
\end{equation}
and
\begin{equation}
\widehat{\text{MMD}}^2(p;q) = \| \boldsymbol{\mu}_p - \boldsymbol{\mu}_q \|_\mathcal{H}^2 = \|\boldsymbol{\mu}_p\|_\mathcal{H}^2 + \|\boldsymbol{\mu}_q\|_\mathcal{H}^2 - 2 \langle \boldsymbol{\mu}_p,\boldsymbol{\mu}_q \rangle_\mathcal{H},
\end{equation}
in which $\boldsymbol{\mu}_p = \frac{1}{m}\sum_{i=1}^m \phi(\mathbf{x}_i^p)$ and $\boldsymbol{\mu}_q = \frac{1}{n}\sum_{i=1}^n \phi(\mathbf{x}_i^q)$ refer to, respectively, the (empirical) mean embeddings of $\mathcal{X}^{(p)}$ and $\mathcal{X}^{(q)}$ in $\mathcal{H}$.

Therefore, we have:
\begin{equation}
    \frac{\partial{\widehat{D}_{\text{CS}}}}{\partial{\mathbf{x}_j}}
    = \frac{1}{\|\boldsymbol{\mu}_p\|_\mathcal{H}^2} \frac{\partial{\|\boldsymbol{\mu}_p\|_\mathcal{H}^2}}{\partial{\mathbf{x}_j}} + \frac{1}{\|\boldsymbol{\mu}_q\|_\mathcal{H}^2} \frac{\partial{\|\boldsymbol{\mu}_q\|_\mathcal{H}^2}}{\partial{\mathbf{x}_j}} - \frac{2}{\langle \boldsymbol{\mu}_p,\boldsymbol{\mu}_q \rangle_\mathcal{H}} \frac{\partial{\langle \boldsymbol{\mu}_p,\boldsymbol{\mu}_q \rangle_\mathcal{H}}}{\partial{\mathbf{x}_j}},
\end{equation}
and
\begin{equation}
        \frac{\partial{\widehat{\text{MMD}}^2}}{\partial{\mathbf{x}_j}}
    = \frac{\partial{\|\boldsymbol{\mu}_p\|_\mathcal{H}^2}}{\partial{\mathbf{x}_j}} + \frac{\partial{\|\boldsymbol{\mu}_q\|_\mathcal{H}^2}}{\partial{\mathbf{x}_j}} - \frac{2\partial{\langle \boldsymbol{\mu}_p,\boldsymbol{\mu}_q \rangle_\mathcal{H}}}{\partial{\mathbf{x}_j}}.
\end{equation}

Thus, for each term in $\frac{\partial{\widehat{\text{MMD}}^2}}{\partial{\mathbf{x}_j}}$, the ``$\log$" on CS just scales the gradient contribution, but does not change the direction. That is, taking one step with gradient descent of $\widehat{D}_{\text{CS}}$ will decrease $\widehat{\text{MMD}}^2$.

To corroborate our analysis, we implement a CS Divergence optimization simulation. The setup is as follows: initializing two datasets, one fixed (denote by $\mathbf{X}_1$), and one for optimization (denoted by $\mathbf{X}_2$). The goal is to minimize the CS divergence between the probability density function (PDF) of $\mathbf{X}_2$ and the PDF of $\mathbf{X}_1$ by performing updates on $\mathbf{X}_2$ using gradient descent.

$\mathbf{X}_1$ was initialized as a $2d$, from a mixture of two Gaussian distributions, $\emph{i.e.,}$ $\mathcal{N}_1(\mu_1, \Sigma_1)$ and $\mathcal{N}_2(\mu_2, \Sigma_2)$, where $\mu_1=[-4,-4]^\top$, $\mu_2=[4,4]^\top$, and $\Sigma_1=\Sigma_2 = \textmd{diag}(\sigma^2)$, where $\sigma=1$. The number of samples $N_1 = 400$.

$\mathbf{X}_2$ was initialized as a $2d$, from a single Gaussian distribution,  $\emph{i.e.,}$ $\mathcal{N}(\mu, \Sigma)$, where $\mu=[0,0]^\top$, and $\Sigma=\textmd{diag}(\sigma^2)$, where $\sigma=1$. The number of samples $N_2 = 200$. The kernel width was set to $1$, and the learning rate was set to $10$.


We compute MMD for each step in the optimization. The results as shown in Fig.~\ref{fig:lemma_support} are in line with our Lemma, as MMD is decreasing while performing gradient descent on $D_\text{CS}$.

\begin{figure} [htbp]
\centering
     \begin{subfigure}[b]{0.45\textwidth}
         \centering
         \includegraphics[width=\textwidth]{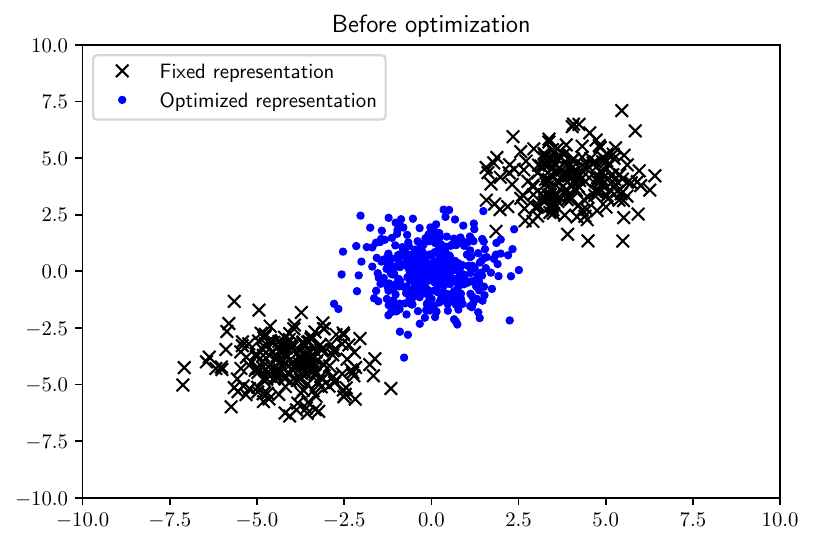}
         \caption{$\mathbf{X}_1$ and $\mathbf{X}_2$ before optimization.}
     \end{subfigure}
     \begin{subfigure}[b]{0.45\textwidth}
         \centering
         \includegraphics[width=\textwidth]{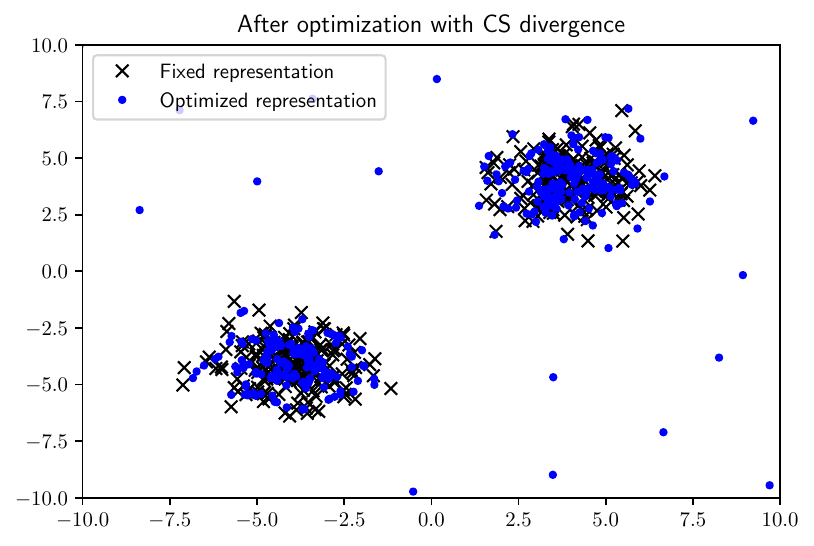}
         \caption{$\mathbf{X}_1$ and $\mathbf{X}_2$ after optimization.}
     \end{subfigure} \\
     \begin{subfigure}[b]{0.45\textwidth}
         \centering
         \includegraphics[width=\textwidth]{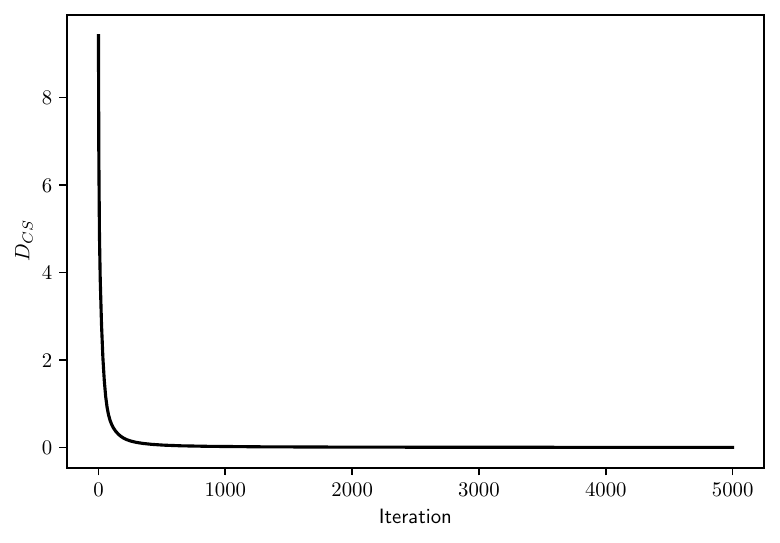}
         \caption{$D_{\text{CS}}$ in each step in the optimization.}
     \end{subfigure}
     \begin{subfigure}[b]{0.45\textwidth}
         \centering
         \includegraphics[width=\textwidth]{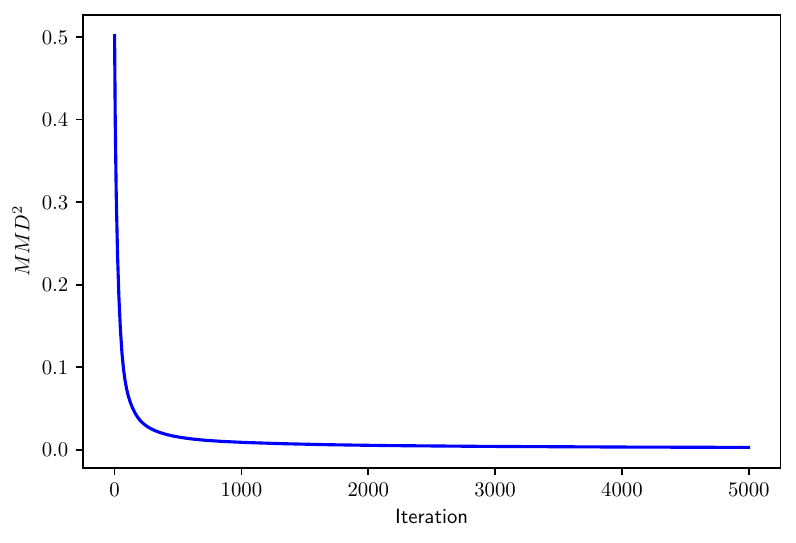}
         \caption{MMD in each step in the optimization.}
     \end{subfigure}
     \caption{Lemma~\ref{lemma_CS_QMI_HSIC} supporting simulation.}
     \label{fig:lemma_support}
\end{figure}

\end{proof}

\subsection{Proof to Theorem 1} \label{sec:proof_theorem1}

\addtocounter{theorem}{-1}

\begin{theorem}\label{theorem_appendix}
For two arbitrary $d$-variate Gaussian distributions $p\sim \mathcal{N}(\mu_1,\Sigma_1)$ and $q\sim \mathcal{N}(\mu_2,\Sigma_2)$,
\begin{equation}
D_{\text{CS}}(p;q) \leq \min \{ D_{\text{KL}}(p;q), D_{\text{KL}}(q;p)\}.
\end{equation}
\end{theorem}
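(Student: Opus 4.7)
The plan is to establish the inequality by computing both sides in closed form, reducing the multivariate problem to a scalar one via simultaneous diagonalization, and then handling the scalar inequality with calculus.

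First, I would derive an explicit expression for $D_{\text{CS}}(p;q)$ between two Gaussians. Applying the product identity $\int \mathcal{N}(x;\mu_1,\Sigma_1)\mathcal{N}(x;\mu_2,\Sigma_2)\,dx = \mathcal{N}(\mu_1;\mu_2,\Sigma_1+\Sigma_2)$ to the numerator of Eq.~(\ref{eq:CS_divergence}), and using $\int \mathcal{N}(x;\mu,\Sigma)^2\,dx = (4\pi)^{-d/2}|\Sigma|^{-1/2}$ in the denominators, yields
\begin{equation*}
D_{\text{CS}}(p;q) = \log\frac{\bigl|(\Sigma_1+\Sigma_2)/2\bigr|}{|\Sigma_1|^{1/2}|\Sigma_2|^{1/2}} + (\mu_1-\mu_2)^\top(\Sigma_1+\Sigma_2)^{-1}(\mu_1-\mu_2),
\end{equation*}
which can then be compared term-by-term against the standard closed form of $D_{\text{KL}}(p;q)$.

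Next, I would apply the affine change of variables $y = U^\top \Sigma_1^{-1/2}(x-\mu_1)$, where $U$ diagonalizes $\Sigma_1^{-1/2}\Sigma_2\Sigma_1^{-1/2} = U\Lambda U^\top$. Since both divergences are invariant under invertible linear maps, this reduces the setting without loss of generality to $\Sigma_1 = I$, $\Sigma_2 = \Lambda = \mathrm{diag}(\lambda_1,\ldots,\lambda_d)$, $\mu_1 = 0$, and $\mu_2 = \delta$. The closed forms now split additively across coordinates, so it suffices to establish the inequality in the scalar case as a statement in $\lambda > 0$ and $\delta \in \mathbb{R}$.

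For the scalar step, I would set $f(\lambda,\delta) := D_{\text{KL}}(p;q) - D_{\text{CS}}(p;q)$ and show $f \geq 0$. A direct expansion gives $f(\lambda,\delta) = \bigl[\log((1+\lambda)/2) + (\lambda - 1 - \log\lambda)/2\bigr] + \delta^2\bigl[1/(2\lambda) - 1/(1+\lambda)\bigr]$, and $f(1,0) = 0$ because $p = q$ saturates both divergences, so the strategy is to identify $(1,0)$ as a global minimizer. I would analyze the covariance-only piece and the $\delta^2$ coefficient separately via their first- and second-order derivatives in $\lambda$. The corresponding bound for $D_{\text{KL}}(q;p)$ then follows by swapping $p \leftrightarrow q$ (which interchanges $\Sigma_1$ and $\Sigma_2$ in KL while leaving the symmetric $D_{\text{CS}}$ unchanged).

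The main obstacle will be Step 3. The mean-dependent coefficient $1/(2\lambda)$ in $D_{\text{KL}}(p;q)$ exceeds $1/(1+\lambda)$ only when $\lambda < 1$, so for $\lambda > 1$ the $\delta^2$ contribution to $D_{\text{CS}}$ is actually larger and must be absorbed by the covariance-only gap; the roles reverse for the $D_{\text{KL}}(q;p)$ bound when $\lambda < 1$. Verifying that the covariance-only gap is uniformly large enough to dominate the (potentially unbounded in $\delta$) mean-term deficit, and doing so simultaneously for both KL directions, is precisely the delicate calculation I expect to carry the real content of the proof.
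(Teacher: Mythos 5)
Your overall reduction---closed forms for both divergences, simultaneous diagonalization to $\Sigma_1=I$, $\Sigma_2=\Lambda$, $\mu_1=0$, $\mu_2=\delta$, and a per-coordinate scalar analysis---is legitimate and in principle cleaner than the paper's argument, which never diagonalizes: the paper compares the mean terms directly via the operator ordering $(\Sigma_1+\Sigma_2)^{-1}\preceq\Sigma_2^{-1}$ and the covariance terms via an AM--GM bound on the eigenvalues of $\Sigma_2^{-1}\Sigma_1$, showing each piece is \emph{separately} dominated. The fatal problem is that the ``delicate calculation'' you defer in Step 3 is not delicate but impossible as you have set things up. With the normalization of $D_{\text{CS}}$ you use (the one in Eq.~(\ref{eq:CS_divergence}), which your displayed closed form correctly reproduces), the coefficient of $\delta^2$ in $D_{\text{KL}}(p;q)-D_{\text{CS}}(p;q)$ is $\tfrac{1}{2\lambda}-\tfrac{1}{1+\lambda}$, strictly negative for every $\lambda>1$, while the covariance-only gap is a bounded function of $\lambda$ alone; hence for fixed $\lambda>1$ the difference tends to $-\infty$ as $\delta\to\infty$ and no absorption argument can succeed. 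Concretely, for $p=\mathcal{N}(0,1)$ and $q=\mathcal{N}(10,4)$ your closed form gives $D_{\text{CS}}(p;q)=\log(5/4)+100/5\approx 20.22$ while $D_{\text{KL}}(p;q)\approx 12.82$, so the inequality you are trying to prove is false for this normalization.

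The paper escapes this because its appendix works with the Kampa et al.\ closed form, which equals \emph{one half} of the main-text $D_{\text{CS}}$ (acknowledged only in a footnote there). With that factor the mean term of $D_{\text{CS}}$ becomes $\tfrac12(\mu_2-\mu_1)^T(\Sigma_1+\Sigma_2)^{-1}(\mu_2-\mu_1)$, which is dominated by the KL mean term $\tfrac12(\mu_2-\mu_1)^T\Sigma_2^{-1}(\mu_2-\mu_1)$ for \emph{all} covariance pairs, so the mean and covariance contributions decouple completely and only the bounded covariance-only gap remains to be checked. If you adopt the halved normalization, your diagonalization plan goes through and turns the remaining covariance-only step into a one-variable calculus exercise; but you must also fix your displayed expansion of $f(\lambda,\delta)$, whose covariance-only part should read $\tfrac{1}{2\lambda}-\tfrac12+\log\lambda-\log\tfrac{1+\lambda}{2}$ (before rescaling) rather than $\log\tfrac{1+\lambda}{2}+\tfrac{\lambda-1-\log\lambda}{2}$; the expression you wrote is already negative at $\delta=0$, $\lambda=0.9$, so analyzing it would lead you astray even in the covariance-only case.
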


\begin{proof}
Given two $d$-dimensional Gaussian distributions $p\sim \mathcal{N}(\mu_1,\Sigma_1)$ and $q\sim \mathcal{N}(\mu_2,\Sigma_2)$, the KL divergence for $p$ and $q$ is given by:
\begin{equation}
\begin{split}
& D_{\text{KL}}(p; q) = \frac{1}{2}\left( \tr(\Sigma_2^{-1}\Sigma_1) - d + (\mu_2 - \mu_1)^T \Sigma_2^{-1} (\mu_2 - \mu_1) + \ln \left( \frac{|\Sigma_2|}{|\Sigma_1|} \right) \right),
\end{split}
\end{equation}
in which $|\cdot|$ is the determinant of matrix.

The CS divergence for $p$ and $q$ is given by~\citep{kampa2011closed}\footnote{Note that, we apply a constant $1/2$ on the definition in the main manuscript.}:
\begin{equation}
\begin{split}
& D_{\text{CS}}(p; q) = -\log(z_{12}) + \frac{1}{2}\log(z_{11}) + \frac{1}{2}\log(z_{22}),
\end{split}
\end{equation}
where
\begin{equation}
\begin{split}
& z_{12} = \mathcal{N}(\mu_1|\mu_2, (\Sigma_1 + \Sigma_2)) = \frac{\exp(-\frac{1}{2}(\mu_1 - \mu_2)^T)(\Sigma_1 + \Sigma_2)^{-1}(\mu_1 - \mu_2)}{\sqrt{(2\pi)^d|\Sigma_1 + \Sigma_2|}} \\
& z_{11} = \frac{1}{\sqrt{(2\pi)^d |2\Sigma_1|}} \\
& z_{22} = \frac{1}{\sqrt{(2\pi)^d |2\Sigma_2|}}.
\end{split}
\end{equation}
Therefore,
\begin{equation}
\begin{split}
D_{\text{CS}}(p; q) & =  \frac{1}{2}(\mu_2 - \mu_1)^T(\Sigma_1 + \Sigma_2)^{-1}(\mu_2 - \mu_1) + \log (\sqrt{(2\pi)^d |\Sigma_1 + \Sigma_2|}) - \\
& \frac{1}{2}\log (\sqrt{(2\pi)^d |2 \Sigma_1|}) - \frac{1}{2}\log (\sqrt{(2\pi)^d |2 \Sigma_2|}) \\
& = \frac{1}{2}(\mu_2 - \mu_1)^T(\Sigma_1 + \Sigma_2)^{-1}(\mu_2 - \mu_1) + \frac{1}{2}\ln \left( \frac{|\Sigma_1 + \Sigma_2|}{2^d\sqrt{|\Sigma_1||\Sigma_2|}} \right).
\end{split}
\end{equation}

We first consider the difference between $D_{\text{CS}}(p; q)$ and $D_{\text{KL}}(p; q)$ results from mean vector discrepancy, i.e., $\mu_1-\mu_2$.

\begin{lemma}\citep{horn2012matrix}\label{lemma_matrix}
    For any two positive semi-definite Hermitian matrices $A$ and $B$ of size $n\times n$, then $A-B$ is positive semi-definite if and only if $B^{-1}-A^{-1}$ is also positive semi-definite.
\end{lemma}

Applying Lemma~\ref{lemma_matrix} to $(\Sigma_1+\Sigma_2) - \Sigma_2$ (which is positive semi-definite), we obtain that $\Sigma_2^{-1} - (\Sigma_1+\Sigma_2)^{-1}$ is also positive semi-definite, from which we obtain:
\begin{equation}\label{eq:mean_difference}
\begin{split}
    & (\mu_2 - \mu_1)^T \Sigma_2^{-1} (\mu_2 - \mu_1) - (\mu_2 - \mu_1)^T (\Sigma_1+\Sigma_2)^{-1} (\mu_2 - \mu_1) \\
    & = (\mu_2 - \mu_1)^T \left[\Sigma_2^{-1} - (\Sigma_1+\Sigma_2)^{-1} \right] (\mu_2 - \mu_1) \geq 0.
\end{split}
\end{equation}

We then consider the difference between $D_{\text{CS}}(p; q)$ and $D_{\text{KL}}(p; q)$ results from covariance matrix discrepancy, i.e., $\Sigma_1-\Sigma_2$.

We have,
\begin{equation}\label{eq:cov_diff1}
\begin{split}
2(D_{\text{CS}}(p; q) - D_{\text{KL}}(p; q))_{\mu_1=\mu_2} & = \log \left( \frac{|\Sigma_1 + \Sigma_2|}{2^d\sqrt{|\Sigma_1||\Sigma_2|}} \right) - \log \left( \frac{|\Sigma_2|}{|\Sigma_1|} \right) - \tr(\Sigma_2^{-1}\Sigma_1)+d.
\\
& = -d\log 2 + \log\left( |\Sigma_1 +\Sigma_2| \right) -\frac{1}{2} (\log |\Sigma_1| + \log|\Sigma_2|) \\
& - \log|\Sigma_2| + \log|\Sigma_1| - \tr(\Sigma_2^{-1}\Sigma_1) + d \\
& = -d\log 2 + \log\left( \frac{|\Sigma_1 +\Sigma_2|}{|\Sigma_2|} \right) + \frac{1}{2}\log\left( \frac{|\Sigma_1|}{|\Sigma_2|} \right) - \tr(\Sigma_2^{-1}\Sigma_1) + d \\
& = -d\log 2 + \log\left( |\Sigma_2^{-1}\Sigma_1 + I| \right) + \frac{1}{2}\log\left( |\Sigma_2^{-1}\Sigma_1| \right) - \tr(\Sigma_2^{-1}\Sigma_1) + d
\end{split}
\end{equation}

Let $\{\lambda_i\}_{i=1}^d$ denote the eigenvalues of $\Sigma_2^{-1}\Sigma_1$, which are non-negative, since $\Sigma_2^{-1}\Sigma_1$ is also positive semi-definite.

We have:
\begin{equation}\label{eq:cov_diff2}
|\Sigma_2^{-1}\Sigma_1| = \left[ \left( \prod_{i=1}^d \lambda_i \right)^{1/d} \right]^d \leq \left[ \frac{1}{d}\sum_{i=1}^d \lambda_i \right]^d = \left( \frac{1}{d} \tr(\Sigma_2^{-1}\Sigma_1) \right)^d,
\end{equation}
in which we use the property that geometric mean is no greater than the arithmetic mean.

Similarly, we have:
\begin{equation}\label{eq:cov_diff3}
\begin{split}
    |\Sigma_2^{-1}\Sigma_1 + I| = \prod_{i=1}^d (1+\lambda_i) \leq \left[ \frac{1}{d}\sum_{i=1}^d (1+\lambda_i) \right]^d = \left( 1+\frac{1}{d}\tr(\Sigma_2^{-1}\Sigma_1) \right)^d.
\end{split}
\end{equation}

By plugging Eqs.~(\ref{eq:cov_diff2}) and (\ref{eq:cov_diff3}) into Eq.~(\ref{eq:cov_diff1}), we obtain:
\begin{equation}
\begin{split}
2(D_{\text{CS}}(p; q) - D_{\text{KL}}(p; q))_{\mu_1=\mu_2} & = -d\log 2 + \log\left( |\Sigma_2^{-1}\Sigma_1 + I| \right) + \frac{1}{2}\log\left( |\Sigma_2^{-1}\Sigma_1| \right) - \tr(\Sigma_2^{-1}\Sigma_1) + d \\
& \leq -d\log 2 + d\log (1+ \frac{1}{d}\tr(\Sigma_2^{-1}\Sigma_1) ) + \frac{d}{2} \log (\frac{1}{d}\tr(\Sigma_2^{-1}\Sigma_1)) - \tr(\Sigma_2^{-1}\Sigma_1) + d.
\end{split}
\end{equation}

Let us denote $x=\tr(\Sigma_2^{-1}\Sigma_1) = \sum_{i=1}^d \lambda_i \geq 0$, then
\begin{equation}
    2(D_{\text{CS}}(p; q) - D_{\text{KL}}(p; q))_{\mu_1=\mu_2} = f(x)  = -d\log2 + d\log(1+\frac{x}{d}) + \frac{d}{2}\log(\frac{x}{d}) - x + d.
\end{equation}

Let $f'(x)=0$, we have $x=d$. Since $f''(x=d)<0$, we have,
\begin{equation}\label{eq:cov_difference}
    2(D_{\text{CS}}(p; q) - D_{\text{KL}}(p; q))_{\mu_1=\mu_2} = f(x) \leq f(x=d)=0.
\end{equation}

Combining Eq.~(\ref{eq:mean_difference}) with Eq.~(\ref{eq:cov_difference}), we obtain:
\begin{equation} \label{eq:CSforwardKL}
    D_{\text{CS}}(p; q) - D_{\text{KL}}(p; q) \leq 0.
\end{equation}

The above analysis also applies to $D_{\text{KL}}(q; p)$.

Specifically, we have:

\begin{equation}
2(D_{\text{KL}}(q; p) - D_{\text{CS}}(p; q) )_{\Sigma_1=\Sigma_2} =
    (\mu_2 - \mu_1)^T \left[\Sigma_1^{-1} - (\Sigma_1+\Sigma_2)^{-1} \right] (\mu_2 - \mu_1) \geq 0,
\end{equation}

\begin{equation}
\begin{split}
2(D_{\text{CS}}(p; q) - D_{\text{KL}}(q; p))_{\mu_1=\mu_2} & = \log \left( \frac{|\Sigma_1 + \Sigma_2|}{2^d\sqrt{|\Sigma_1||\Sigma_2|}} \right) - \log \left( \frac{|\Sigma_1|}{|\Sigma_2|} \right) - \tr(\Sigma_1^{-1}\Sigma_2)+d.
\\
& = -d\log 2 + \log\left( |\Sigma_1 +\Sigma_2| \right) -\frac{1}{2} (\log |\Sigma_1| + \log|\Sigma_2|) \\
& - \log|\Sigma_1| + \log|\Sigma_2| - \tr(\Sigma_1^{-1}\Sigma_2) + d \\
& = -d\log 2 + \log\left( \frac{|\Sigma_1 +\Sigma_2|}{|\Sigma_1|} \right) + \frac{1}{2}\log\left( \frac{|\Sigma_2|}{|\Sigma_1|} \right) - \tr(\Sigma_1^{-1}\Sigma_2) + d \\
& = -d\log 2 + \log\left( |\Sigma_1^{-1}\Sigma_2 + I| \right) + \frac{1}{2}\log\left( |\Sigma_1^{-1}\Sigma_2| \right) - \tr(\Sigma_1^{-1}\Sigma_2) + d \\
& \leq -d\log 2 + d\log (1+ \frac{1}{d}\tr(\Sigma_1^{-1}\Sigma_2) ) + \frac{d}{2} \log (\frac{1}{d}\tr(\Sigma_1^{-1}\Sigma_2)) - \tr(\Sigma_1^{-1}\Sigma_2) + d \\
& \leq 0.
\end{split}
\end{equation}

Hence,
\begin{equation} \label{eq:CSreverseKL}
    D_{\text{CS}}(p; q) - D_{\text{KL}}(q; p) \leq 0.
\end{equation}

Combining Eq.~(\ref{eq:CSforwardKL}) and Eq.~(\ref{eq:CSreverseKL}), we obtain:
\begin{equation}
    D_{\text{CS}}(p;q) \leq \min \{ D_{\text{KL}}(p;q), D_{\text{KL}}(q;p)\}.
\end{equation}

\end{proof}

\subsection{Proof to Corollary 1} \label{sec:proof_corollary1}
\addtocounter{corollary}{-1}
\begin{corollary}
For two random vectors $\mathbf{x}$ and $\mathbf{t}$ which follow a joint Gaussian distribution $\mathcal{N}\left(\begin{pmatrix}
\mu_x\\
\mu_t
\end{pmatrix},
\begin{pmatrix}
\Sigma_x & \Sigma_{xt} \\
\Sigma_{tx} & \Sigma_t
\end{pmatrix}\right)$, the CS-QMI is no greater than the Shannon's mutual information:
\begin{equation}
I_{\text{CS}}(\mathbf{x};\mathbf{t}) \leq I(\mathbf{x};\mathbf{t}).
\end{equation}
\end{corollary}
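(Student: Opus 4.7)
The plan is to apply Theorem~\ref{theorem_appendix} directly, since both distributions involved in defining $I_{\text{CS}}(\mathbf{x};\mathbf{t})$ and $I(\mathbf{x};\mathbf{t})$ are multivariate Gaussian under the stated assumption. Specifically, I would start by noting that $I(\mathbf{x};\mathbf{t}) = D_{\text{KL}}(p(\mathbf{x},\mathbf{t}); p(\mathbf{x})p(\mathbf{t}))$ and $I_{\text{CS}}(\mathbf{x};\mathbf{t}) = D_{\text{CS}}(p(\mathbf{x},\mathbf{t}); p(\mathbf{x})p(\mathbf{t}))$, so the claim reduces to a comparison of these two divergences on the pair $(p, q)$ with $p = p(\mathbf{x},\mathbf{t})$ and $q = p(\mathbf{x}) p(\mathbf{t})$.

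Next, I would verify that both $p$ and $q$ are Gaussian of the same ambient dimension $d = d_x + d_t$. The joint $p$ is Gaussian by hypothesis with mean $(\mu_x, \mu_t)^\top$ and covariance $\Sigma = \begin{pmatrix}\Sigma_x & \Sigma_{xt}\\ \Sigma_{tx} & \Sigma_t\end{pmatrix}$. The marginals $p(\mathbf{x})$ and $p(\mathbf{t})$ are Gaussian (marginals of a joint Gaussian), and hence the product $q = p(\mathbf{x})p(\mathbf{t})$ is Gaussian with the same mean and block-diagonal covariance $\tilde{\Sigma} = \begin{pmatrix}\Sigma_x & 0 \\ 0 & \Sigma_t\end{pmatrix}$. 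This confirms the hypothesis of Theorem~\ref{theorem_appendix} is satisfied.

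Finally, invoking Theorem~\ref{theorem_appendix} yields
\begin{equation}
D_{\text{CS}}(p; q) \leq \min\{D_{\text{KL}}(p; q), D_{\text{KL}}(q; p)\} \leq D_{\text{KL}}(p; q),
\end{equation}
which rewrites as $I_{\text{CS}}(\mathbf{x}; \mathbf{t}) \leq I(\mathbf{x}; \mathbf{t})$, completing the argument. There is no real obstacle here since all the work has been done in the preceding theorem; the only thing to be careful about is confirming that ``joint Gaussian'' indeed makes both distributions in the divergence pair Gaussian of matching dimension, which is the reason Theorem~\ref{theorem_appendix} applies verbatim.
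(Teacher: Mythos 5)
Your proposal is correct and follows essentially the same route as the paper: identify $I_{\text{CS}}(\mathbf{x};\mathbf{t})$ and $I(\mathbf{x};\mathbf{t})$ as the CS and KL divergences between the joint Gaussian $p(\mathbf{x},\mathbf{t})$ and the block-diagonal Gaussian $p(\mathbf{x})p(\mathbf{t})$, then apply Theorem~\ref{theorem} verbatim. The explicit check that both distributions are Gaussians of matching dimension is a nice touch but adds nothing beyond what the paper's proof already implies.
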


\begin{proof}
\begin{equation}
    I_{\text{CS}}(\mathbf{x};\mathbf{t}) = D_{\text{CS}}(p(\mathbf{x},\mathbf{t});p(\mathbf{x})p(\mathbf{t})),
\end{equation}

\begin{equation}
    I_{\text{KL}}(\mathbf{x};\mathbf{t}) = D_{\text{KL}}(p(\mathbf{x},\mathbf{t});p(\mathbf{x})p(\mathbf{t})),
\end{equation}

When $\mathbf{x}$ and $\mathbf{t}$ are jointly Gaussian,
\begin{equation}
   p(\mathbf{x},\mathbf{t}) \sim \mathcal{N}\left(\begin{pmatrix}
\mu_x\\
\mu_t
\end{pmatrix},
\begin{pmatrix}
\Sigma_x & \Sigma_{xt} \\
\Sigma_{tx} & \Sigma_t
\end{pmatrix}\right),
\end{equation}

\begin{equation}
   p(\mathbf{x})p(\mathbf{t}) \sim \mathcal{N}\left(\begin{pmatrix}
\mu_x\\
\mu_t
\end{pmatrix},
\begin{pmatrix}
\Sigma_x & 0 \\
0 & \Sigma_t
\end{pmatrix}\right),
\end{equation}

Applying Theorem~\ref{theorem_appendix} to $\mathcal{N}\left(\begin{pmatrix}
\mu_x\\
\mu_t
\end{pmatrix},
\begin{pmatrix}
\Sigma_x & \Sigma_{xt} \\
\Sigma_{tx} & \Sigma_t
\end{pmatrix}\right)$ and
$\mathcal{N}\left(\begin{pmatrix}
\mu_x\\
\mu_t
\end{pmatrix},
\begin{pmatrix}
\Sigma_x & 0 \\
0 & \Sigma_t
\end{pmatrix}\right)$, we obtain:
\begin{equation}
I_{\text{CS}}(\mathbf{x};\mathbf{t}) \leq I(\mathbf{x};\mathbf{t}).
\end{equation}

\end{proof}


\section{Properties and Analysis}\label{sec:properties}

\subsection{The Relationship between CS Divergence, KL Divergence and MMD}\label{sec:divergence_family}

\begin{figure}[htbp]
		\centering
		\includegraphics[width=0.95\linewidth]{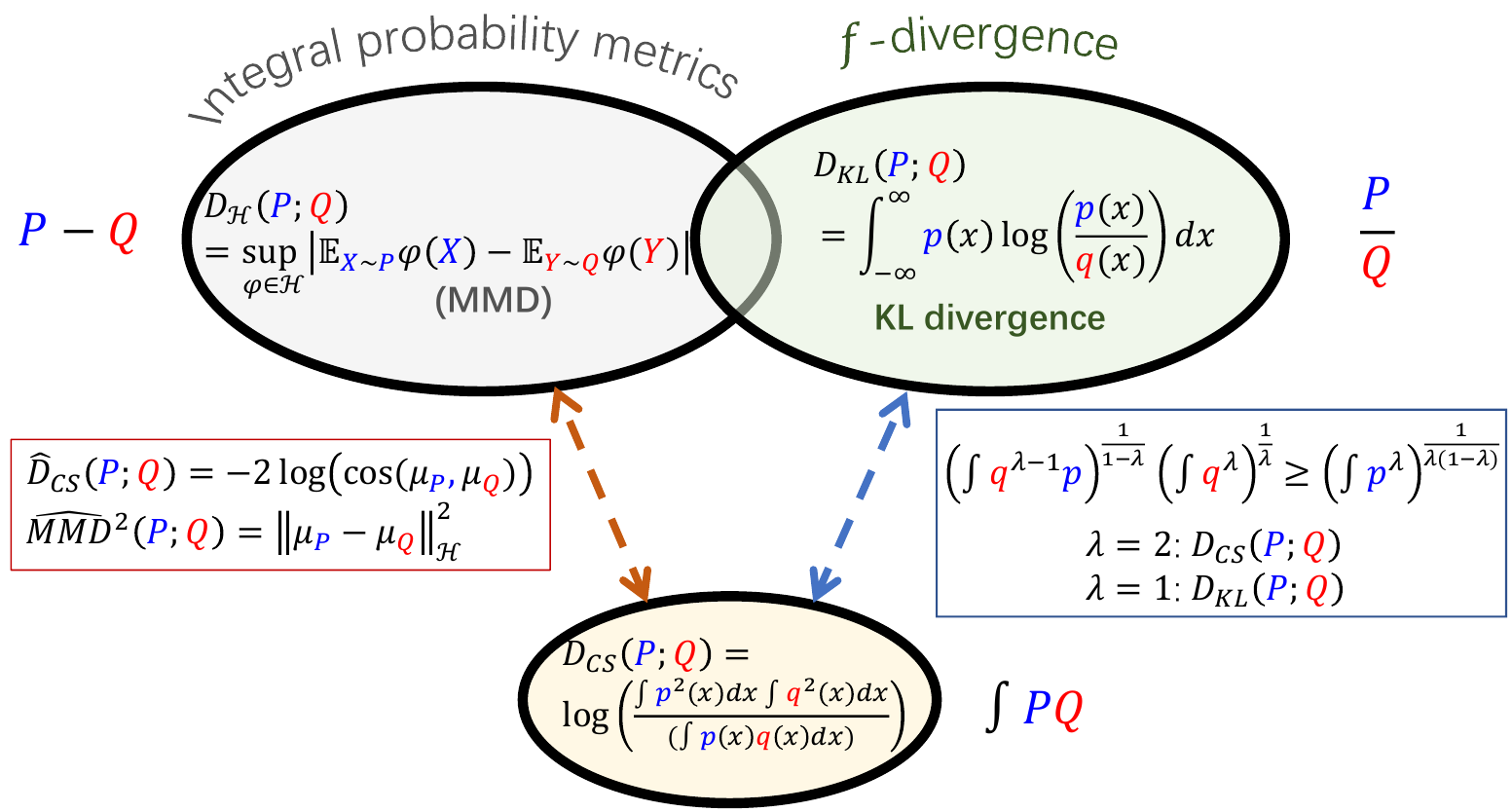}
	\caption{Connection between CS divergence with respect to MMD and KL divergence}
	\label{fig:divergence_relationship}
\end{figure}

There are two popular ways to define a valid divergence measure for two densities $p$ and $q$: the integral probability metrics (IPMs) aim to measure ``$p-q$"; whereas the R\'enyi's divergences with order $\alpha$ aim to model the ratio of $\frac{p}{q}$ (see Fig.~\ref{fig:divergence_relationship}).
The general idea of CS divergence is totally different: it neither horizontally computes $p-q$ nor vertically evaluates $\frac{p}{q}$. Rather, it quantifies the ``distance" of two distributions by quantify the tightness (or gap) of an inequality associated with the integral of densities. That is, it is hard to say the CS divergence belongs to either IRM or the traditional R\'enyi’s divergence family.

One of the inequalities one can consider here is the CS inequality:
\begin{equation}
    |\int p(x)q(x) dx| \leq \left( \int p^2(x)dx \right)^{1/2} \left( \int q^2(x)dx \right)^{1/2},
\end{equation}
which directly motivates our CS divergence.

In fact, the CS inequality can be generalized to $L_p$ space by making use of the H{\"o}lder inequality:
\begin{equation}
    |\int p(x)q(x) dx| \leq \left( \int p^a(x)dx \right)^{1/a} \left( \int q^b(x)dx \right)^{1/b} \quad \forall a,b>1 \quad \text{and} \quad \frac{1}{a}+\frac{1}{b}=1,
\end{equation}
from which we can obtain a so-called H{\"o}lder divergence:
\begin{equation}\label{eq:Holder}
    D_{H} = -\log\left(  \frac{|\int p(x)q(x) dx|}{ \left( \int p^a(x)dx \right)^{1/a} \left( \int q^b(x)dx \right)^{1/b} } \right),
\end{equation}
although $D_{H}$ introduces one more hyperparameter.

One immediate advantage to define divergence in such a way (i.e., Eq.~(\ref{eq:CS_divergence}) and Eq.~(\ref{eq:Holder})) is that, compared to the basic KL divergence (or majority of divergences by measuring the ratio of $\frac{p}{q}$), the CS divergence is much more stable in the sense that it relaxes the constraint on the distribution supports, which makes it hard to reach a value of infinity.


In fact, for any two square-integral densities $p$ and $q$, the CS divergence goes to infinity if and only if there is no overlap on the supports of $p$ and $q$, i.e., $\text{supp}(p) \cap \text{supp}(q) = \emptyset$, since $\log 0$ is undefined. For $D_{\text{KL}}(p;q)$, it has finite values only if $\text{supp}(p) \subseteq \text{supp}(q)$ (note that, $p(x)\log \left(\frac{p(x)}{0}\right) \rightarrow \infty$~\citep{cover1999elements}); whereas for $D_{\text{KL}}(q;p)$, it has finite values only if $\text{supp}(q) \subseteq \text{supp}(p)$. Please see Fig.~\ref{fig:KL_infinite} for an illustration.

\begin{figure}[htbp]
		\centering
		\includegraphics[width=0.95\linewidth]{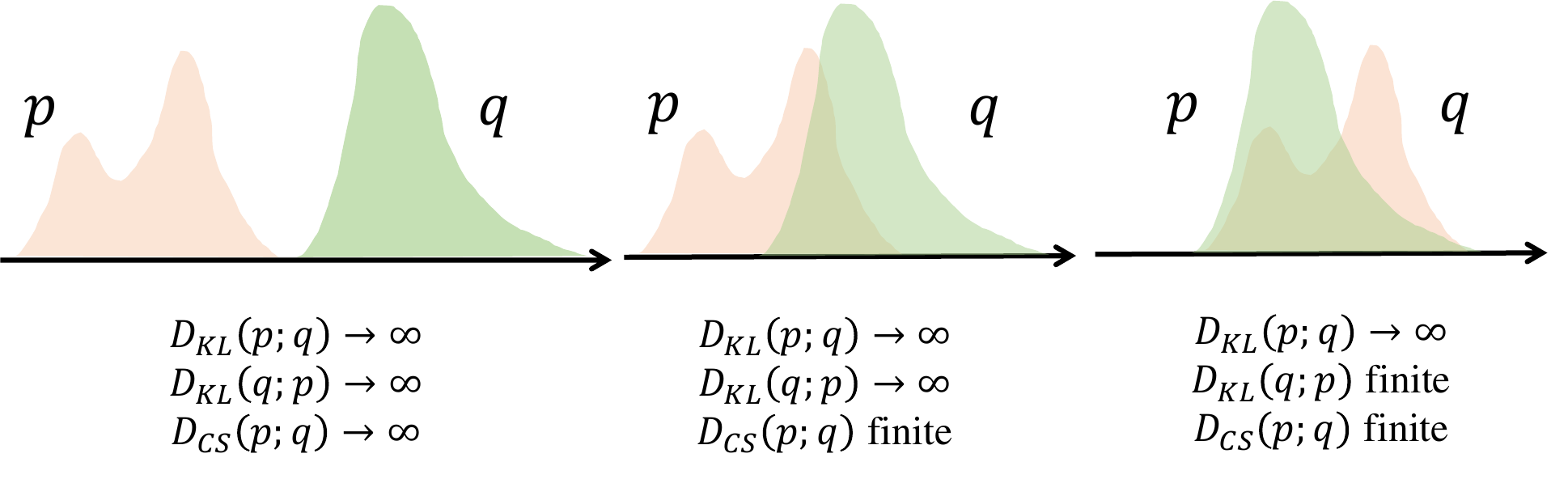}
	\caption{KL divergence is infinite even though there is an overlap between $\text{supp}(p)$ and $\text{supp}(q)$, but neither is a subset of the other. CS divergence does not has such support constraint.}
	\label{fig:KL_infinite}
\end{figure}

Finally, we would like to emphasize here that, although CS divergence does not belong to either IRMs and $f$-divergence family. It has close connections to both of these two. On the one hand, if we rely on another inequality associated with two densities:
\begin{equation}
    \left( \int q^{\lambda-1}p \right)^{\frac{1}{1-\lambda}} \left( \int q^\lambda \right)^{\frac{1}{\lambda}} \geq \left( \int p^{\lambda} \right)^{\frac{1}{\lambda(1-\lambda) }}.
\end{equation}

It can be shown that when $\lambda=2$, we obtain CS divergence; whereas when $\lambda=1$, we obtain KL divergence.

On the other hand, from Remark~\ref{remark_CS_MMD}, the empirical estimator of CS divergence measures the cosine similarity between (empirical) kernel mean embeddings of two densities; whereas MMD uses Euclidean distance.

\subsection{The Bias Analysis}\label{sec:bias}

We discuss in this section the bias induced by variational approximations in previous literature. Let the conditional probability $p_\theta(\mathbf{t}|\mathbf{x})$ denote a parameterized encoding map with parameter $\theta$. Also let the conditional probability $p_\phi(y|\mathbf{t})$ denote a parameterized decoding map with parameter $\phi$.

\subsubsection{The Upper Bound of $I(\mathbf{x};\mathbf{t})$}

$I_\theta(\mathbf{x};\mathbf{t})$ can be expressed as:
\begin{equation}
\begin{split}
    I_\theta(\mathbf{x};\mathbf{t}) & = \int p_\theta(\mathbf{x},\mathbf{t}) \log \left( \frac{p_\theta(\mathbf{t}|\mathbf{x})}{p_\theta(\mathbf{t})}
 \right) d\mathbf{x} d\mathbf{t} \\
    & \leq  \int p_\theta(\mathbf{x},\mathbf{t}) \log \left( \frac{p_\theta(\mathbf{t}|\mathbf{x})}{p_\theta(\mathbf{t})}
 \right) d\mathbf{x} d\mathbf{t} + \int p_\theta(\mathbf{t}) \log \left( \frac{p_\theta(\mathbf{t})}{r(\mathbf{t})}
 \right) d\mathbf{t} \\
 & = \int p_\theta(\mathbf{x},\mathbf{t}) \log \left( \frac{p_\theta(\mathbf{t}|\mathbf{x})}{r(\mathbf{t})}
 \right) d\mathbf{x} d\mathbf{t},
\end{split}
\end{equation}
in which $r(\mathbf{t})$ is a variational approximation to the marginal distribution $p_\theta(\mathbf{t}) = \int p(\mathbf{x})p_\theta(\mathbf{t}|\mathbf{x})d\mathbf{x}$.

As can be seen, the approximation error is exactly $D_{\text{KL}}(p_\theta(\mathbf{t}); r(\mathbf{t})) = \int p_\theta(\mathbf{t}) \log \left( \frac{p_\theta(\mathbf{t})}{r(\mathbf{t})}
 \right) d\mathbf{t} $.

In VIB~\citep{alemi2017deep} and lots of its downstream applications such as \citep{mahabadi2021variational}, $r(\mathbf{t})$ is set as simple as a Gaussian. However, by our visualization in Fig.~\ref{fig:tsne}, $p_\theta (\mathbf{t})$ deviates a lot from a Gaussian.

Regarding the upper bound of $I(\mathbf{x};\mathbf{t})$ used in other existing methods, such as NIB~\citep{kolchinsky2019nonlinear} and its extensions like exp-NIB~\citep{rodriguez2020convex}, our argument is that these methods may have similar bias than our non-parametric estimator on $I_{\text{CS}}(\mathbf{x};\mathbf{t})$. This is because their estimators can also be interpreted from a kernel density estimation (KDE) perspective.

In fact, in NIB and exp-IB, $I_\theta(\mathbf{x};\mathbf{t})$ is upper bounded by~\citep{kolchinsky2017estimating}:
\begin{equation}
I_\theta(\mathbf{x};\mathbf{t}) \leq -\frac{1}{N} \sum_{i=1}^N \log \frac{1}{N} \sum_{j=1}^N \exp\left( -D_{\text{KL}}(p(t|x_i);p(t|x_j)) \right),
\end{equation}
which, in practice, is estimated as~\citep{michael2018on}:
\begin{equation}\label{eq:89}
    I_\theta(\mathbf{x};\mathbf{t}) \leq -\frac{1}{N} \sum_{i=1}^N \log \frac{1}{N} \sum_{j=1}^N \exp\left( -\frac{\|\mathbf{h}_i-\mathbf{h}_j\|_2^2}{2\sigma^2} \right),
\end{equation}
in with $\mathbf{h}$ is the hidden activity corresponding to each input sample or the Gaussian center, i.e., $\mathbf{t}=\mathbf{h}+\epsilon$, and $\epsilon \sim \mathcal{N}(0,\sigma^2 I)$.

We argue that Eq.~(\ref{eq:89}) is closely related to the KDE on the Shannon entropy of $\mathbf{t}$. Note that, the Shannon’s differential entropy can be expressed as:
\begin{equation}
    H(\mathbf{t}) = -\int p(\mathbf{t}) \log p(\mathbf{t}) d\mathbf{t} = -\mathbb{E}_{p}(\log p) \approx -\frac{1}{N} \sum_{i=1}^N \log p(\mathbf{t}_i).
\end{equation}

By KDE with a Gaussian kernel of kernel size $\sigma$, we have:
\begin{equation}
    p(\mathbf{t}_i) = \frac{1}{N}\sum_{i=1}^N \frac{1}{\sqrt{2\pi}\sigma} \exp\left( -\frac{\|\mathbf{t}_i-\mathbf{t}_j\|_2^2}{2\sigma^2} \right).
\end{equation}

Hence,
\begin{equation}\label{eq:92}
    H(\mathbf{t}) = -\frac{1}{N} \sum_{i=1}^N \log \frac{1}{N} \sum_{j=1}^N \frac{1}{\sqrt{2\pi}\sigma} \exp\left( -\frac{\|\mathbf{t}_i-\mathbf{t}_j\|_2^2}{2\sigma^2} \right).
\end{equation}

Comparing Eq.~(\ref{eq:89}) with Eq.~(\ref{eq:92}), $\mathbf{h}$ is replaced with $\mathbf{t}=\mathbf{h}+\epsilon$, and there is a difference on the constant $\frac{1}{\sqrt{2\pi}\sigma}$.

From our perspective, using $\mathbf{t}$ rather than $\mathbf{h}$ seems to be a better choice. This is because,
\begin{equation}
\begin{split}
        I(\mathbf{x};\mathbf{t}) & = H(\mathbf{t}) - H(\mathbf{t}|\mathbf{x}) \\
        & = H(\mathbf{t}) - H(\epsilon) = H(\mathbf{t}) - d(\log\sqrt{2\pi}\sigma + \frac{1}{2}).
\end{split}
\end{equation}

Hence, if we keep $\sigma$ unchanged (during training) or take a value of $1/(\sqrt{2\pi}e)$, minimizing $H(\mathbf{t})$ amounts to minimize $I(\mathbf{x};\mathbf{t})$. The only bias comes from the way to estimate $H(\mathbf{t})$.

To summarize, previous methods either have much more obvious bias than ours (severe mismatch between $r(\mathbf{t})$ and $p_\theta(\mathbf{t})$) or have similar bias (due to the same KDE in essence). But some improper choices (e.g., using $\mathbf{h}$ or $\mathbf{t}$, or varying noise variance) may incur other bias.

\begin{figure}
\centering
     \begin{subfigure}[]{0.3\textwidth}
         \centering
         \includegraphics[width=\textwidth]{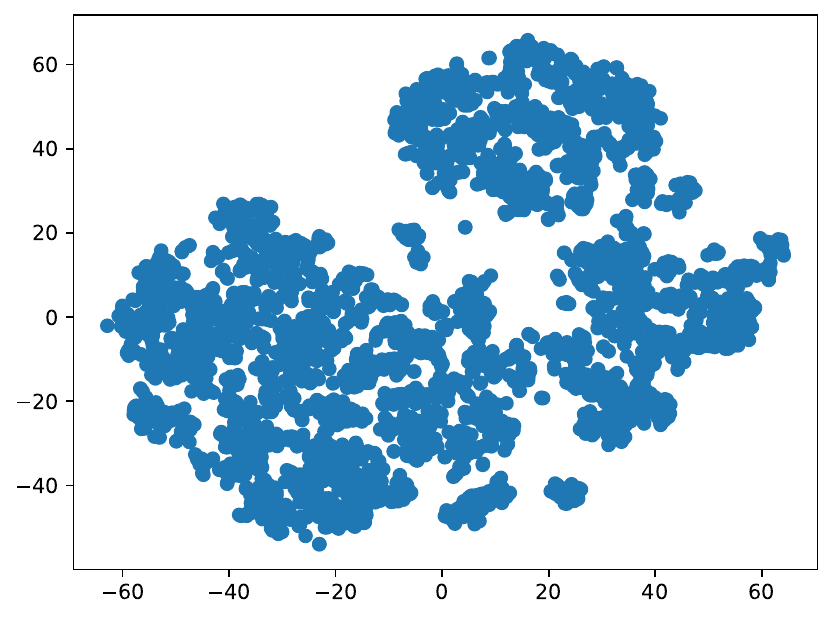}
         \caption{MSE}
     \end{subfigure}
     \hfill
     \begin{subfigure}[]{0.3\textwidth}
         \centering
         \includegraphics[width=\textwidth]{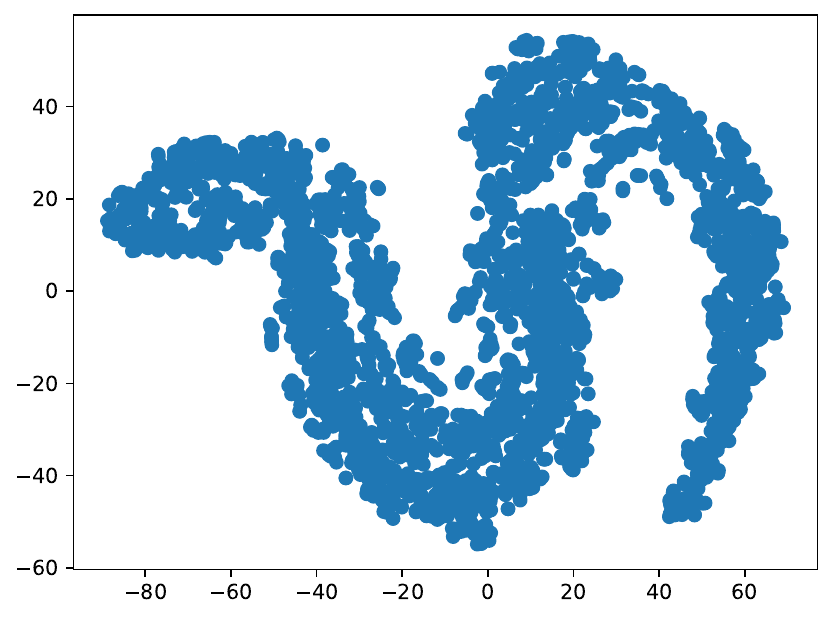}
        \caption{VIB}
     \end{subfigure}
     \hfill
     \begin{subfigure}[]{0.32\textwidth}
         \centering
         \includegraphics[width=\textwidth]{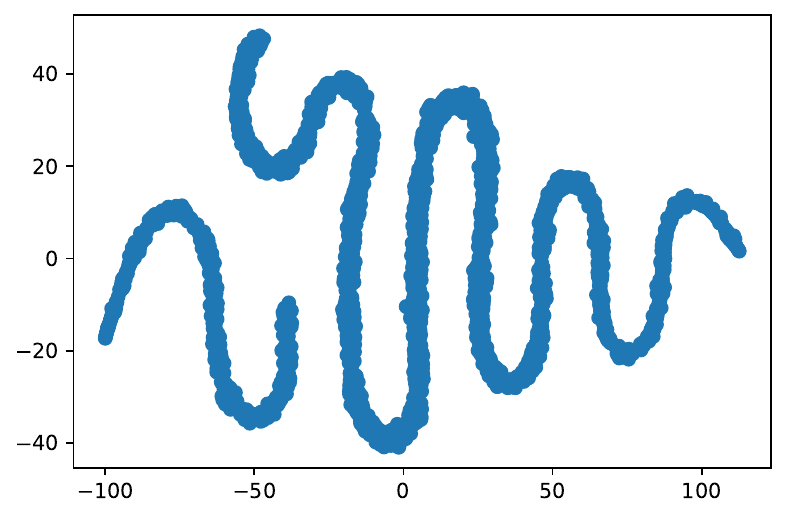}
        \caption{NIB}
     \end{subfigure}
        \caption{PCA projection of bottleneck layer activations for models trained with (a) MSE loss only, (b) VIB, and (c) NIB on California housing.}
        \label{fig:tsne}
\end{figure}

\subsubsection{The Lower Bound of $I(y;\mathbf{t})$}

$I_\theta(y;\mathbf{t})$ can be expressed as:
\begin{equation}
\begin{split}
    I_\theta(y;\mathbf{t}) & = H(p(y)) - H(p_\theta(y|\mathbf{t})) \\
    & \geq H(p(y)) - H(p_\theta(y|\mathbf{t})) - D_{\text{KL}}(p_\theta(y|\mathbf{t});p_\phi(y|\mathbf{t})) \\
    & = H(p(y)) + \mathbb{E}_{p_\theta(y,\mathbf{t})}(p_\phi(y|\mathbf{t})),
\end{split}
\end{equation}
where $p_\phi(y|\mathbf{t})$ is a variational approximation to $p_\theta(y|\mathbf{t})$. The smaller the KL divergence $D_{\text{KL}}(p_\theta(y|\mathbf{t});p_\phi(y|\mathbf{t}))$, the smaller the approximation error.

Further, if we assume $p_\phi(y|\mathbf{t})$ follows a Gaussian, then we obtain MSE loss; whereas if we assume $p_\phi(y|\mathbf{t})$ follows a Laplacian, we obtain MAE loss.

Hence, the bias on estimating $I(y;\mathbf{t})$ simply by MSE comes from two sources: the closeness between $p_\phi(y|\mathbf{t})$ and $p_\theta(y|\mathbf{t})$, and the suitability of parametric Gaussian assumption.

Unfortunately, it is hard to justify or investigate if the bias from the first source is small or not. This is because~\citep{alemi2017deep}:
\begin{equation}
    p_\theta(y|\mathbf{t}) = \int \frac{p_\theta(y|\mathbf{x}) p_\theta(\mathbf{z}|\mathbf{x}) p_\theta(\mathbf{x}) }{p_\theta(\mathbf{z})} d\mathbf{x},
\end{equation}
which is intractable. On the other hand, although Gaussian assumption on $p_\phi(y|\mathbf{t})$ is the most popular choice, it is very likely that MAE (or losses induced by other parametric assumptions) has better performance.

To summarize, the bias term in estimating $I(y;\mathbf{t})$ by variational approximation is a bit hard to control. Moreover, there is no theoretical guarantee on properties like consistency. Fortunately, CS divergence with KDE does not have such limitations. Interested readers can refer to Section~\ref{sec:asymptotic_property}.

\subsection{Advantages of the Prediction Term $\widehat{D}_{\text{CS}}(p(y|\mathbf{x});q_\theta(\hat{y}|\mathbf{x}))$} \label{sec:prediction_term}

Given observations $\{(\mathbf{x}_i,y_i,\hat{y}_i )\}_{i=1}^N$, where $\mathbf{x}\in \mathbb{R}^p$ denotes a $p$-dimensional input variable, $y$ is the desired response, and $\hat{y}$ is the predicted output generated by a model $h_\theta$. Let $K$, $L^1$ and $L^2$ denote, respectively, the Gram matrices for the variable $\mathbf{x}$, $y$, and $\hat{y}$ (i.e., $K_{ij}=\kappa(\mathbf{x}_i,\mathbf{x}_j)$, $L_{ij}^1=\kappa(y_i,y_j)$ and $L_{ij}^2=\kappa(\hat{y}_i,\hat{y}_j)$). Further, let $L^{21}$ denote the Gram matrix between $\hat{y}$ and $y$ (i.e., $L_{ij}^{21}=\kappa(\hat{y}_i,y_j)$). The prediction term $D_{\text{CS}}(p(y|\mathbf{x});q_\theta(\hat{y}|\mathbf{x}))$ is given by:
\begin{equation}\label{eq:CS_ext1_appendix}
\begin{split}
& \widehat{D}_{\text{CS}}(p(y|\mathbf{x});q_\theta(\hat{y}|\mathbf{x}))  = \log\left( \sum_{j=1}^N \left( \frac{ \sum_{i=1}^N K_{ji} L_{ji}^1 }{ (\sum_{i=1}^N K_{ji})^2 } \right) \right) \\
& + \log\left( \sum_{j=1}^N \left( \frac{ \sum_{i=1}^N K_{ji} L_{ji}^2 }{ (\sum_{i=1}^N K_{ji})^2 } \right) \right)
 - 2 \log \left( \sum_{j=1}^N \left( \frac{ \sum_{i=1}^N K_{ji} L_{ji}^{21} }{ (\sum_{i=1}^N K_{ji})^2 } \right) \right).
\end{split}
\end{equation}

We focus our analysis on the last term, as it directly evaluates the difference between $y$ and $\hat{y}$, which should governs the quality of prediction.

By the second order Taylor expansion, we have:
\begin{equation}
    L_{ij}^{21}=\kappa(\hat{y}_i,y_j)=\exp{\left(-\frac{(\hat{y}_i-y_j)^2}{2\sigma^2}\right)} \approx 1 - \frac{1}{2\sigma^2}(\hat{y}_i-y_j)^2,
\end{equation}

For a fixed $j$, we have:
\begin{equation}
\begin{split}
    & \frac{ \sum_{i=1}^N K_{ji} L_{ji}^{21} }{ (\sum_{i=1}^N K_{ji})^2 }  = \frac{K_{j1}}{(\sum_{i=1}^N K_{ji})^2} \cdot L_{j1}^{21} +
   \frac{K_{j2}}{(\sum_{i=1}^N K_{ji})^2} \cdot L_{j2}^{21} + \cdots + \frac{K_{jN}}{(\sum_{i=1}^N K_{ji})^2} \cdot L_{jN}^{21} \\
    & \approx \frac{1}{\sum_{i=1}^N K_{ji}} - \frac{1}{2\sigma^2} \left[  \frac{K_{j1}}{(\sum_{i=1}^N K_{ji})^2} \cdot (\hat{y}_j - y_1)^2 + \frac{K_{j2}}{(\sum_{i=1}^N K_{ji})^2} \cdot (\hat{y}_j - y_2)^2 \right.\\
    & \left. + \cdots + \frac{K_{jj}}{(\sum_{i=1}^N K_{ji})^2} \cdot (\hat{y}_j - y_j)^2 + \cdots + \frac{K_{jN}}{(\sum_{i=1}^N K_{ji})^2} \cdot (\hat{y}_j - y_N)^2\right].
\end{split}
\end{equation}

That is, compared with the na\"ive mean squared error (MSE) loss that only minimizes $(\hat{y}_j - y_j)^2$, our prediction term additionally considers the squared distance between $\hat{y}_j$ and $y_i (i\neq j)$. The weight on $(\hat{y}_j - y_i)^2$ is determined by $K_{ji}$, i.e., the kernel distance between $\mathbf{x}_j$ and $\mathbf{x}_i$: if $\|\mathbf{x}_j-\mathbf{x}_i\|_2^2$ is small, there is a heavy weight on $(\hat{y}_j - y_i)^2$; otherwise, the weight on $(\hat{y}_j - y_i)^2$ is very light or negligible. Hence, our prediction term encourages that if two points $\mathbf{x}_i$ and $\mathbf{x}_j$ are sufficiently close, their predictions ($\hat{y}_i$ and $\hat{y}_j$) will be close as well.

To empirically support this, we trained a neural network on the California Housing data with respectively the MSE loss and our conditional CS divergence loss (i.e., Eq.~(\ref{eq:CS_ext1_appendix})). In the test set, we measure all the pairwise distances between the $i$-th sample and the $j$-th sample (i.e., $\|\mathbf{x}_i-\mathbf{x}_j\|_2$) and their predictions (i.e., $|\hat{y}_i-\hat{y}_j|$). From Fig.~\ref{fig:close_prediction}, we observe that our conditional CS divergence loss encourages much higher correlation between $\|\mathbf{x}_i-\mathbf{x}_j\|_2$ and $|\hat{y}_i-\hat{y}_j|$ than MSE, which corroborates our analysis. That is, close points have closer predictions.

\begin{figure}[htbp]
		\centering
		\includegraphics[width=0.85\linewidth]{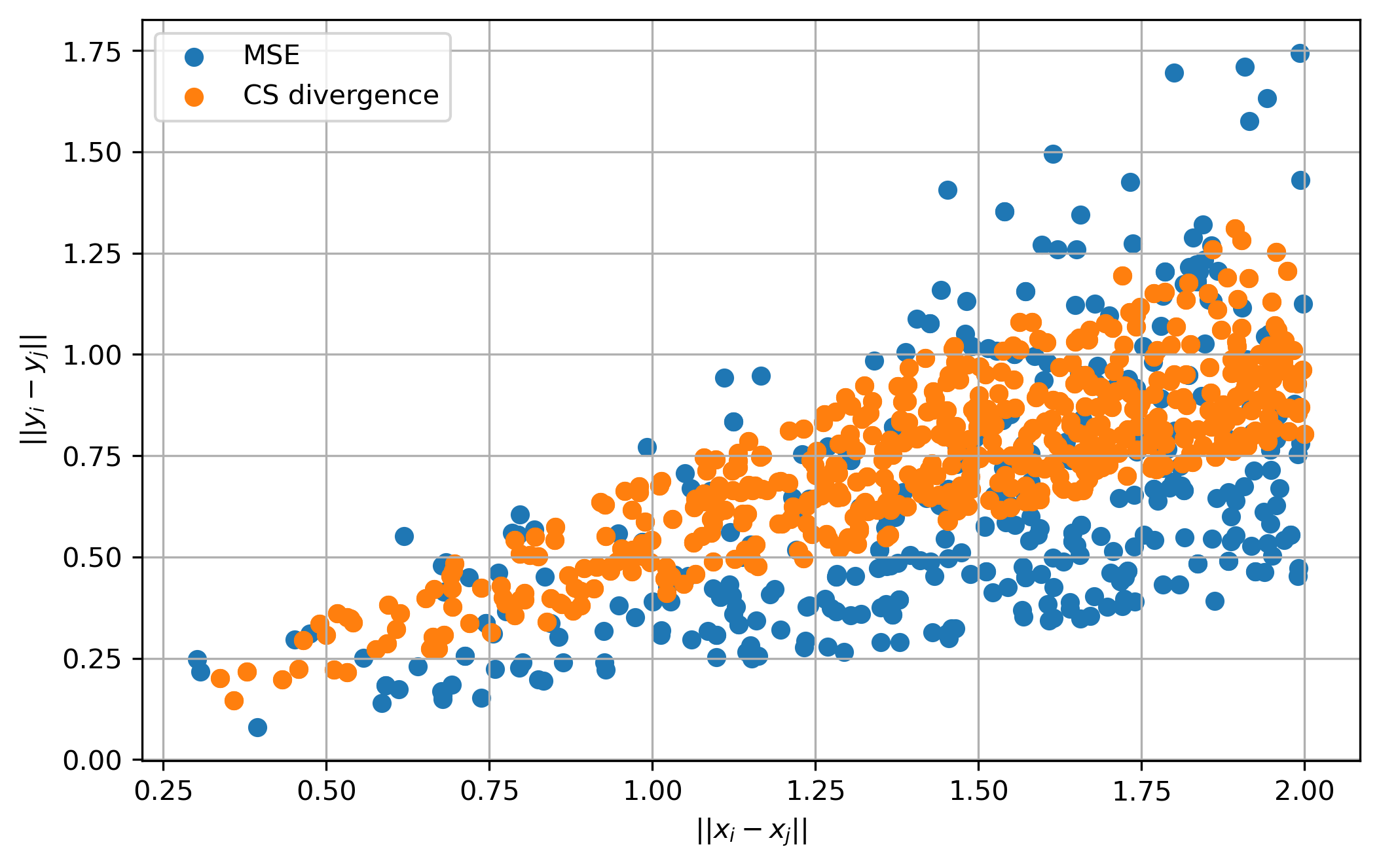}
	\caption{$\|\mathbf{x}_i-\mathbf{x}_j\|_2$ with respect to $|\hat{y}_i-\hat{y}_j|$.}
	\label{fig:close_prediction}
\end{figure}

On the other hand, when $\sigma \rightarrow 0$, we have $K_{jj}=1$ and $K_{ji} \rightarrow 0 (i\neq j)$. In this case, our prediction term reduces to the MSE.

If we measure the divergence between $p(y|\mathbf{x})$ and $q_\theta(\hat{y}|\mathbf{x})$ with conditional MMD by~\citep{ren2016conditional}. The empirical estimator is given by
\begin{equation}\label{eq:conditional_MMD_appendix}
\widehat{D}_{\text{MMD}}(p(y|\mathbf{x});q_\theta(\hat{y}|\mathbf{x})) = \tr(K\tilde{K}^{-1} L^1 \tilde{K}^{-1}) + \tr(K\tilde{K}^{-1} L^2 \tilde{K}^{-1}) -2 \tr(K\tilde{K}^{-1} L^{21} \tilde{K}^{-1}),
\end{equation}
in which $\tilde{K} = K+\lambda I$.

By comparing conditional MMD in Eq.~(\ref{eq:conditional_MMD_appendix}) with our conditional CS in Eq.~(\ref{eq:CS_ext1_appendix}), we observe that conditional CS avoids introducing an additional hyperparametr $\lambda$ and the necessity of matrix inverses, which is computationally efficient and more stable.

To justify our argument, we additionally compared conditional MMD with conditional CS as a loss function to train a neural network also on California Housing data. The result is in Fig.~\ref{fig:cs_with_mmd}, which clearly corroborates our analysis.

\begin{figure}[htbp]
		\centering
		\includegraphics[width=0.85\linewidth]{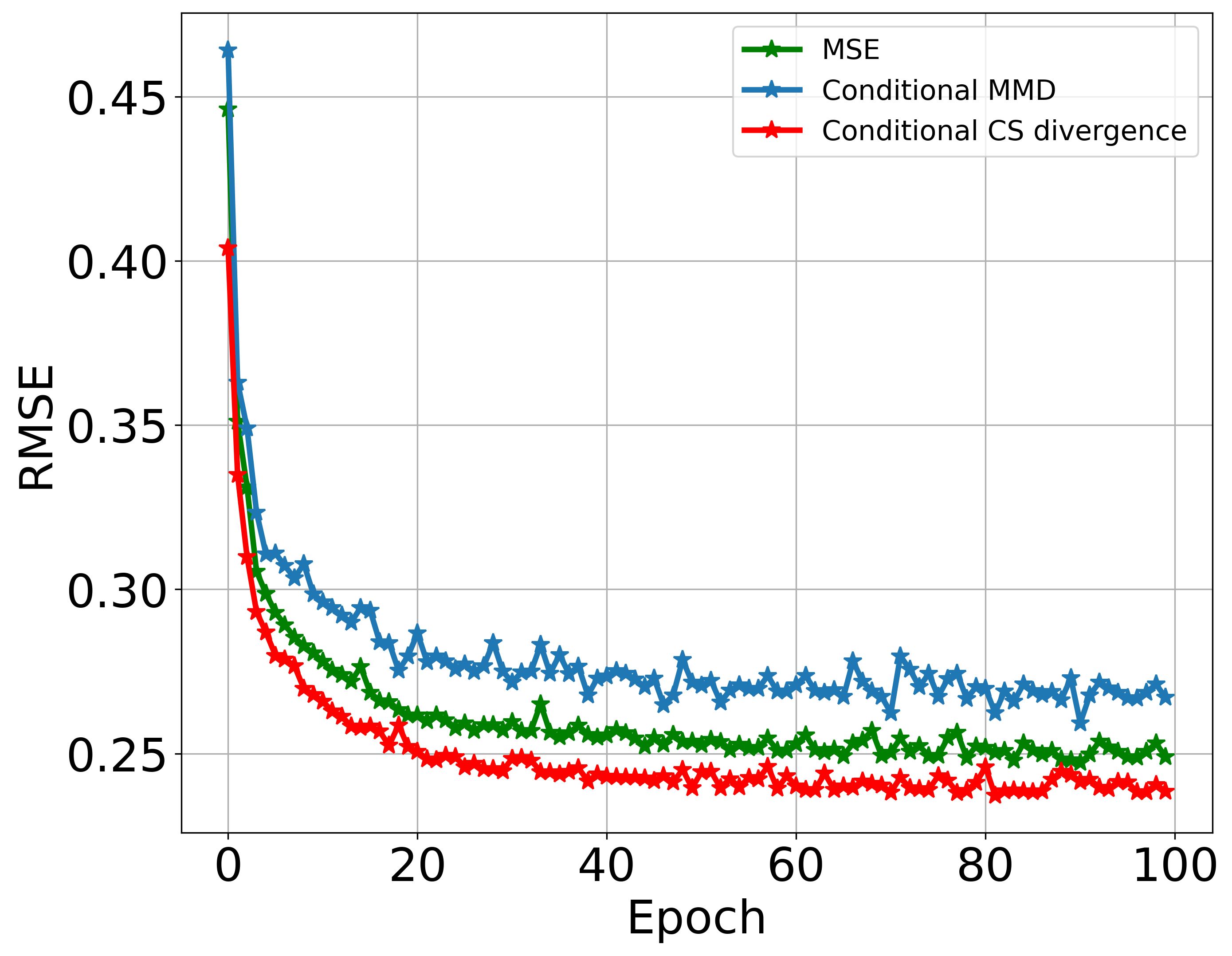}
	\caption{Comparison of learning curves of conditional MMD, MSE, and conditional CS divergences on the California Housing test data. The learning curve of conditional CS is much smoother than that that of conditional MMD (almost no jitter).}
	\label{fig:cs_with_mmd}
\end{figure}

Finally, one can also non-parametrically measure the closeness between $p(y|\mathbf{x})$ and $q_\theta(\hat{y}|\mathbf{x})$ with $\widehat{D}_{\text{KL}}(p(y|\mathbf{x});q_\theta(\hat{y}|\mathbf{x}))$. By the same KDE technique, we obtain:
\begin{equation}
\begin{split}
        \widehat{D}_{\text{KL}}(p(y|\mathbf{x});q_\theta(\hat{y}|\mathbf{x})) & =     \widehat{D}_{\text{KL}}(p(\mathbf{x},y);q_\theta(\mathbf{x},\hat{y}))  = \frac{1}{N} \sum_{i=1}^N \log \left( \frac{p(\mathbf{x}_i,y_i)}{q_\theta(\mathbf{x}_i,\hat{y}_i)} \right) \\
        &  = \frac{1}{N} \sum_{i=1}^N \log \left( \frac{  \sum_{j=1}^N \exp \left( \| \begin{bmatrix}
           \mathbf{x}_{i} \\
           y_{i} \end{bmatrix} - \begin{bmatrix}
           \mathbf{x}_{j} \\
           y_{j} \end{bmatrix} \|_2^2 /2\sigma^2 \right) }{ \sum_{j=1}^N \exp \left( \| \begin{bmatrix}
           \mathbf{x}_{i} \\
           \hat{y}_{i} \end{bmatrix} - \begin{bmatrix}
           \mathbf{x}_{j} \\
           y_{j} \end{bmatrix} \|_2^2 /2\sigma^2 \right) } \right).
\end{split}
\end{equation}

Unfortunately, we found that the above formula is hard to converge or achieve a promising result. One possible reason is perhaps the support constraint on the KL divergence to obtain a finite value: in our case, there is no guarantee that $\text{supp}(p(\mathbf{x},y)) \subseteq \text{supp}(q_\theta(\mathbf{x},\hat{y}))$.

\subsection{Asymptotic property of our CS divergence estimator}\label{sec:asymptotic_property}

For simplicity, we analyze the asymptotic property of our KDE estimator on the basic CS divergence between two distributions $p$ and $q$, which has the following form:
\begin{equation}
\begin{split}
        D_{\text{CS}}(p;q) & = \log\left( \int p^2 d\mu \right) + \log\left( \int q^2 d\mu \right) - 2\log\left( \int pq d\mu \right)  \\
       & = \log\left( \mathbb{E}_{p}(p) \right) + \log\left( \mathbb{E}_{q}(q) \right) - \log\left( \mathbb{E}_{p}(q) \right) - \log\left( \mathbb{E}_{q}(p) \right).
\end{split}
\end{equation}

Note that, the term $2\log\left( \int pq d\mu \right)$ can be expressed as:
\begin{equation}
    w_1 \log\left( \mathbb{E}_{p}(q) \right) + w_2\log\left( \mathbb{E}_{q}(p) \right) \quad \forall w_1+w_2=2, w_1\geq0, w_2\geq 0.
\end{equation}

In the following, we consider two distributions with equal number of samples and hence set $w_1=w_2=1$. Given observations $X:=\{x_1,x_2,\cdots,x_N\}$ and $Y:=\{y_1,y_2,\cdots,y_N\}$ drawn \emph{i.i.d.} from $p$ and $q$, respectively, the empirical estimator of $D_{\text{CS}}(p;q)$ is given by:
\begin{equation}\label{eq:CS_est_resubstitute}
\begin{split}
    \widehat{D}_{\text{CS}} (p(x);q(y)) & = \log \left( \frac{1}{N} \sum_{i=1}^N \widehat{p}(x_i) \right) + \log \left( \frac{1}{N} \sum_{i=1}^N \widehat{q}(y_i) \right) - \log \left( \frac{1}{N} \sum_{i=1}^N \widehat{q}(x_i) \right) - \log \left( \frac{1}{N} \sum_{i=1}^N \widehat{p}(y_i) \right) \\
    & = \log \left( \frac{1}{N^2} \sum_{i=1}^N \sum_{j=1}^N \kappa_\sigma (x_j-x_i) \right) + \log \left( \frac{1}{N^2} \sum_{i=1}^N \sum_{j=1}^N \kappa_\sigma (y_j-y_i) \right) \\
    & - \log \left( \frac{1}{N^2} \sum_{i=1}^N \sum_{j=1}^N \kappa_\sigma (y_j-x_i) \right) - \log \left( \frac{1}{N^2} \sum_{i=1}^N \sum_{j=1}^N \kappa_\sigma (x_j-y_i) \right),
\end{split}
\end{equation}
where the second equation is by KDE with a kernel function
$\kappa$ of width $\sigma$.

Note that, our CS divergence estimator in Eq.~(\ref{eq:CS_est_resubstitute}) is different to that derived in \citep{jenssen2006cauchy}.

In fact, quantities like $\int p^2 d\mu$ and $\int pq d\mu$ can be estimated in a couple of ways~\citep{beirlant1997nonparametric}. Our estimator in Eq.~(\ref{eq:CS_est_resubstitute}) is also called the \emph{resubstitution estimator}; whereas the authors in \citep{jenssen2006cauchy} use the \emph{plug-in estimator} that simply inserting KDE of the density into the formula, i.e.,
\begin{equation}
\begin{split}
        \int p^2 d\mu \approx \int \widehat{p}^2(x)dx & = \int \left(
\frac{1}{N}\sum_{i=1}^N \kappa_\sigma(x_i-x) \right)^2 dx \\
& = \frac{1}{N^2} \sum_{i=1}^N \sum_{j=1}^N \int \kappa_\sigma(x_i-x)\times \kappa_\sigma(x_j-x) dx.
\end{split}
\end{equation}

Authors of \citep{jenssen2006cauchy} then assume a Gaussian kernel and rely on the property that the integral of the product of
two Gaussians is exactly evaluated as the value of the Gaussian computed at the difference of the arguments and whose variance is the sum of the variances of the two original Gaussian functions. Hence,
\begin{equation}\label{eq:106}
    \int p^2 d\mu  \approx \frac{1}{N^2} \sum_{i=1}^N \sum_{j=1}^N \int \kappa_\sigma(x_i-x)\times \kappa_\sigma(x_j-x) dx
     = \frac{1}{N^2} \sum_{i=1}^N \sum_{j=1}^N \kappa_{\sqrt{2}\sigma}(x_i-x_j).
\end{equation}

To our knowledge, other kernel functions, however, do
not result in such convenient evaluation of the integral because the Gaussian maintains the functional form under convolution.

By contrast, we estimate $\int p^2d\mu$ as:
\begin{equation}\label{eq:107}
        \int p^2 d\mu = \mathbb{E}_p(p) = \frac{1}{N} \sum_{i=1}^N p(x_i)
        = \frac{1}{N} \sum_{i=1}^N \left(
\frac{1}{N} \sum_{j=1}^N \kappa_\sigma(x_i-x_j) \right) dx
 = \frac{1}{N^2} \sum_{i=1}^N \sum_{j=1}^N \kappa_{\sigma}(x_i-x_j).
\end{equation}

Although Eq.~(\ref{eq:107}) only differs from Eq.~(\ref{eq:106}) by replacing $\sqrt{2}\sigma$ with $\sigma$,
our estimator offers two immediately advantages over that in \citep{jenssen2006cauchy}: 1) our estimator is generalizable to all valid kernel functions; 2) the asymptotic property of our estimator can be guaranteed, whereas such analysis in \citep{jenssen2006cauchy} is missing.

Having explained the difference between \emph{resubstitution estimator} and \emph{plug-in estimator}, we analyze the asymptotic bias and variance of our estimator. As an example, we focus our analysis on the cross-term $\int pq d\mu = \mathbb{E}_{p}(q)$.

\subsubsection{Bias of $\int pq d\mu = \mathbb{E}_{p}(q)$}
The estimator of $\mathbb{E}_{p}(q)$ is given by:
\begin{equation}
    \mathbb{E}_{p}(q) \approx \frac{1}{N^2} \sum_{i=1}^N \sum_{j=1}^N \kappa_\sigma (y_j-x_i).
\end{equation}

Let us denote $S=\mathbb{E}_{p}(q)$ and $\widehat{S} = \frac{1}{N^2} \sum_{i=1}^N \sum_{j=1}^N \kappa_\sigma (y_j-x_i) $, the bias of $\widehat{S}$ can be obtained by expectation:
\begin{equation}
\begin{split}
    \mathbb{E}(\widehat{S}) & = \mathbb{E} \left[ \frac{1}{N^2} \sum_{i=1}^N \sum_{j=1}^N \kappa_\sigma (y_j-x_i) \right] \\
& = \frac{1}{N^2} \sum_{i=1}^N \sum_{j=1}^N \mathbb{E} \left[ \kappa_\sigma (y_j-x_i) \right] = \mathbb{E} \left[ \kappa_\sigma (y_j-x_i) \right].
\end{split}
\end{equation}

Further,
\begin{equation}
\begin{split}
    \mathbb{E} \left[ \kappa_\sigma (y_j-x_i) \right] & = \iint \kappa_\sigma (y_j-x_i) p(x_i) q(y_j) dx_i dy_j \\
    & = \int p(x_i) \left[ \int \kappa_\sigma (y_j-x_i) q(y_j) dy_j \right] dx_i \\
    & = \int p(x_i) \left[ \int \kappa_\sigma (s) q(x_i+\sigma s) d(\sigma s) \right] dx_i  \quad \text{Let us denote} \quad s = \frac{y_j-x_i}{\sigma}. \\
    & = \int p(x_i) \left[ \int \kappa(s) q(x_i+\sigma s) d s \right] dx_i \quad \text{Note:} \quad \kappa_\sigma(s)\sigma = \frac{1}{\sqrt{2\pi}} \exp(-s^2/2) = \kappa(s) \\
    & = \int p(x) \left[ \int \kappa(s) [q(x) + \sigma s q'(x) + \frac{1}{2}\sigma^2s^2 q''(x) + \smallO(\sigma^2)] ds \right] dx \quad \text{Taylor expansion} \\
    & = \int p(x) \left[ q(x) \underbrace{\int \kappa(s) ds}_{=1} + \sigma q'(x) \underbrace{\int s\kappa(s) ds}_{=0} + \frac{1}{2}\sigma^2 q''(x) \int s^2 \kappa(s)ds + \smallO(\sigma^2) \right] dx \\
    & = \int p(x) [q(x) + \frac{1}{2} \sigma^2 q''(x) \mu_\kappa ] dx + \smallO(\sigma^2),
\end{split}
\end{equation}
where $\mu_\kappa=\int s^2 \kappa(s)ds$ (for Gaussian kernel, $\mu_\kappa=1$).

Namely, the bias of $\widehat{S}$ is:
\begin{equation}
\begin{split}
    \textbf{bias}(\widehat{S}) & = \mathbb{E}(\widehat{S}) - \int pq d\mu \\
     & = \frac{1}{2} \sigma^2 \mu_\kappa \int p q'' d\mu + \smallO(\sigma^2) \\
     & = \frac{1}{2} \sigma^2 \mathbb{E}_p(q'') \mu_\kappa + \smallO(\sigma^2).
\end{split}
\end{equation}

We see that the bias of $\widehat{S}$ increases proportionally
to the square of the kernel size multiplied by the expected value of the second derivative of $q$ (under distribution $p$). This result also reveals an interesting factor: the bias is caused by the \emph{curvature} (second derivative) of the density function, which coincides with KDE.

\subsubsection{Variance of $\int pq d\mu = \mathbb{E}_{p}(q)$}
For the analysis of variance, we can obtain an upper bound straightforwardly:
\begin{equation}
\begin{split}
    \textbf{var}(\widehat{S}) & = \textbf{var}\left( \frac{1}{N^2} \sum_{i=1}^N \sum_{j=1}^N \kappa_\sigma (y_j-x_i) \right) \\
    & = \textbf{var}\left( \frac{1}{N^2 \sigma} \sum_{i=1}^N \sum_{j=1}^N \kappa \left(\frac{y_j-x_i}{\sigma}\right) \right) \\
    & = \frac{1}{N^4\sigma^2} \textbf{var}\left( \sum_{i=1}^N \sum_{j=1}^N \kappa \left(\frac{y_j-x_i}{\sigma}\right) \right) \\
    & = \frac{1}{N^2\sigma^2} \textbf{var}\left( \kappa \left(\frac{y_j-x_i}{\sigma}\right) \right) \quad \text{independence assumption} \\
    & \leq \frac{1}{N^2\sigma^2} \mathbb{E}\left( \kappa^2 \left(\frac{y_j-x_i}{\sigma}\right) \right) \\
    & = \frac{1}{N^2\sigma^2} \iint \kappa^2 \left(\frac{y_j-x_i}{\sigma}\right) p(x_i) q(y_j) dx_i dy_j \\
    & = \frac{1}{N^2\sigma^2} \int p(x) \left[ \int \kappa^2 \left(\frac{y-x}{\sigma}\right) q(y) dy \right] dx \\
    & = \frac{1}{N^2\sigma} \int p(x) \left[ \int \kappa^2(s) q(x+\sigma s) ds \right] dx \quad \text{Let us denote} \quad s = \frac{y-x}{\sigma}. \\
    & = \frac{1}{N^2\sigma} \int p(x) \left[ \int \kappa^2(s) [q(x)+\sigma s q'(x) + \smallO(\sigma)] ds \right] dx \\
    & = \frac{1}{N^2\sigma} \int p(x) \left[ q(x) \int \kappa^2(s) ds + \sigma q'(x) \underbrace{\int s\kappa^2(s) ds}_{=0} + \smallO(\sigma) \right] dx \quad \text{We assume symmetric kernel.} \\
    & = \frac{1}{N^2\sigma} \int p(x) q(x) \sigma^2_\kappa + \smallO(\frac{1}{N^2}),
\end{split}
\end{equation}
where $\sigma_\kappa^2 = \int \kappa(s)ds$ (for Gaussian kernel, $\sigma_\kappa^2 = \frac{1}{2\sqrt{\pi}}$).

So, from the above analysis, we conclude that the variance of $\widehat{S}$ will decrease inversely proportional to $N^2$, which is a comfortable result for estimation.

The asymptotic mean integrated square error (AMISE) of $\widehat{S}$ is therefore:
\begin{equation}
\begin{split}
    \textbf{AMISE}(\widehat{S}) & = \mathbb{E} \left[ \int (\widehat{S}-S)^2 \right] \\
    & = \frac{\sigma^4}{4} \mu^2_\kappa \mathbb{E}^2_p(q'') +  \frac{1}{N^2\sigma} \mathbb{E}_p(q) \sigma^2_\kappa,
\end{split}
\end{equation}
in which $\mu_\kappa=\int s^2 \kappa(s)ds$ and $\sigma_\kappa^2 = \int \kappa(s)ds$.

To summarize, AMISE will tend to zero when the kernel size $\sigma$ goes to zero and the number of samples goes to infinity with $N^2\sigma \rightarrow 0$, that is, $\widehat{S}$
is a consistent estimator of $S$.

Finally, one should note that, the $\log$ operator does not influence the convergence of $\widehat{S}$:
\begin{equation}
    N\sqrt{\sigma} (\widehat{S}-S) = \mathcal{O}_p(1)
\end{equation}

\begin{equation}
    N\sqrt{\sigma} (\log(\widehat{S})-\log(S)) = N\sqrt{\sigma} \log\left(1+ \frac{N\sqrt{\sigma}(\widehat{S}-S)}{N\sqrt{\sigma}S}\right) = \mathcal{O}_p(1)
\end{equation}

Additionally, this result can also be obtained by the delta method~\citep{ferguson2017course}.

The bias and variance of other terms such as $\log \mathbb{E}_p(p)$ and $\log \mathbb{E}_q(q)$ in CS divergence can be quantified similarly.

The same result also applies for CS-QMI, i.e., $I_{\text{CS}}(\mathbf{x};\mathbf{y})$. This is because we can construct a new concatenated variable $\mathbf{z}=(\mathbf{x},\mathbf{y})^T$ in the joint space of $\mathbf{x}$ and $\mathbf{y}$, and let $p(\mathbf{z})=p(\mathbf{x},\mathbf{y})$ and $q(\mathbf{z})=p(\mathbf{x})p(\mathbf{y})$, then:
\begin{equation}
    I_{\text{CS}}(\mathbf{x};\mathbf{y}) = D_{\text{CS}}(p(\mathbf{z});q(\mathbf{z})).
\end{equation}

\subsection{Extension of Theorem~\ref{theorem_appendix} and its Implication}\label{sec:theorem_extension}

Theorem~\ref{theorem_appendix} suggests that, under Gaussian assumptions, $D_{\text{CS}}(p;q) \leq D_{\text{KL}}(p;q)$. In this section, we show that such conclusion can be generalized to two arbitrary square-integral density functions. We also provide its implication.



\subsubsection{CS Divergence is usually smaller than KL Divergence}

\begin{proposition}\label{proposition_general}
For any density functions $p$ and $q$ that are square-integral, let $|K|$ denote the length of the integral's integration range $K$ with $|K| \gg 0$, we have:
\begin{equation}
C_1 \left[D_{\text{CS}}(p;q) - \log{|K|} + 2\log C_2 \right] \leq D_{\text{KL}}(p;q),
\end{equation}
in which $C_1=\int_K p(\mathbf{x}) d\mathbf{x} \approx 1$ and $C_2 = \frac{ \int_K p(\mathbf{x}) d\mathbf{x} }{ \left(\int_K p^2(\mathbf{x}) d\mathbf{x} \int_K q^2(\mathbf{x}) d\mathbf{x} \right)^{1/4} } \approx \frac{ 1 }{ \left(\int_K p^2(\mathbf{x}) d\mathbf{x} \int_K q^2(\mathbf{x}) d\mathbf{x} \right)^{1/4} }$.
\end{proposition}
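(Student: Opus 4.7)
The strategy is to reduce the claimed inequality to a cleaner form by exploiting the scale invariance of $D_{\text{CS}}$ and then to apply Jensen's inequality. First, I would restrict all integrals to $K$ and introduce the normalized functions $\tilde{p} = p/C_1$ and $\tilde{q} = q/C_1'$, where $C_1' = \int_K q$. A direct substitution into the definition of $D_{\text{CS}}$ shows that $D_{\text{CS}}(p;q) = D_{\text{CS}}(\tilde{p};\tilde{q})$ (the $\log C_1, \log C_1'$ contributions in the three terms cancel). Expanding the KL divergence restricted to $K$ yields $D_{\text{KL}}(p;q) = C_1\, D_{\text{KL}}(\tilde{p};\tilde{q}) + C_1\log(C_1/C_1')$, which under the asymptotic regime $C_1 \approx C_1' \approx 1$ simplifies to $D_{\text{KL}}(p;q) \approx C_1\, D_{\text{KL}}(\tilde{p};\tilde{q})$.

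Next, substituting $2\log C_2 = 2\log C_1 - \tfrac{1}{2}(\log\!\int_K\! p^2 + \log\!\int_K\! q^2)$ and using the above reductions, the target inequality (after dividing by $C_1$) is equivalent to
\begin{equation*}
\tfrac{1}{2}\log\!\int_K\! \tilde{p}^2 + \tfrac{1}{2}\log\!\int_K\! \tilde{q}^2 - 2\log\!\int_K\! \tilde{p}\,\tilde{q} - \log|K| \;\leq\; D_{\text{KL}}(\tilde{p};\tilde{q}),
\end{equation*}
where $\tilde{p},\tilde{q}$ are probability densities on $K$. The key tool for this reduced form is Jensen's inequality applied to the cross term: since $\log$ is concave and $\tilde{p}$ is a probability density,
\[
\log\!\int_K\! \tilde{p}\,\tilde{q}\,d\mathbf{x} \;=\; \log\mathbb{E}_{\tilde{p}}[\tilde{q}] \;\geq\; \mathbb{E}_{\tilde{p}}[\log\tilde{q}] \;=\; -H(\tilde{p}) - D_{\text{KL}}(\tilde{p};\tilde{q}),
\]
and symmetrically using $\tilde{q}$. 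Combining these with the maximum-entropy bound $H(\tilde{p}) \le \log|K|$ (uniform on $K$ maximizes entropy) and the Cauchy--Schwarz bound $\int_K\tilde{p}^2 \ge 1/|K|$, i.e.\ $\log\!\int_K\!\tilde{p}^2 \ge -\log|K|$, the $-\log|K|$ on the left cancels against $\log|K|$ coming from the entropy bound, and the inequality follows after routine arithmetic.

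The hard part will be reconciling the coefficient $2$ on $D_{\text{KL}}(\tilde{p};\tilde{q})$ that Jensen's inequality naturally produces with the coefficient $1$ implicit in the statement. The plan is to symmetrize by averaging Jensen's inequality in both directions (once with measure $\tilde{p}$, once with $\tilde{q}$) so that $D_{\text{KL}}(\tilde{p};\tilde{q}) + D_{\text{KL}}(\tilde{q};\tilde{p})$ appears, and to invoke Theorem~\ref{theorem} (extended beyond the Gaussian case in the spirit of this appendix) together with the asymptotic regime $|K|\gg 0$, $C_1 \approx 1$ so that residual second-order mismatches are absorbed. A secondary subtlety is that $\log\!\int\!\tilde{p}^2$ admits no universal upper bound; this is precisely why the $-\log|K|$ term is carried on the left-hand side, where it supplies the exact slack needed to balance the Jensen estimates against the max-entropy bound and to recover the claimed relation.
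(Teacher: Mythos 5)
Your setup (restricting to $K$, normalizing, and using the scale invariance of $D_{\text{CS}}$) is fine, but the core of the argument has a genuine gap that your own closing paragraph already hints at, and the proposed repairs do not work. Jensen's inequality applied to the cross term gives $-2\log\int_K\tilde p\,\tilde q \leq 2H(\tilde p) + 2D_{\text{KL}}(\tilde p;\tilde q)$, i.e.\ a coefficient $2$ on the forward KL where the statement needs $1$. Symmetrizing replaces this by $D_{\text{KL}}(\tilde p;\tilde q)+D_{\text{KL}}(\tilde q;\tilde p)$, and there is no general inequality bounding the reverse KL by the forward KL, so the extra term cannot be discarded. Invoking ``Theorem~1 extended beyond the Gaussian case'' to finish is circular: that extension is precisely what this proposition is asserting. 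Finally, even granting the KL coefficient, your bound leaves $\tfrac12\log\int_K\tilde p^2+\tfrac12\log\int_K\tilde q^2$ on the left with no upper bound available (as you note, only the lower bound $\log\int_K\tilde p^2\geq-\log|K|$ holds), and the single $-\log|K|$ of slack cannot absorb two such terms plus the two entropy terms $H(\tilde p)+H(\tilde q)\le 2\log|K|$; the arithmetic does not close.

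The paper's proof avoids both problems with a different choice of where Jensen is applied. It uses the weighted-Jensen (continuous log-sum) inequality $\left(\int_K a\right)\log\frac{\int_K a}{\int_K b}\leq\int_K a\log\frac{a}{b}$ with the specific functions $a=p/C_2$ and $b=\sqrt{pq}$. Because $\log(a/b)=\tfrac12\log(p/q)-\log C_2$, the identity $D_{\text{KL}}(p;q)=2\int_K p\log\sqrt{p/q}$ keeps the KL coefficient equal to $1$: the factor of $2$ is absorbed by the square root inside the logarithm rather than multiplying the divergence. The cross term then appears as $\int_K\sqrt{pq}$, and a single application of Cauchy--Schwarz, $\left(\int_K\sqrt{pq}\cdot 1\right)^2\leq\left(\int_K pq\right)|K|$, converts it into $\log\int_K pq$ at the cost of exactly one $\log|K|$ --- which is where the $-\log|K|$ in the statement comes from --- while the quadratic terms $\int_K p^2$ and $\int_K q^2$ enter only through the normalization $\int_K a=\left(\int_K p^2\int_K q^2\right)^{1/4}$ and never need to be bounded above. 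If you want to salvage your route, you would need to replace the direct Jensen step on $\mathbb{E}_{\tilde p}[\tilde q]$ by this weighted log-sum argument with the geometric-mean reference density $\sqrt{pq}$.
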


\begin{proof}
The following results hold for multivariate density functions. For straightforward illustration, we prove the results for the univariate case.

We first present Lemma~\ref{corollary_weighted_Jensen}, which is also called the Jensen weighted integral inequality.
\begin{lemma}\citep{dragomir2003interpolations}\label{corollary_weighted_Jensen}
    Assume a convex function $f:I\mapsto \mathbb{R}$ and $g,h: [x_1,x_2]\mapsto \mathbb{R}$ are measurable functions such that $g(x)\in I$ and $h(x)\geq 0 \quad, \forall x\in [x_1,x_2]$. Also suppose that $h$, $gh$, and $(f\circ g)\cdot h$ are all integrable functions on $[x_1,x_2]$ and $\int_{x_1}^{x_2} h(x)dx > 0$, then
\begin{equation}
    f\left( \frac{\int_{x_1}^{x_2} g(x)h(x)dx}{\int_{x_1}^{x_2} h(x)dx} \right) \leq \frac{ \int_{x_1}^{x_2} (f\circ g)(x)h(x)dx}{ \int_{x_1}^{x_2} h(x)dx}.
\end{equation}
\end{lemma}

Let us set $h(x)=b(x)$ and $g(x)=\frac{a(x)}{b(x)}$ and $f=x\log(x)$ which is a convex function, by applying Lemma~\ref{corollary_weighted_Jensen}, we obtain:
\begin{equation}\label{eq:continuous_log_sum}
    \left(\int_{x_1}^{x_2} a(x) dx \right) \log\left(\frac{\int_{x_1}^{x_2} a(x) dx}{\int_{x_1}^{x_2} b(x) dx} \right) \leq \int_{x_1}^{x_2} a(x) \log\frac{a(x)}{b(x)} dx.
\end{equation}

The inequality above holds for any integration range, provided the Riemann integrals exist. Moreover, this inequality can be easily extended to general ranges, including possibly disconnected sets, using Lebesgue integrals. In fact, Eq.~(\ref{eq:continuous_log_sum}) can be understood as a continuous extension of the well-known log sum inequaity. For simplicity, we denote $\int_{x_1}^{x_2} a(x) dx = \int_K a(x) dx$, in which $|K|=x_2-x_1\gg 0$ refers to the length of the integral's interval.

Now, suppose we are given two distributions $p(x)$ and $q(x)$, let us construct the following two functions:
\begin{equation} \label{eq:ax_bx}
\begin{split}
    a(x) & = \frac{p(x)}{C_2} = \frac{p(x)}{\int_K p(x)dx} \left( \int_K p^2(x)dx \int_K q^2(x)dx \right)^{1/4} ; \\
    b(x) & = \sqrt{p(x)q(x)}.
\end{split}
\end{equation}

Clearly,
\begin{equation}
    \sqrt{\frac{p(x)}{q(x)}} = \frac{a(x)}{b(x)} C_2.
\end{equation}

We have,
\begin{equation}\label{eq:KL_intermediate}
\begin{split}
    D_{\text{KL}}(p;q) & = \int_K p(x)\log \frac{p(x)}{q(x)} dx \\
    & = 2 \int_K p(x)\log \sqrt{\frac{p(x)}{q(x)}} dx \\
    & = 2 \int_K a(x) C_2 \log \left(\frac{a(x)}{b(x)} C_2\right) dx \\
    & = 2 C_2 \left[ \int_K a(x) \log \left(\frac{a(x)}{b(x)}\right)dx + \log C_2 \int_K a(x)dx \right] \\
    & \geq 2 C_2 \left[ \left(\int_K a(x) dx \right) \log\left(\frac{\int_K a(x) dx}{\int_K b(x) dx} \right) + \log C_2 \int_K a(x)dx \right] \\
    & = 2 C_2 \int_K a(x)dx \left[ \log\left(\frac{\int_K a(x) dx}{\int_K b(x) dx} \right) + \log C_2 \right],
\end{split}
\end{equation}
in which the fifth line is due to Eq.~(\ref{eq:continuous_log_sum}).

Note that,
\begin{equation}\label{eq:ax_int}
    \int_K a(x)dx = \int_K \frac{p(x)}{C_2} dx
    = \frac{1}{C_2} \int_K p(x)  dx
    = \left( \int_K p^2(x)dx \int_K q^2(x)dx \right)^{1/4},
\end{equation}

and, using the Cauchy-Schwarz inequality,
\begin{equation}\label{eq:bx_int}
    \left(\int_K b(x)dx\right)^2
    = \left(\int_K \sqrt{p(x)q(x)}\cdot 1 dx\right)^2
    \leq \left( \int_K p(x)q(x) dx \right) \left( \int_K 1dx \right)
 = \left( \int_K p(x)q(x)dx \right) |K|.
\end{equation}

By plugging Eqs.~(\ref{eq:ax_int}) and (\ref{eq:bx_int}) into Eq.~(\ref{eq:KL_intermediate}), we have:
\begin{equation}
\begin{split}
    D_{\text{KL}}(p;q) & \geq 2 C_2 \int_K a(x)dx \left[ \log\left(\frac{\int_K a(x) dx}{\int_K b(x) dx} \right) + \log C_2 \right] \\
    & = \int_K p(x) dx \left[ \log\left(\frac{\int_K a(x) dx}{\int_K b(x) dx} \right)^2 + 2 \log C_2 \right] \\
    & = \int_K p(x) dx \left[ \log\left(\frac{\left( \int_K p^2(x)dx \int_K q^2(x)dx \right)^{1/2}}{\left(\int_K b(x)dx\right)^2} \right) + 2 \log C_2 \right] \\
    & \geq \int_K p(x) dx \left[ \log\left(\frac{\left( \int_K p^2(x)dx \int_K q^2(x)dx \right)^{1/2}}{ \left( \int_K p(x)q(x) \right) |K| } \right) + 2 \log C_2 \right] \\
    & = \int_K p(x) dx \left[ D_{\text{CS}}(p;q) - \log|K| +2\log C_2 \right] \\
    & = C_1 \left[ D_{\text{CS}}(p;q) - \log|K| +2\log C_2 \right].
\end{split}
\end{equation}
in which $C_1=\int_K p(x) dx \approx 1$.

\end{proof}

\subsubsection{Empirical Justification}

We also provide an empirical justification, showing that in general cases, the following relationship largely holds:
\begin{equation}\label{eq:divergence_relation}
    \quad D_{\text{CS}} \lesssim D_{\text{KL}},
\end{equation}
in which $p$ and $q$ need not be Gaussian, and the symbol $\lesssim$ denotes ``less than or similar to".

We focus our justification on discrete $p$ and $q$ for simplicity. This is because, unlike the CS divergence, KL divergence does not have closed-form expression for mixture-of-Gaussians (MoG)~\citep{kampa2011closed}. Hence, it becomes hard to perform Monte Carlo simulation on the continuous regime.

For two discrete distributions $p$ and $q$ on the finite set $\mathcal{X}=\{x_1,x_2,…,x_K\}$ (i.e., there are $K$ different discrete states), let us denote $p(x_i)=p(x=x_i)$, we have:
\begin{equation}\label{eq:discrete_KL}
\begin{split}
& D_{\text{KL}}(p; q) = \sum^K_{i=1}p(x_i)\log \left( \frac{p(x_i)}{q(x_i)} \right) \\
& \text{s.t.} \sum^K_{i=1}p(x_i) = \sum^K_{i=1}q(x_i) = 1
\end{split}
\end{equation}

\begin{equation}\label{eq:discrete_CS}
D_{\text{CS}}(p; q) = -\log \left( \frac{\sum p(x_i)q(x_i)}{\sqrt{\sum p(x_i)^2}\sqrt{\sum q(x_i)^2}} \right)
\end{equation}


To empirically justify our analysis, for each value of $K$, we randomly generate $1,000$ pairs of distributions $p$ and $q$.
Fig.~\ref{fig:MC_simulation} demonstrates the values of $D_{\text{KL}}$ with respect to $D_{\text{CS}}$ when $K=2$, $K=3$ and $K=10$, respectively.

\begin{figure} [htbp]
\centering
     \begin{subfigure}[b]{0.3\textwidth}
         \centering
         \includegraphics[width=\textwidth]{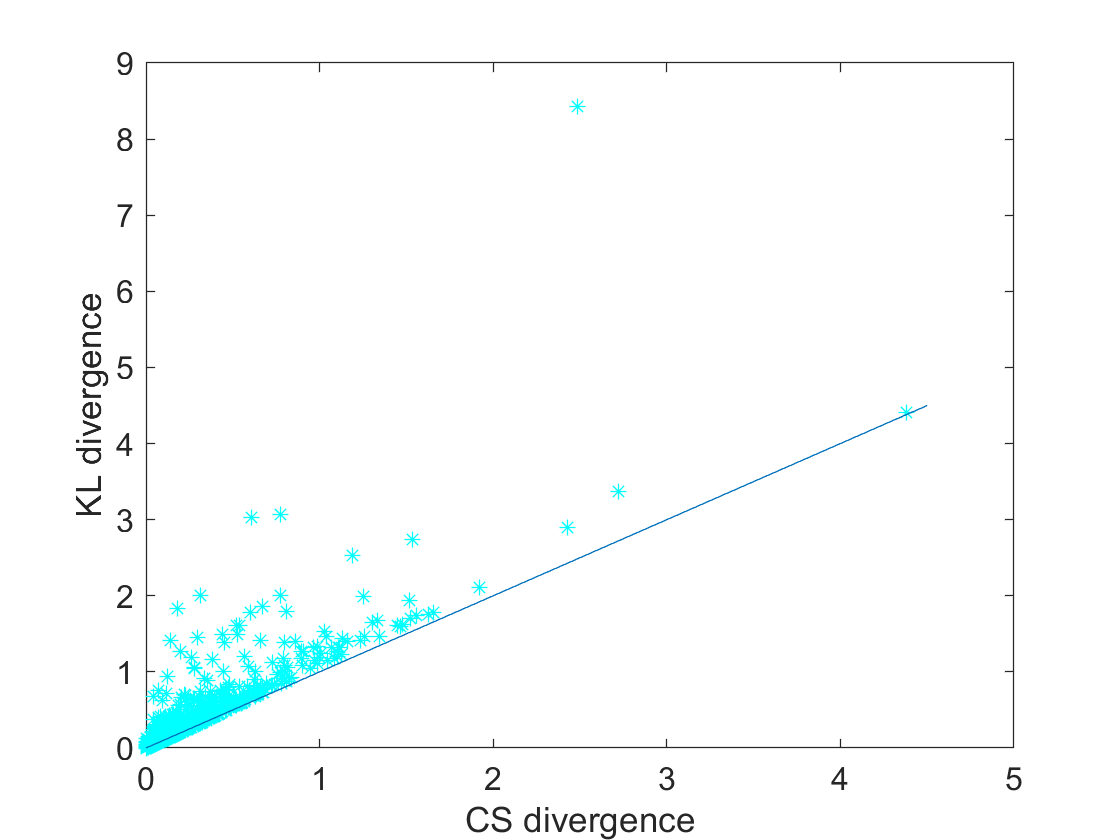}
         \caption{$K=2$}
     \end{subfigure}
     \begin{subfigure}[b]{0.3\textwidth}
         \centering
         \includegraphics[width=\textwidth]{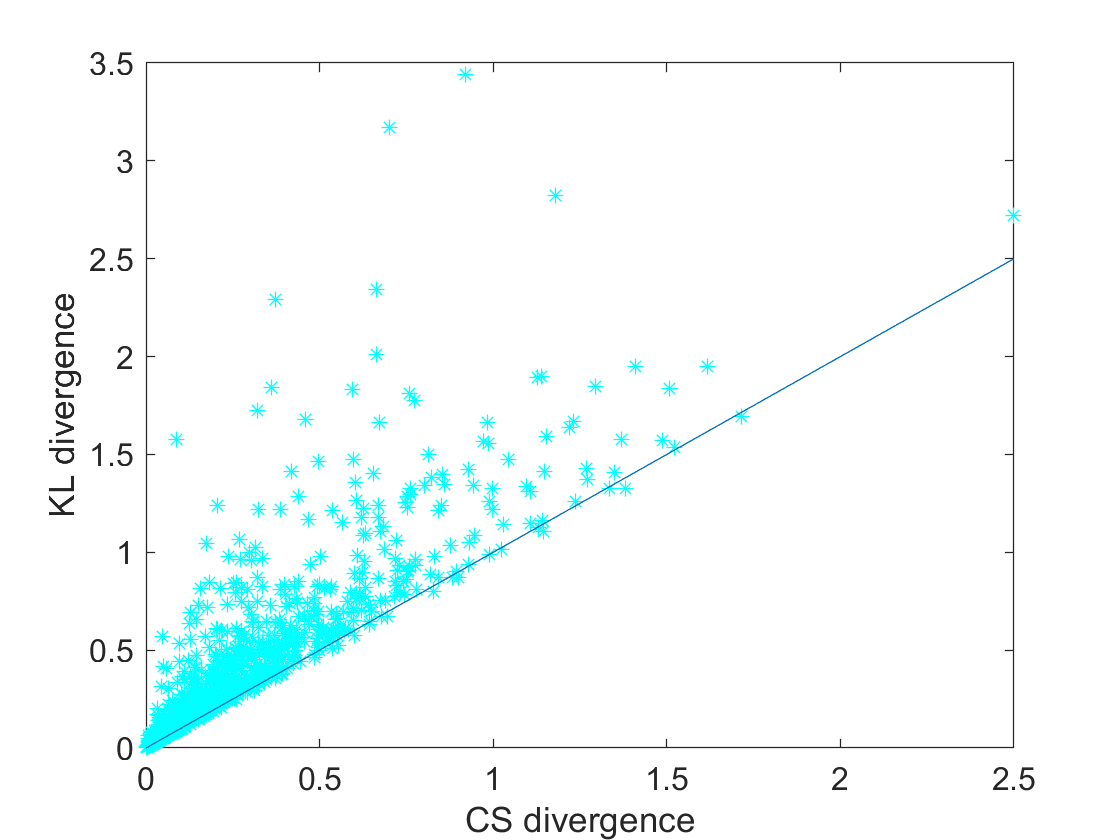}
         \caption{$K=3$}
     \end{subfigure}
     \begin{subfigure}[b]{0.3\textwidth}
         \centering
         \includegraphics[width=\textwidth]{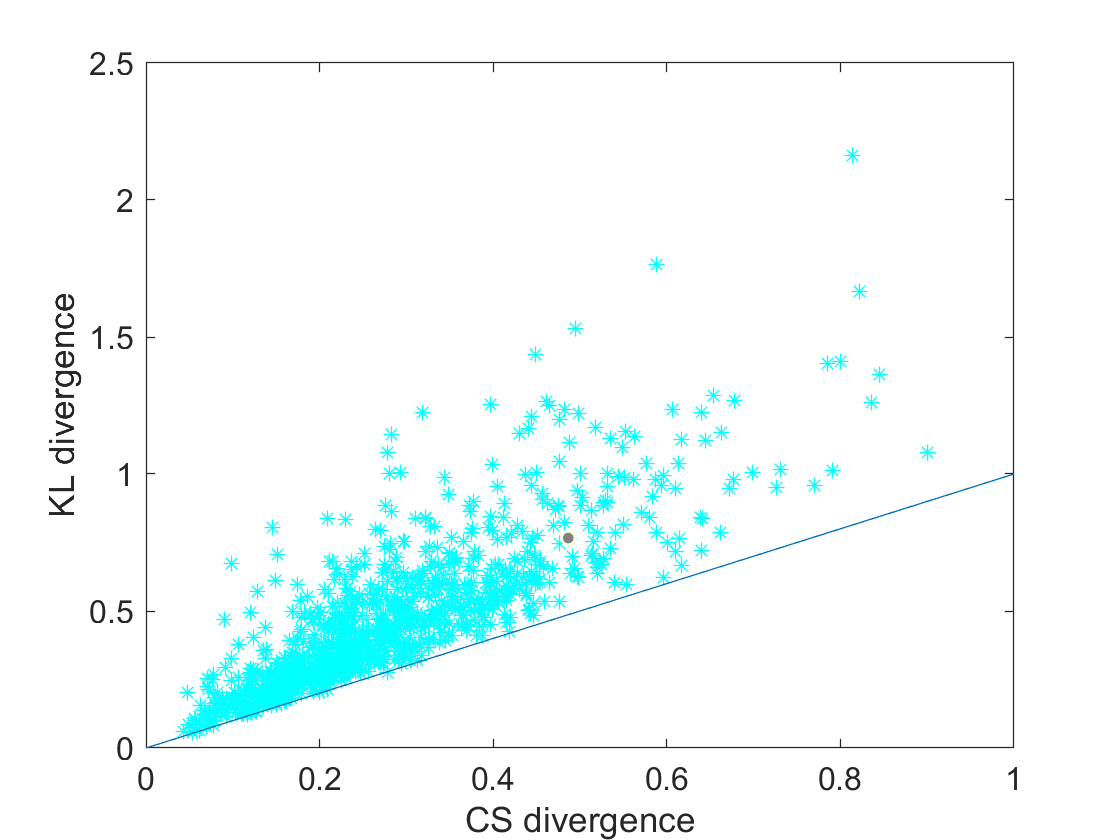}
         \caption{$K=10$}
     \end{subfigure}
     \caption{Values of $D_{\text{KL}}$ with respect to $D_{\text{CS}}$ for $1,000$ pairs of randomly generated $p$ and $q$ when $K=2$, $K=3$ and $K=10$, respectively. The diagonal indicates $D_{\text{KL}}=D_{\text{CS}}$.}
     \label{fig:MC_simulation}
\end{figure}

\subsubsection{Tighter Generalization Error Bound in Unsupervised Domain Adaptation}

In the problem of unsupervised domain adaptation, we aim to learn a classifier from samples in a source distribution $p_s$ that is generalizable to a related and different target distribution $p_t$.
Suppose we learn a latent representation $\mathbf{t}$ such that $-\log \hat{p}(y|\mathbf{t})$ is bounded by a constant $M$\footnote{In classification, we can enforce this condition easily by augmenting the output softmax of the classifier so that each class probability is always at least $\exp(-M)$~\citep{nguyen2022kl}. For example, if we choose $M = 4$, then $\exp(-M) \approx 0.02$.}.

From \citep{nguyen2022kl}, the loss $l_{\text{test}}$ in the target domain can be upper bounded by the loss $l_{\text{train}}$ in the source domain as:
\begin{equation}
\label{eq:generalization_bound}
l_{\text{test}} \leq l_{\text{train}} + \frac{M}{\sqrt{2}} \sqrt{D_{\text{KL}}(p_t(\mathbf{t},y);p_s(\mathbf{t},y))}.
\end{equation}

%

Eq.~(\ref{eq:generalization_bound}) implies that the generalization gap from source to target domain is upper bounded by the mismatch on the joint distributions $p_t(\mathbf{t},y)$ and $p_s(\mathbf{t},y)$.
The exact management of the KL divergence is usually hard. This drawback motivates a possibility to replace KL divergence with CS divergence, which, by Theorem~\ref{theorem_appendix}, may enable tighter generalization error bound. We leave a systematic evaluation of this proposal as future work.


\section{Experimental Details and Additional Results} \label{sec:setup}

\subsection{Effects of $I_{\text{CS}}(\mathbf{x};\mathbf{t})$ on Generalization}\label{sec:generalization_exp}

For the correlation experiment in Section~\ref{sec:generalization_manuscript}, we generate a nonlinear regression data with $30$-dimensional input, in which the input variable $\mathbf{x}$ is generated $i.i.d.$ from an isotropic multivariate Gaussian distribution, i.e., $\mathbf{x}\sim \mathcal{N}(0, I_{30})$. The corresponding output $y$ is generated as
$y=\sin{(\mathbf{w}^T\mathbf{x})}+\log_2{(\mathbf{w}^T \mathbf{x})}$,
where $\mathbf{w}\sim \mathcal{N}(0, I_{30})$.
We generate $5,000$ samples and use $4,000$ for training and remain the rest $1,000$ for test. We also select the real-world California housing dataset, and randomly split into $70\%$ training samples, $10\%$ validation samples, and $20\%$ testing samples.


For the first data, we use fully-connected networks and sweep over the following hyperparameters: (i) the depth ($1$ hidden layer or $2$ hidden layers); (ii) the width (first hidden layer with number of neurons $\{256, 224, 192, 164, 128, 96, 64\}$; second hidden layer with number of neurons $\{64, 48, 32, 16\}$); (iii) batch size ($\{128, 96, 64\}$); (iv) learning rate ($0.1, 0.05$). We train every model with SGD with a hard stop at $200$ epochs. We only retain models that have a stable convergence.

For the second data, We also use fully-connected networks and sweep over the following hyper-parameters: (i) the depth ($2$, $4$, $6$ or $8$ hidden layers); (ii) the width ($16$, $32$, $64$, or $128$) and keep the number of neurons the same for all the hidden layers; (iii) batch size ($64$, $128$, $256$, or $512$); (iv) learning rate ($0.001$, $0.0005$, $0.0001$). We train $200$ epochs for each model with Adam and only retain converged models. In total, we have nearly $100$ models on two NVIDIA V100 GPUs.

For each resulting model, we compute three quantities: 1) the CS-QMI (i.e., $I_{\text{CS}}(\mathbf{x};\mathbf{t})$ between input $\mathbf{x}$ and hidden layer representation $\mathbf{t}$); the conditional mutual information between input $\mathbf{x}$ and hidden layer representation $\mathbf{t}$ given response variable $y$ (i.e., $I_{\text{CS}}(\mathbf{x};\mathbf{t}|y)$); 3) the generalization gap (i.e., the performance difference in training and test sets in terms of rooted mean squared error).

To evaluate $I_{\text{CS}}(\mathbf{x};\mathbf{t}|y)$, we follow the chain rule in \citep{federici2020learning}: $I_{\text{CS}}(\mathbf{x};\mathbf{t}) = I_{\text{CS}}(\mathbf{x};\mathbf{t}|y) + I_{\text{CS}}(y;\mathbf{t})$, in which $I_{\text{CS}}(\mathbf{x};\mathbf{t}|y)$ is also called the superfluous information, and $I_{\text{CS}}(y;\mathbf{t})$ the predictive information.

We evaluate the dependence between $I_{\text{CS}}(\mathbf{x};\mathbf{t})$ and the generalization gap, and the dependence between $I_{\text{CS}}(\mathbf{x};\mathbf{t}|y)$ and the generalization gap. Two kinds of dependence measures are used: Kendall's $\tau$ and maximal information coefficient (MIC)~\citep{reshef2011detecting}. For Kendall's $\tau$, values of $\tau$ close to $1$ indicate strong agreement of two rankings for samples in variables $x$ and $y$, that that is, if $x_i>x_j$, then $y_i>y_j$. Kendall’s $\tau$ matches our motivation well, since we would like to evaluate if a small value of $I_\text{CS}(\mathbf{x};\mathbf{t})$ (or $I_\text{CS}(\mathbf{x};\mathbf{t}|y)$) is likely to indicate a smaller generalization gap. Compared to Kendall’s $\tau$, MIC is able to capture more complex and nonlinear dependence relationships.

According to Table~\ref{tab:correlation} and Fig.~\ref{fig:correlation}, we can conclude that both $I_{\text{CS}}(\mathbf{x};\mathbf{t})$ and $I_{\text{CS}}(\mathbf{x};\mathbf{t}|y)$ have a positive correlation with empirical generalization error gap. This result is in line with~\citep{kawaguchi2023does,galloway2023bounding}, although these two works focus on classification setup.

\begin{figure}
\centering
     \begin{subfigure}[b]{0.45\textwidth}
         \centering
         \includegraphics[width=\textwidth]{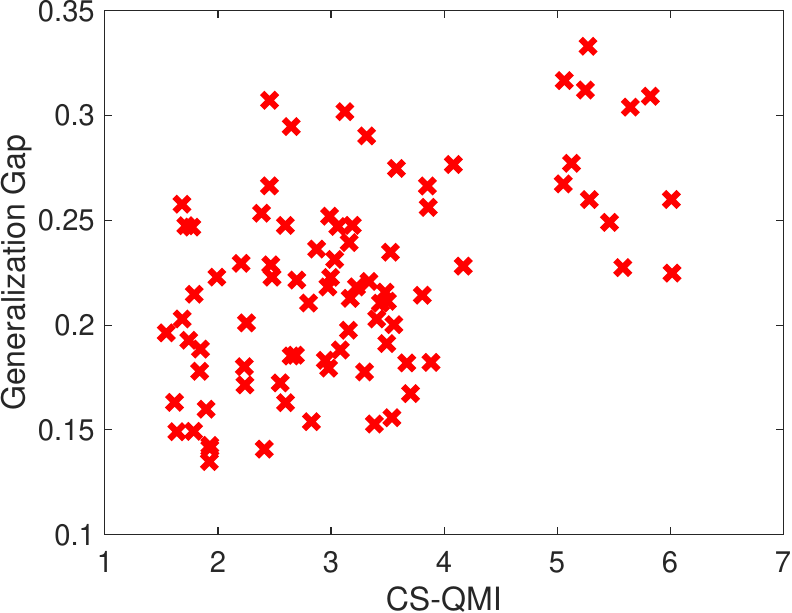}
     \end{subfigure}
     \hfill
     \begin{subfigure}[b]{0.45\textwidth}
         \centering
         \includegraphics[width=\textwidth]{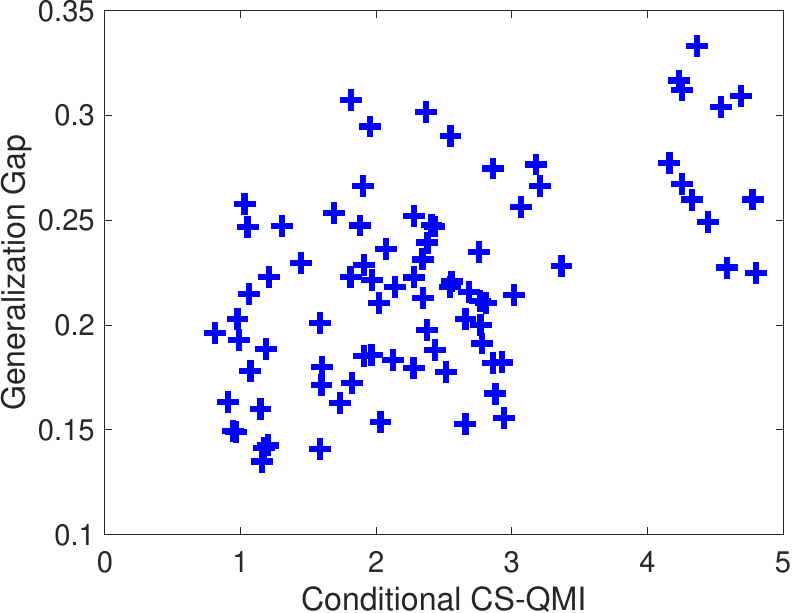}
     \end{subfigure}
     \hfill
     \begin{subfigure}[b]{0.45\textwidth}
         \centering
         \includegraphics[width=\textwidth]{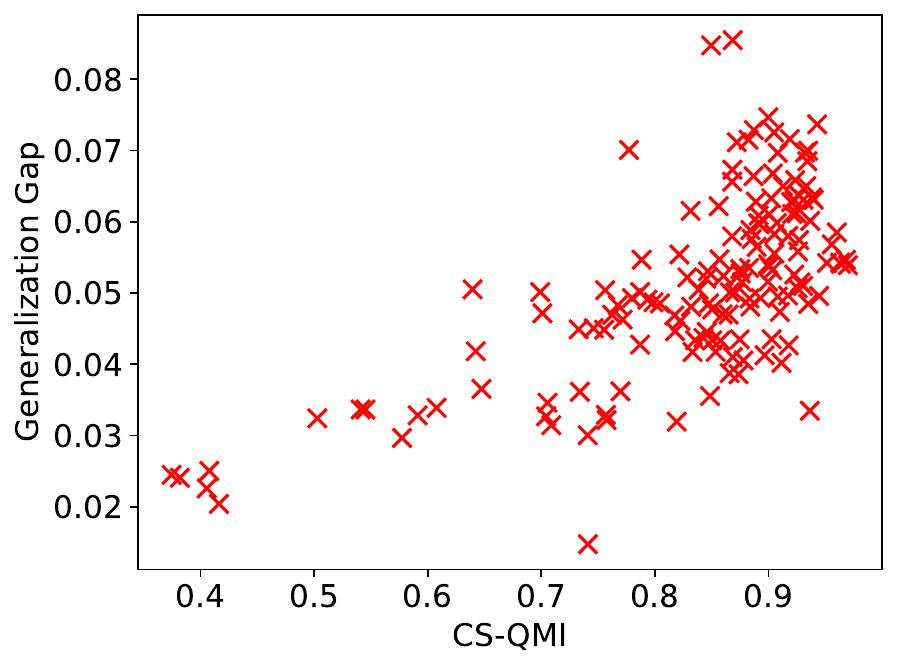}
     \end{subfigure}
     \hfill
     \begin{subfigure}[b]{0.45\textwidth}
         \centering
         \includegraphics[width=\textwidth]{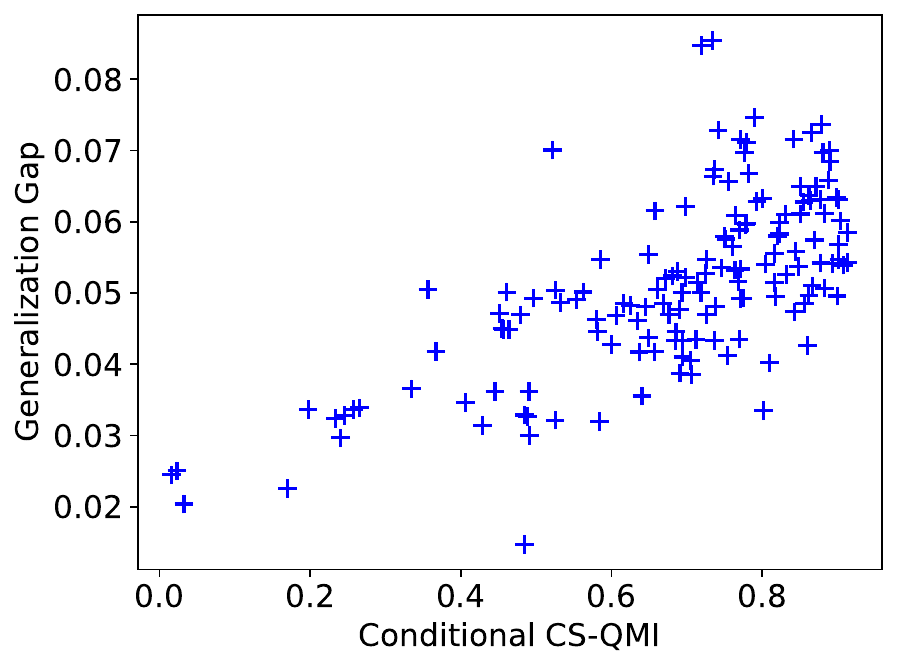}
     \end{subfigure}

        \caption{Scatter plot of $I_\text{CS}(\mathbf{x};\mathbf{t})$ (left) and $I_\text{CS}(\mathbf{x};\mathbf{t}|y)$ (right) with respect to the generalization gap in synthetic data (first row) and California housing (second row).}
        \label{fig:correlation}
\end{figure}


\subsection{Experimental Setup in Real-World Regression Datasets}\label{sec:real_setup}
We first provide more details on the used datasets in the main paper.

\textbf{California Housing\footnote{\url{https://scikit-learn.org/stable/modules/generated/sklearn.datasets.fetch_california_housing.html}}: }This dataset contains $20,640$ samples of 8 real number input variables like the longitude and latitude of the house. The output is the house price. A log-transformed house price was used as the target variable, and those 992 samples with a house price greater than $\$500, 000$ were dropped. The data were normalized with zero mean and unit variance and randomly split into $70\%$ training samples, $10\%$ validation samples, and $20\%$ test samples.

\textbf{Appliance Energy\footnote{\url{https://www.kaggle.com/datasets/loveall/appliances-energy-predictionl}}: }This dataset contains $12,630$ samples of appliance energy use in a low-energy building. Energy data was logged every $10$ minutes for about $4.5$ months. In the dataset, each record has $14$ features, such as air pressure, outside temperature and humidity, wind speed, visibility, dew point, energy use of light, and kitchen, laundry, and living room temperature and humidity. We select $80\%$ for training and $20\%$ for testing, normalize the data between $0$ and $1$ with MInMaxscaler.

\textbf{Beijing PM2.5\footnote{\url{https://www.kaggle.com/datasets/djhavera/beijing-pm25-data-data-set}}: }This dataset collected PM $2.5$ between Jan $1$st, $2010$ to Dec $31$st, $2014$ in Beijing. Each record in the dataset has $7$ features, including dew point, temperature, pressure, combined wind direction, wind speed, hours of snow and hours of rain. The dataset contains $41,757$ samples. We selected $80\%$ for training and $20\%$ for test.

\textbf{Bike Sharing\footnote{\url{https://archive.ics.uci.edu/ml/datasets/bike+sharing+dataset}}:} This dataset includes the hourly and daily count of rental bikes between $2011$ and $2012$ in the Capital Bikeshare system, along with weather and seasonal information. In this paper, we utilize the hourly count data and each record in the dataset includes the following features: holiday, weekday, workingday, weathersite, temperature, feeling temperature, wind speed, humidity count of casual users, and count of registered users. Consisting of $17, 379$ samples, the data was collected over two years, and can be partitioned by year and season. We use the first three seasons samples as training data and the forth season samples as test data.

\textbf{Rotation MNIST\footnote{\url{https://www.mathworks.com/help/deeplearning/ug/train-a-convolutional-neural-network-for-regression.html}}:} The rotation MNIST contains synthetic images of handwritten digits together with the corresponding angles (in degrees) by which each image is rotated.
The input image is of size $28\times28$, the output is in the range $[-\pi/4,\pi/4]$. The dataset consists of $10,000$ samples. We use $5,000$ for training and $5,000$ for test.

\textbf{UTKFace\footnote{\url{https://github.com/aicip/UTKFace}}:} For UTKFace, the input is grayscale face images with the size of $91\times 91$, and the output is the corresponding age of each face. The original dataset includes samples of people with an age ranging from $0$ to $116$ years old. It also includes additional personal information such as gender and ethnicity. In our study, we use $7,715$ samples for training and $1,159$ samples for test, the used dataset includes people aged in a range of $0$ to $80$ with no additional information.

For a fair comparison, we consider the same architecture in~\citep{kolchinsky2019nonlinear}. Specifically, for California Housing and Bike Sharing datasets, the encoder $f_{\textmd{enc}}$ is a $3$-layer fully-connected encoder with $128$ ReLU hidden units, and the decoder $g_{\textmd{dec}}$ is a fully-connected layer with $128$ ReLU units followed by an output layer with $1$ linear unit. For the Appliance Energy and Beijing PM2.5 datasets, we utilize the past $4$ days of data to predict the data of the next day. The decoder remains the same, while the encoder is a $3$-layer LSTM with $32$ hidden units followed by a fully-connected layer with $128$ units. The IB regularization is added to the output of the encoder. The backbone architecture for both rotation MNIST and UTKFace is VGG-16, rather than the basic fully-connected network or LSTM with only a few layers.

All datasets are normalized between $[0,1]$ with MinMaxscaler, and we set the kernel width $\sigma=1$ for CS-IB and HSIC-bottleneck, which is actually a common choice for HSIC literature~\citep{greenfeld2020robust}. In all experiments, we train networks with the Adam~\citep{kingma2014adam} optimizer for $100$ epochs and set the batch size to $128$.

In our implementation, we use the normalized CS-QMI motivated by the formulation of centered kernel alignment (CKA)~\citep{cortes2012algorithms} to guarantee the dependence value between $\mathbf{x}$ and $\mathbf{t}$ is bounded between $[0,1]$:
\begin{equation}
\tilde{I}_{\text{CS}}(\mathbf{x};\mathbf{t}) = \frac{I_{\text{CS}}(\mathbf{x};\mathbf{t})}{\sqrt{I_{\text{CS}}(\mathbf{x};\mathbf{x}) \cdot I_{\text{CS}}(\mathbf{t};\mathbf{t}) }}.
\end{equation}
This strategy is also used in HSIC-bottleneck~\citep{ma2020hsic,wang2021revisiting}.


For each competing method, the hyperparameters are selected as follows. We first choose the default values of $\beta$ (the balance parameter to adjust the importance of the compression term) and the learning rate $lr$ mentioned in its original paper. Then, we select hyperparameters within a certain range of these default values that can achieve the best performance in terms of RMSE. For VIB, we search within the range of $\beta \in [10^{-3}, 10^{-5}]$. For NIB, square-NIB, and exp-NIB, we search within the range of $\beta \in [10^{-2}, 10^{-5}]$. For HSIC-bottleneck, we search $\beta \in [10^{-2}, 10^{-5}]$. For our CS-IB, we found that the best performance was always achieved with $\beta$ between $10^{-2}$ to $10^{-3}$. Based on this, we sweep the $\beta$ within this range and select the best one. The learning rate range for all methods was set as $[10^{-3}, 10^{-4}]$. We train all methods for $100$ epochs on all datasets except for $PM2.5$, which requires around $200$ epochs of training until converge. All the hyperparameter tuning experiments are conducted on the validation set. In Table~\ref{tabel_hyperparameter}, we record all hyperparameters for each deep IB approach that achieve the best RMSE. We train models with such hyperparameter setting to evaluate their adversarial robustness performances, as shown in Table~\ref{table:robustness}.

\begin{table}[thb]
\centering
\caption{Hyperparameters for different IB approaches that achieve best RMSE on six real-world regression datasets.}
\label{tabel_hyperparameter}
\begin{adjustbox}{width=\textwidth}
\begin{tabular}{@{}c|c|c|c|c|c|c|c@{}}

\toprule
Dataset& param.&VIB&NIB&Square-NIB&exp-NIB&HSIC&CS-IB\\\midrule
\multirow{3}{*}{Housing}
                       & learning rate
                       & $1\times10^{-4}$
                       & $1\times10^{-3}$
                       & $1\times10^{-3}$
                       & $1\times10^{-3}$
                       & $1\times10^{-3}$
                       &$1\times10^{-4}$\\
                       & epochs
                       & $100$
                       & $100$
                       & $100$
                       & $100$
                       & $100$
                       & $100$\\
                       & $\beta$
                       & $1\times10^{-5}$
                       & $3\times10^{-2}$
                       & $1\times10^{-2}$
                       & $1\times10^{-2}$
                       & \tiny{$3\times10^{-3}$/$5\times10^{-3}$}
                       &$1\times10^{-3}$\\
                       \midrule

\multirow{3}{*}{Energy} & learning rate
                      & $1\times10^{-3}$
                      & $1\times10^{-4}$
                      & $1\times10^{-4}$
                      & $1\times10^{-4}$
                      & $1\times10^{-4}$
                      & $1\times10^{-4}$\\
                      & epochs
                      & $100$
                      & $100$
                      & $100$
                      & $100$
                      & $100$
                      & $100$\\
                      & $\beta$
                       & $1\times10^{-4}$
                       & $3\times10^{-4}$
                       & $2\times10^{-4}$
                       & $2\times10^{-4}$
                       & \tiny{$1\times10^{-3}$/$4\times10^{-2}$}
                       & $5\times10^{-2}$\\
                       \midrule
\multirow{3}{*}{PM2.5}
                      & learning rate
                      & $1\times10^{-3}$
                      & $1\times10^{-3}$
                      & $1\times10^{-3}$
                      & $1\times10^{-3}$
                      & $1\times10^{-4}$
                      & $1\times10^{-3}$\\
                      & epochs
                      & $200$
                      & $200$
                      & $200$
                      & $200$
                      & $200$
                      & $200$\\
                      & $\beta$
                       &  $5\times10^{-5}$
                       &  $1\times10^{-5}$
                       & $1\times10^{-5}$
                       &$3\times10^{-5}$
                       & \tiny{$1\times10^{-4}$/$8\times10^{-3}$}
                       & $5\times10^{-3}$\\
                       \midrule
\multirow{3}{*}{Bike}
                      & learning rate
                      &  $1\times10^{-4}$
                      &  $1\times10^{-4}$
                      & $1\times10^{-4}$
                      & $1\times10^{-4}$
                      &$1\times10^{-4}$
                      &$1\times10^{-3}$\\
                      & epochs
                      & $100$
                      & $100$
                      & $100$
                      & $100$
                      & $100$
                      & $100$\\& $\beta$
                       &  $3\times10^{-5}$
                       &  $5\times10^{-2}$
                       & $6\times10^{-3}$
                       &$2\times10^{-3}$
                       & \tiny{$1\times10^{-4}$/$4\times10^{-3}$}
                       & $1\times10^{-2}$\\\midrule
\multirow{3}{*}{Rotation MNIST }
                      & learning rate
                      &  $1\times10^{-3}$
                      &  $1\times10^{-3}$
                      & $1\times10^{-3}$
                      & $1\times10^{-3}$
                      &$1\times10^{-3}$
                      &$1\times10^{-3}$\\
                      & epochs
                      & $100$
                      & $100$
                      & $100$
                      & $100$
                      & $100$
                      & $100$\\& $\beta$
                       &  $1\times10^{-5}$
                       &  $1\times10^{-2}$
                       & $1\times10^{-2}$
                       &$1\times10^{-2}$
                       & \tiny{$1\times10^{-3}$/$4\times10^{-2}$}
                       & $1\times10^{-3}$\\\midrule
\multirow{3}{*}{UTKFace}
                      & learning rate
                      &  $1\times10^{-3}$
                      &  $1\times10^{-3}$
                      & $1\times10^{-3}$
                      & $1\times10^{-3}$
                      &$1\times10^{-4}$
                      &$1\times10^{-3}$\\
                      & epochs
                      & $100$
                      & $100$
                      & $100$
                      & $100$
                      & $100$
                      & $100$\\& $\beta$
                       &  $1\times10^{-5}$
                       &  $1\times10^{-2}$
                       & $1\times10^{-2}$
                       &$1\times10^{-5}$
                       & \tiny{$1\times10^{-3}$/$1\times10^{-5}$}
                       & $1\times10^{-2}$\\\midrule

\end{tabular}
\end{adjustbox}
\end{table}

\subsection{Adversarial Robustness}\label{sec:adversarial_appendix}
We further evaluate the adversarial robustness by comparing the behaviors of different IB approaches under FGSM attack with different perturbations $\epsilon= [0.1,0.2,0.3,0.4,0.5,0.6]$. We use the same parameter configuration in Table~\ref{tabel_hyperparameter}, and only add adversarial attack on the test set. Our CS-IB outperforms other IB approaches with different perturbation strengths on all datasets as shown in Fig.~\ref{fig:attacks}.


\begin{figure}
     \centering
     \begin{subfigure}[b]{0.48\textwidth}
         \centering
         \includegraphics[width=\textwidth]{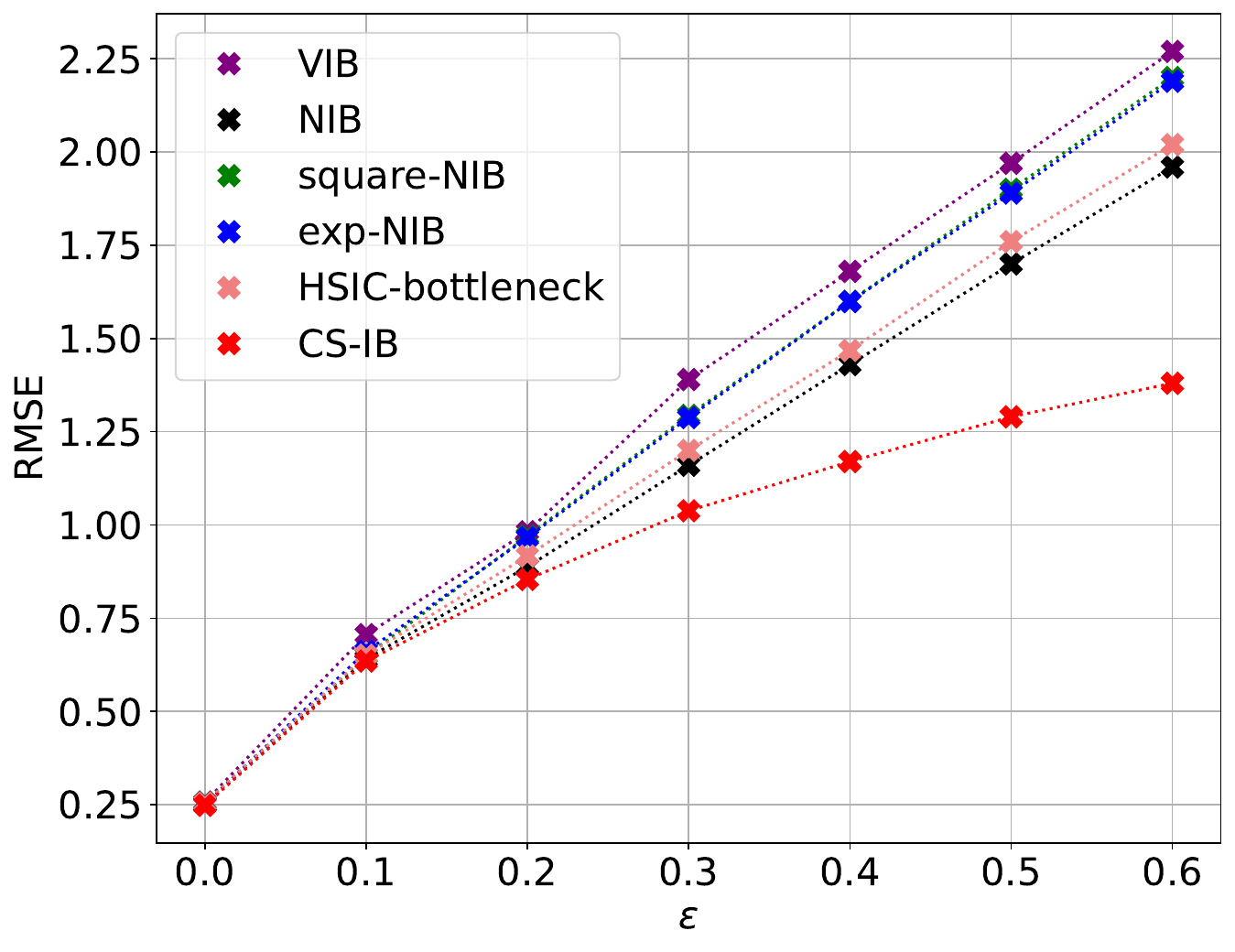}
         \caption{California Housing}
         \label{IB_housing}
     \end{subfigure}
     \hfill
     \begin{subfigure}[b]{0.48\textwidth}
         \centering
         \includegraphics[width=\textwidth]{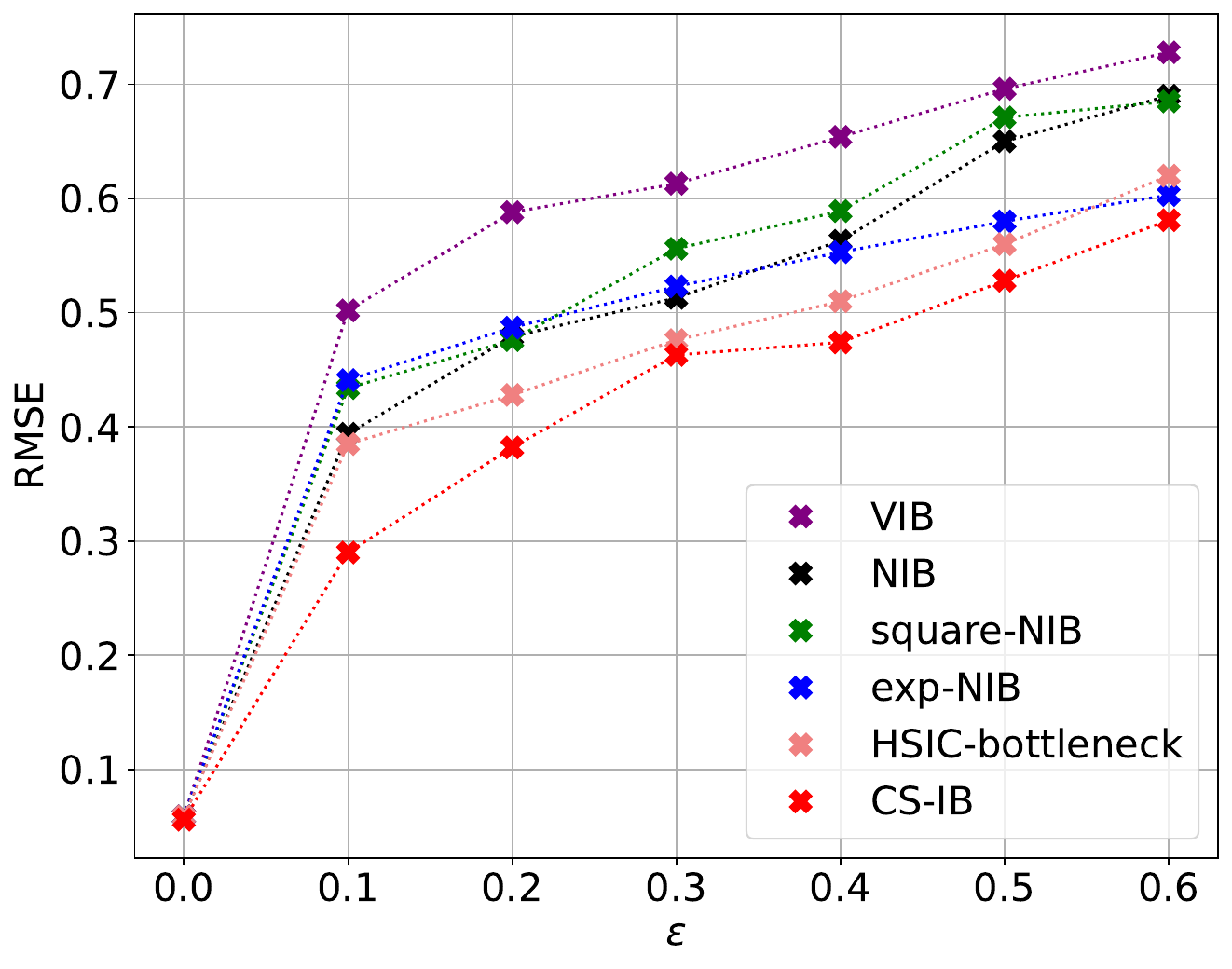}
         \caption{Appliance Energy}
         \label{IB_AE}
     \end{subfigure}
     \hfill
     \begin{subfigure}[b]{0.48\textwidth}
         \centering
         \includegraphics[width=\textwidth]{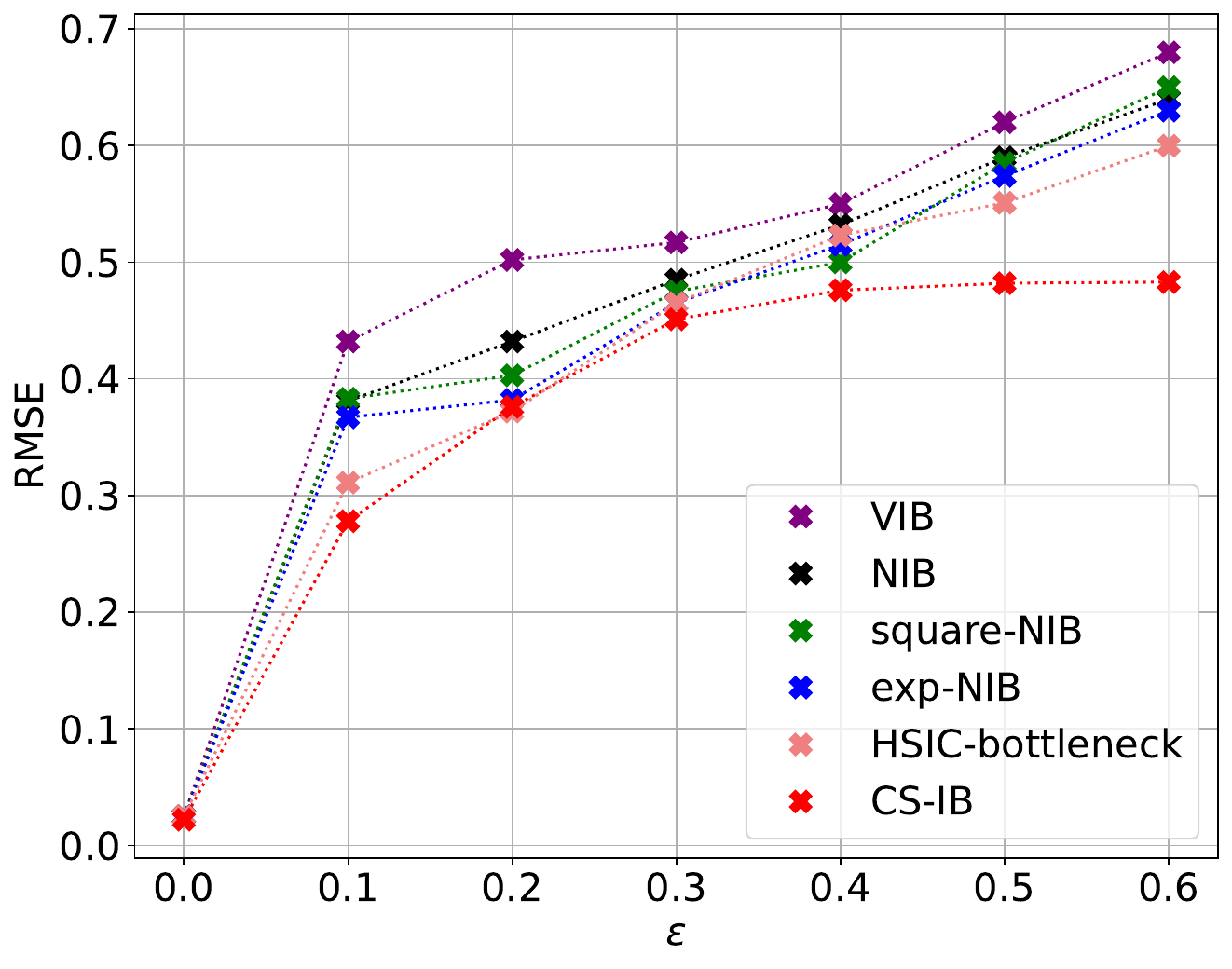}
         \caption{Beijing PM2.5}
         \label{IB_pm2.5}
     \end{subfigure}
     \hfill
     \begin{subfigure}[b]{0.48\textwidth}
         \centering
         \includegraphics[width=\textwidth]{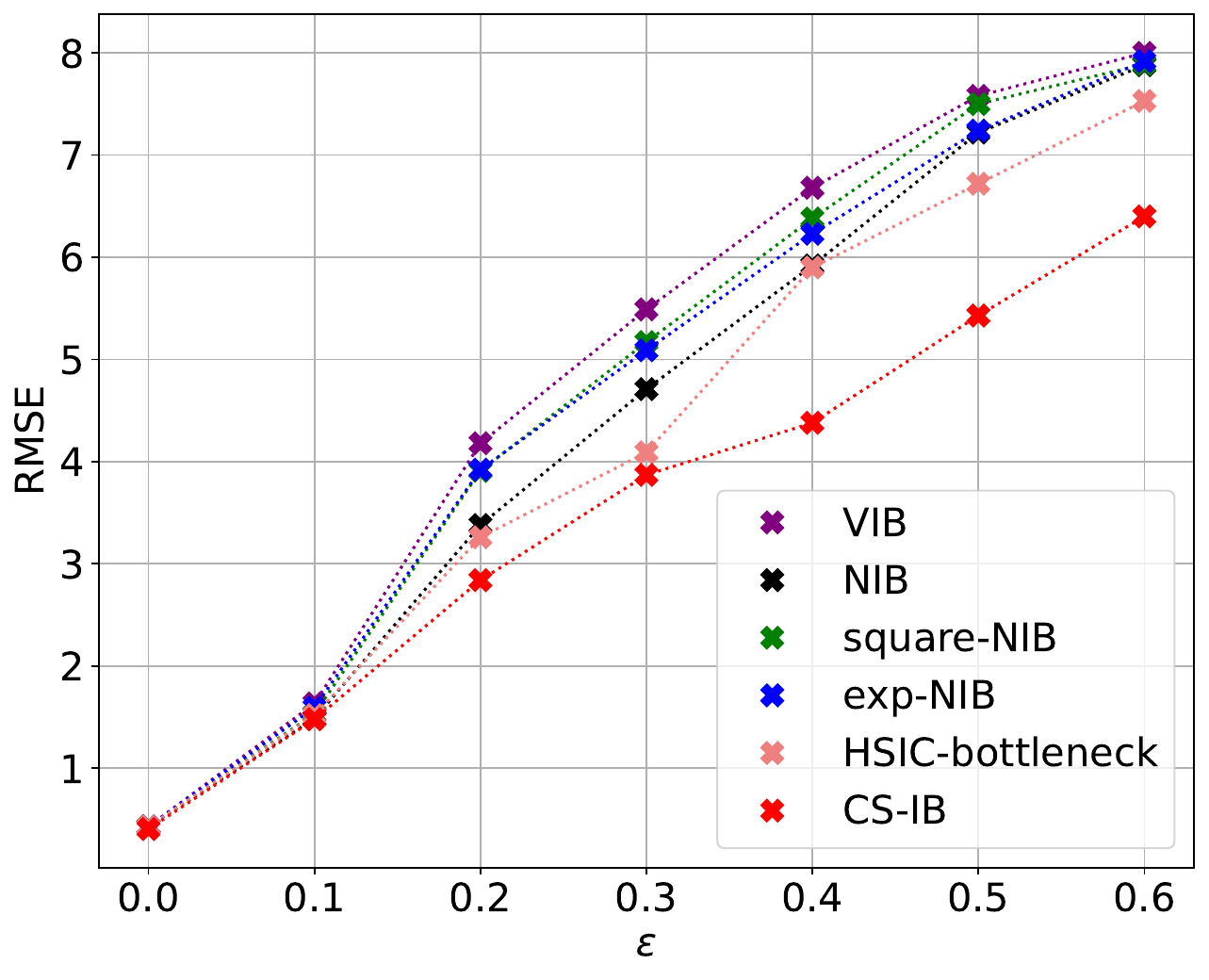}
         \caption{Bike Sharing}
         \label{IB_bike}
     \end{subfigure}
       \hfill
     \begin{subfigure}[b]{0.48\textwidth}
         \centering
         \includegraphics[width=\textwidth]{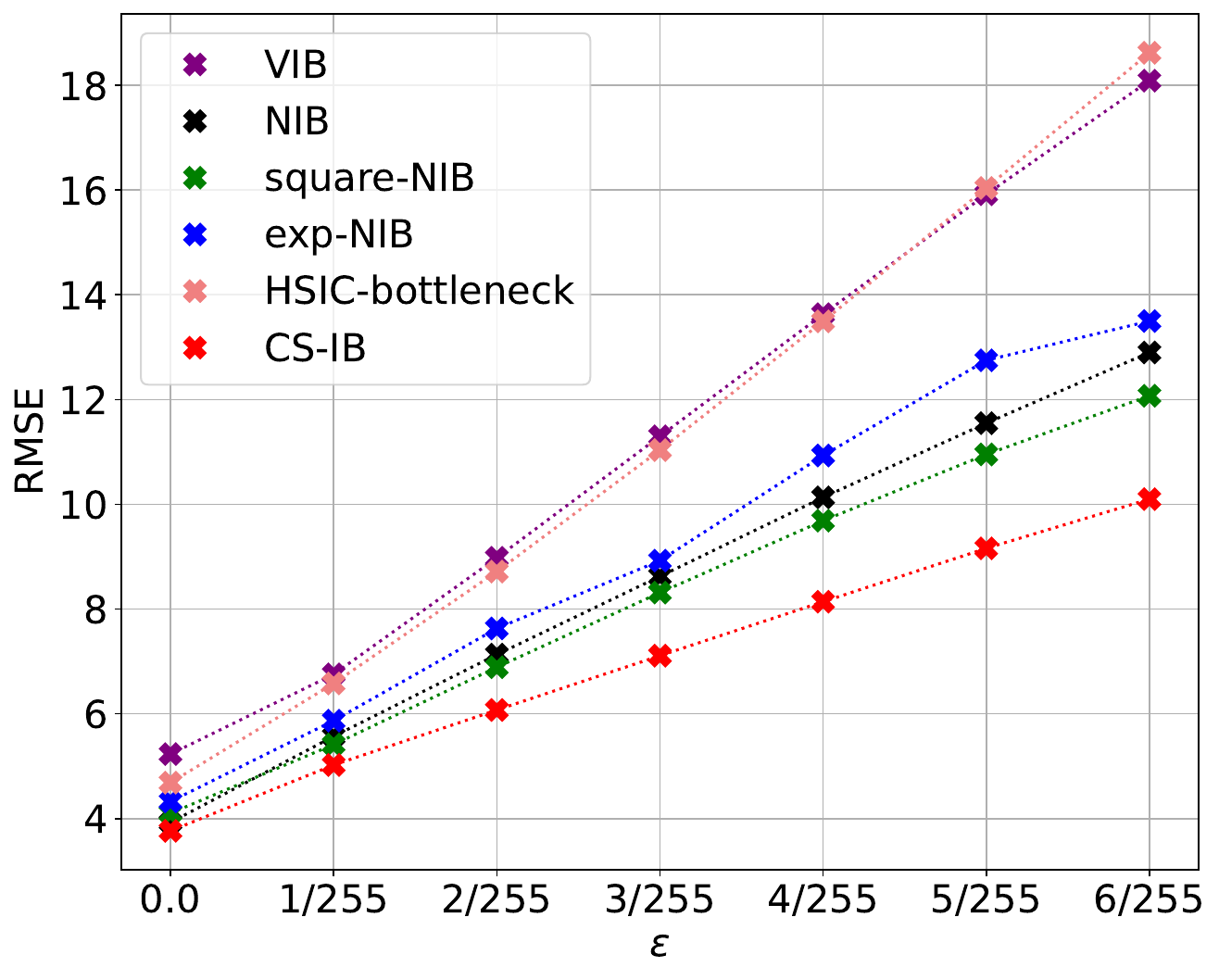}
         \caption{Rotation MNIST}
         \label{IB_RM}
     \end{subfigure}
       \hfill
     \begin{subfigure}[b]{0.48\textwidth}
         \centering
         \includegraphics[width=\textwidth]{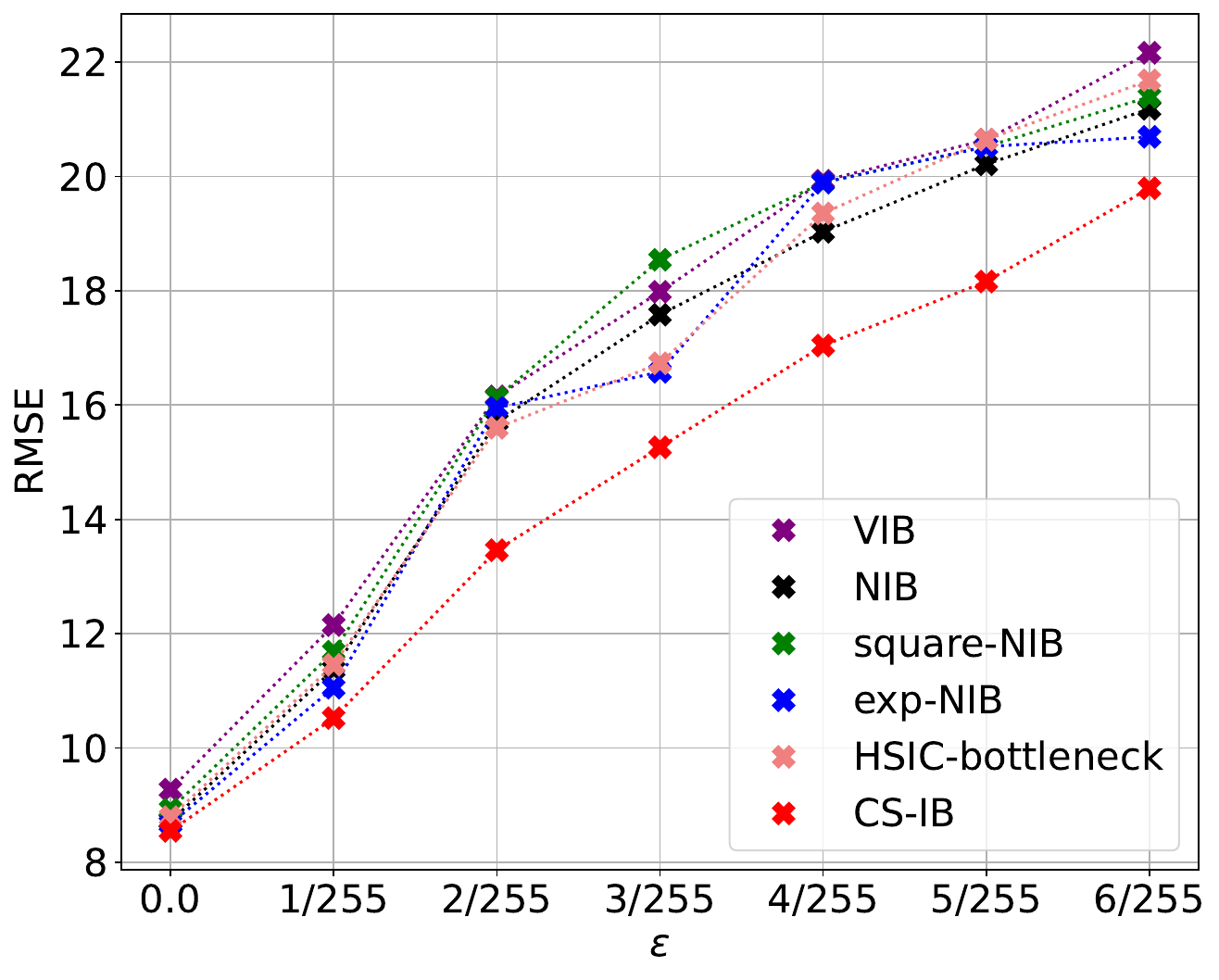}
         \caption{UTKFace}
         \label{IB_Face}
     \end{subfigure}
        \caption{Test RMSE under FGSM attack with different $\epsilon$ on six regression datasets. Our CS-IB is consistently better that other competing approaches.}
        \label{fig:attacks}
\end{figure}

\subsection{Additional Results of Section~\ref{sec:4.1} and an Ablation Study}\label{sec:more_results}

In Tables~\ref{tab:full_results1} and \ref{tab:full_results2}, we complete Table~\ref{tab:full_results} and provide the standard deviation (std) for each method over $5$ independent runs with different network initialization. In Table~\ref{tab:best}, we additionally provide the best performance achieved with $r\neq 0$ (i.e., $\beta >0$) for different deep IB approaches.

We also conduct an ablation study on California Housing dataset to investigate the effectiveness of combining our prediction loss (the conditional CS divergence $D_{\text{CS}}(p(y|\mathbf{x});q_\theta(\hat{y}|\mathbf{x}))$ in Eq.~(\ref{eq:CS_ext1})) with $I(\mathbf{x};\mathbf{t})$ regularization that is calculated by mutual information estimators in other IB approaches. For instance, ``CS-div+VIB" refers to the combination of conditional CS divergence with $I(\mathbf{x};\mathbf{t})$ calculated by variational lower bound. In Table~\ref{tab:ablation_study_cs_div}, we observe that replacing MSE with our conditional CS divergence $D_{\text{CS}}(p(y|\mathbf{x});q_\theta(\hat{y}|\mathbf{x}))$ improves the performance for all competing IB approaches. This result
indicates that our conditional CS divergence is more helpful than MSE in extracting useful information from $\mathbf{x}$ to predict $y$.


\subsection{Comparing CS Divergence with Variational KL Divergence}\label{sec:toy}

We aim to verify the advantage of CS divergence over variational KL divergence when we want to approximate an unknown distribution $p(\mathbf{x})$ with another distribution $q(\mathbf{x})$.


Specifically, we assume that $p(\mathbf{x})$ as the true distribution is consists of two Gaussians with mean vectors $[4,4]^T$ and $[-4,-4]^T$ and covariance matrix $\mathbf{I}_2$. Suppose we approximate $p(\mathbf{x})$ with a single Gaussian $q_\phi(\mathbf{x})$ by minimizing the KL divergence $D_{\text{KL}}(q_\phi(\mathbf{x});p(\mathbf{x}))$ (the reverse KL divergence is also in different variational IB approaches) with respect to $q$’s variational parameters $\phi$. We initialize the variational parameters as:
\begin{equation}
    \phi_\mu = [0,0]^T \quad \phi_\Sigma = \mathbf{I}_2,
\end{equation}
and optimize the following objective with gradient descent:
\begin{equation}
    D_{\text{KL}}(q_\phi(\mathbf{x});p(\mathbf{x})) =  - H(q_\phi(\mathbf{x})) - \mathbb{E}_{q_\phi}(\log p(\mathbf{x})).
\end{equation}

As can be seen in Fig.~\ref{cs_kl_comparison}(b), the minimization forces $q_\phi(\mathbf{x})$ to be zero where $p(\mathbf{x})$ is zero and hence makes it concentrate on one of the modes. This phenomenon is called mode-seeking.

Note that, a single Gaussian is a common choice for variational inference with KL divergence, this is because KL divergence does not have closed-form expression for mixture-of-Gaussians (MoG). We would like to point out here that the CS divergence has closed-form expression for MoG~\citep{kampa2011closed}, which may further strength its utility. Although this property is not used in our paper.


Now, we consider using CS divergence to approximate $p(\mathbf{x})$ with $q(\mathbf{x})$. To do this, we sample $N_1=400$ samples from $p(\mathbf{x})$ and initialize $q(\mathbf{x})$ by sampling $N_2=200$ samples from a standard Gaussian. Then, we optimize samples in $q(\mathbf{x})$ by minimizing the CS divergence. Because CS divergence is estimated in a non-parametric way that does not make any distributional assumptions, it can fit two Gaussians perfectly as shown in Fig.~\ref{cs_kl_comparison}(d).




\begin{figure}
\centering
     \begin{subfigure}[b]{0.45\textwidth}
         \centering
         \includegraphics[width=\textwidth]{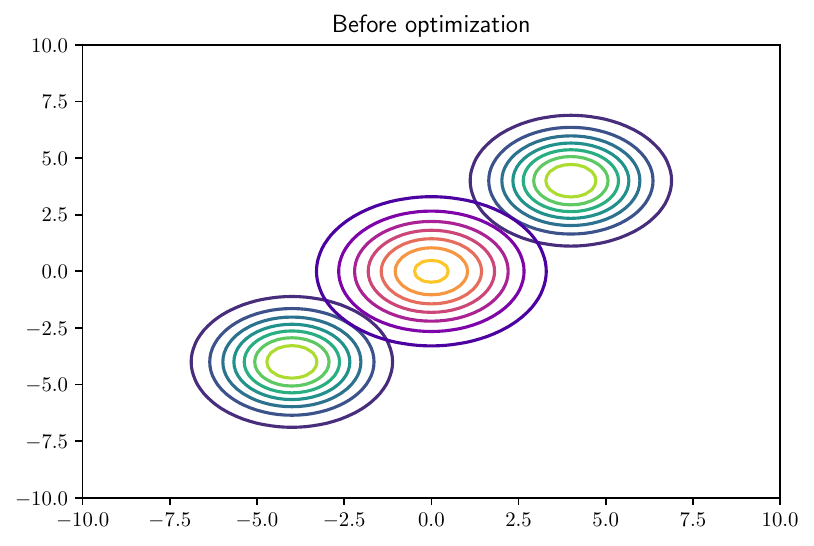}
         \caption{Before optimization.}
     \end{subfigure}
     \hfill
     \begin{subfigure}[b]{0.45\textwidth}
         \centering
         \includegraphics[width=\textwidth]{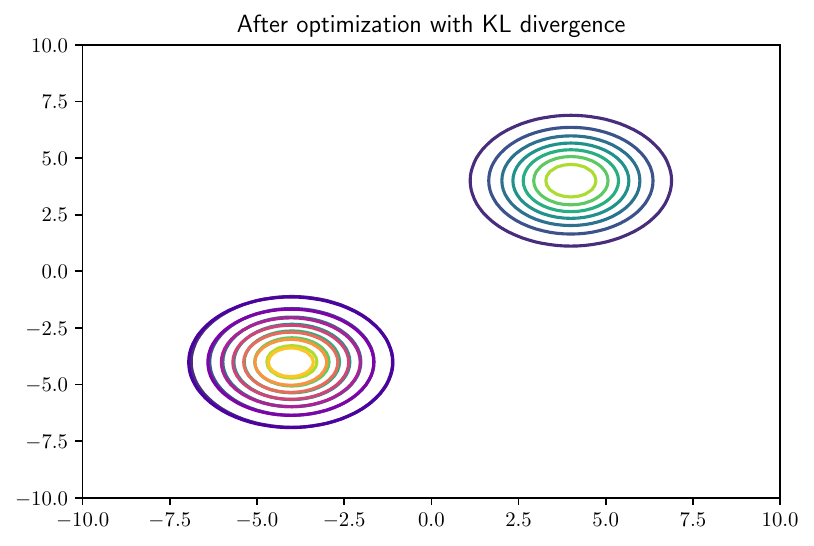}
        \caption{After optimization with variational KL divergence.}
     \end{subfigure}

       \begin{subfigure}[b]{0.45\textwidth}
         \centering
         \includegraphics[width=\textwidth]{figures/two_gaussian_before_cs.pdf}
         \caption{Before Optimization.}
     \end{subfigure}
     \hfill
     \begin{subfigure}[b]{0.45\textwidth}
         \centering
         \includegraphics[width=\textwidth]{figures/two_gaussian_after_cs.pdf}
        \caption{After optimization with CS divergence.}
     \end{subfigure}

        \caption{Comparison between CS-divergence and variational KL divergence on the simulated dataset. }
        \label{cs_kl_comparison}
\end{figure}

\section{Limitations and Future Work}\label{sec:limitation}

Given the promising results demonstrating the usefulness of CS-IB, it also has limitations.



First, we still would like to mention a limitation of CS divergence with respect to the KL divergence. That is, we did not find a dual representation for the CS divergence. The dual representation of KL divergence, such as the well-known Donsker-Varadhan representation~\citep{donsker1983asymptotic}, enables the use of neural networks to estimate the divergence itself, which significantly improves estimation accuracy in high-dimensional space. A notable example is the mutual information neural estimator (MINE)~\citep{belghazi2018mutual}.



Second, although kernel density estimator (KDE) offers elegant expressions for both $I(\mathbf{x};\mathbf{t})$ and $D(p(y|\mathbf{x});q_\theta(\hat{y}|\mathbf{x}))$, its performance depends heavily on a proper choice of kernel width $\sigma$. In this paper, we normalized our data and observed that $\sigma=1$ is always a reliable heuristic. We do observe that our estimators provide consistent performance gain in a reasonable range of kernel size (e.g., $1-3$), as illustrated in Fig.~\ref{fig:kernel_size}.


There are multiple avenues for future work:



1) We would like to mention another two intriguing mathematical properties of CS divergence that both KL divergence and traditional R\'enyi's divergence do not have.
\begin{itemize}
    \item CS divergence has closed-form expression for mixture-of-Gaussians (MoG)~\citep{kampa2011closed}.
    \item CS divergence is invariant to scaling. That is, given $\lambda_1,\lambda_2>0$, we always have:
    \begin{equation}
        D_{\text{CS}} (p;q) = D_{\text{CS}} (\lambda_1 p;\lambda_2 q).
    \end{equation}
\end{itemize}
Although both properties are not used in this paper. They should be beneficial to other statistical or machine learning tasks. For example, the first property has been leveraged in \citep{tran2022cauchy} to train variational autoencoders by replacing the single Gaussian prior distribution with MoG; whereas the second property has recently been used in \citep{jenssen2024map} to project and visualize high-dimensional data.

2) We would like to extend CS-IB framework to other types of data, such as graphs. Some initial works have been done.

Here, we present our initial study to predict the age of patients based on their brain functional MRI (fMRI) data with a graph neural network (GNN), but leave a comprehensive and in-depth investigation as future work.


To this end, we rely on the brain information bottleneck (BrainIB) framework~\citep{liu2023towards} framework as shown in Fig.~\ref{fig:brain_IB_framework}. We use the original code from its authors\footnote{\url{https://github.com/SJYuCNEL/brain-and-Information-Bottleneck}} but replace the graph encoder with a graph transformer network~\citep{shi2021masked}. We add the information bottleneck regularization on the extracted graph representations, i.e., $g$ and $g_{\text{sub}}$; and train the whole network by the following objective:
\begin{equation}
\min D_{\text{CS}}(p(y|\mathbf{g});p(\hat{y}|\mathbf{g})) + \beta I_{\text{CS}}(\mathbf{g};\mathbf{g}_{\text{sub}}).
\end{equation}

For other deep IB approaches, the objective is simply:
\begin{equation}
    \min \mathbb{E}(\|y-\hat{y}\|_2^2) + \beta I(\mathbf{g};\mathbf{g}_{\text{sub}}).
\end{equation}

We train all models on the Autism Brain Imaging Data Exchange (ABIDE) dataset\footnote{\url{http://fcon_1000.projects.nitrc.org/indi/abide/abide_I.html}}, which contains the fMRI data from $1,028$ patients (ages $7$-$64$ years, median $14.7$). We split the data with 60\% training, 20\% validation, 20\% test, and normalize the age between $0$ and $1$. For all IB approaches, we choose $\beta$ from four values: $0.0001$, $0.001$, $0.01$, and $0.1$, and select the one that achieves the best prediction accuracy. Fig.~\ref{fig:brain_IB} demonstrates the predicted age with respect to true age for MSE loss only, NIB, HSIC-bottleneck, and our CS-IB. Table~\ref{brain_table} records the quantitative evaluation results, in which the ``prediction loss" indicates using $D_{\text{CS}}(p(y|\mathbf{g});p(\hat{y}|\mathbf{g}))$ only (as a surrogate of MSE).

\begin{figure}[htbp]
		\centering
		\includegraphics[width=0.9\linewidth]{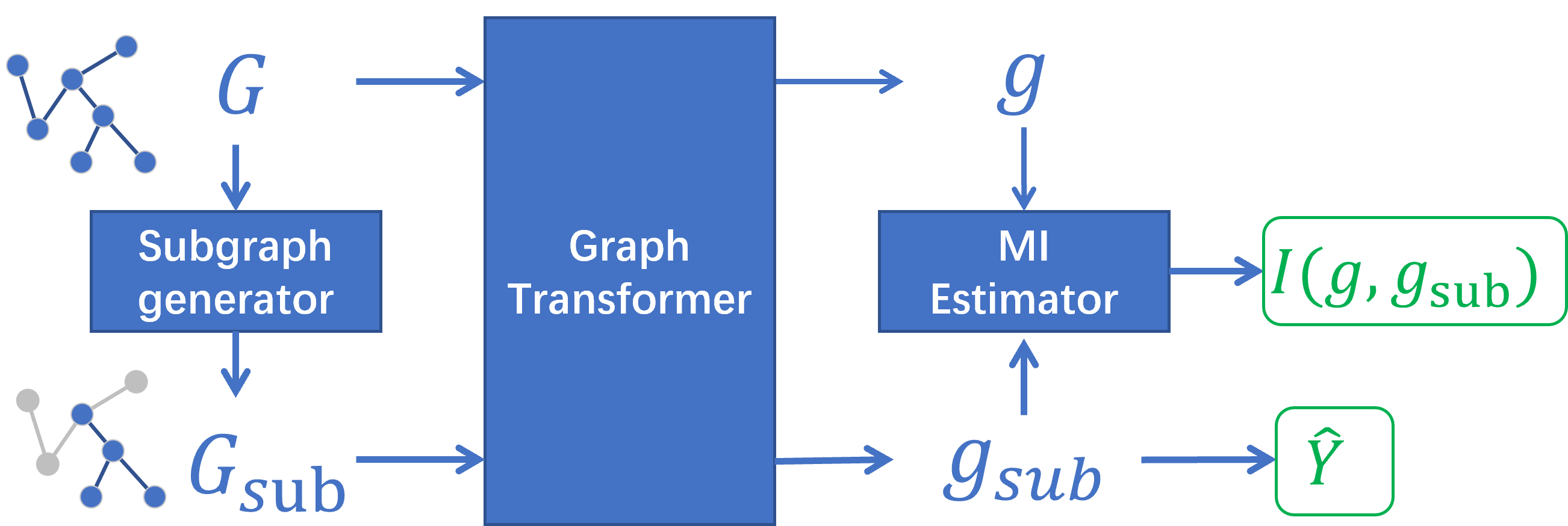}
	\caption{Subgraph information bottleneck framework for age prediction. $G$ is the original brain network, $G_{\text{sub}}$ is the identified brain sub-network that reduces irrelevant edge information for prediction. $g$ and $g_{\text{sub}}$ refers to graph vector representations corresponding to $G$ and $G_{\text{sub}}$, respectively, which are learned from a joint graph encoder with shared parameters. $g_{\text{sub}}$ is used for prediction.}
	\label{fig:brain_IB_framework}
\end{figure}

\begin{figure}
\centering
     \begin{subfigure}[b]{0.45\textwidth}
         \centering
         \includegraphics[width=\textwidth]{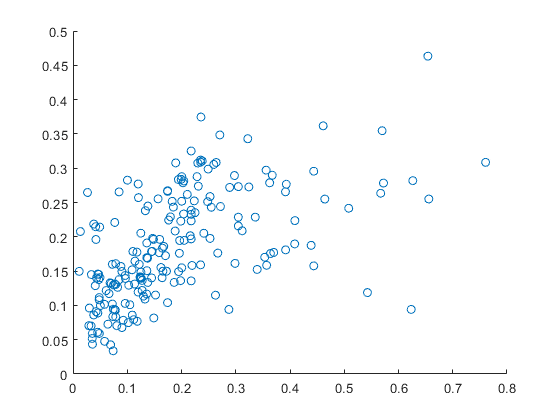}
         \caption{MSE}
     \end{subfigure}
     \hfill
     \begin{subfigure}[b]{0.45\textwidth}
         \centering
         \includegraphics[width=\textwidth]{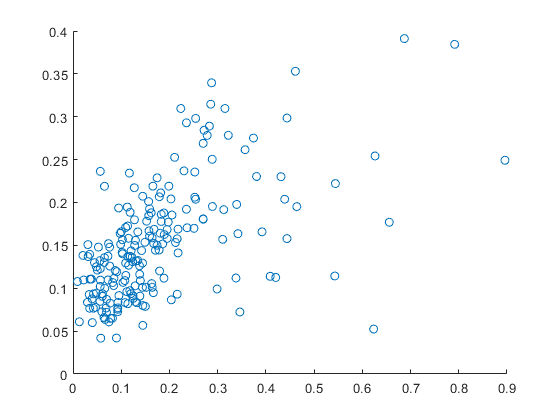}
        \caption{NIB}
     \end{subfigure}
     \hfill
     \begin{subfigure}[b]{0.45\textwidth}
         \centering
         \includegraphics[width=\textwidth]{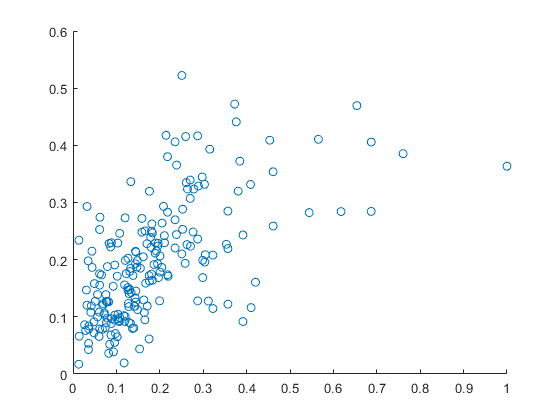}
        \caption{HSIC-bottleneck}
     \end{subfigure}
     \hfill
     \begin{subfigure}[b]{0.45\textwidth}
         \centering
         \includegraphics[width=\textwidth]{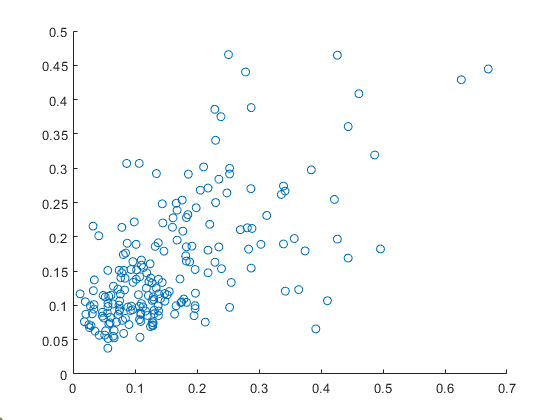}
        \caption{CS-IB}
     \end{subfigure}
        \caption{Scatter plot of predicted age (y-axis) with respect to true age (x-age) for each patient in the test set.}
        \label{fig:brain_IB}
\end{figure}

\begin{table}[ht]
\centering
\caption{The mean absolute error (MAE) and pearson correlation coefficient (PCC) between predicted age and true age.}
 \begin{threeparttable}[t]
\begin{tabular}[t]{lcc}
\toprule
\bf{Method}&\bf{MAE}&\bf{PCC}\\
\midrule
MSE &0.079&0.564\\
Prediction Loss &0.067&0.571\\
NIB &0.072&0.573\\
HSIC-bottleneck &0.079&0.615\\
\midrule
CS-IB&\textbf{0.067}&\textbf{0.631}\\
\midrule
\end{tabular}
\end{threeparttable}%
\label{brain_table}
\end{table}%




\begin{table}[ht]
\centering

\caption{Ablation study of conditional CS divergence on California Housing dataset. The best performance is highlighted. By replacing MSE with our conditional CS divergence in Eq.~(\ref{eq:CS_ext1}), all models have a performance gain.}
 \begin{threeparttable}[t]
\begin{tabular}[t]{lc}
\toprule
\bf{Method}&\bf{Test RMSE}\\
\midrule
VIB &0.257\\
CS-div+VIB &0.255\\
\midrule
NIB &0.251\\
CS-div+NIB &0.248\\
\midrule
Square-NIB &0.255\\
CS-div+Square-NIB &0.251\\
\midrule
Exp-NIB &0.253\\
CS-div+Exp-NIB &0.250\\
\midrule
HSIC-bottleneck &0.254\\
CS-div+HSIC &0.252\\
\midrule
CS-IB&\textbf{0.244}\\
\midrule
\end{tabular}
\end{threeparttable}%
\label{tab:ablation_study_cs_div}
\end{table}%

\begin{table}[ht]
\centering

\caption{The best performance achieved with $r\neq 0$ (i.e., $\beta >0$) for different deep IB approaches. }
 \begin{threeparttable}[t]
\begin{tabular}[t]{lcccccc}
\toprule
\bf{Method}&Housing&Energy&PM2.5&Bike&Rotation MNIST&UTKFace\\
\midrule
VIB &0.257&0.063&0.024&0.421&5.213&9.274\\
NIB &0.251&0.058&0.021&0.414&3.926&8.712\\
Square-NIB &0.255&0.060&0.023&0.420&4.102&8.673\\
Exp-NIB &0.253&0.058&0.024&0.423&4.325&8.927\\
HSIC-bottleneck &0.254&0.062&0.029&0.417&4.685&8.793\\
CS-IB&\textbf{0.244}&\textbf{0.054}&\textbf{0.020}&\textbf{0.392}&\textbf{3.765}&\textbf{8.552}\\
\midrule
\end{tabular}
\end{threeparttable}%
\label{tab:best}
\end{table}%



\begin{table*}[ht]
\caption{RMSE for different deep IB approaches on California Housing, Appliance Energy, and Beijing PM2.5 with compression ratio $r=0$ and $r=0.5$. When $r=0$, CS-IB uses prediction term $D_{\text{CS}}(p(y|\mathbf{x});q_\theta(\hat{y}|\mathbf{x}))$ in Eq.~(\ref{eq:CS_ext1}), whereas others use MSE.}
\label{tab:full_results1}
\centering
\begin{adjustbox}{width=\textwidth}
\begin{tabular}{l|cc|cc|cc}
\toprule
\multirow{2}{*}{Model} &\multicolumn{2}{c|}{Housing} & \multicolumn{2}{c|}{Energy} & \multicolumn{2}{c}{PM2.5}\\
\cline{2-7}
& 0 & 0.5 &  0 & 0.5 &  0 & 0.5 \\
\hline
VIB

&$0.258\pm$\footnotesize{0.002}&$0.347\pm$\footnotesize{0.015}
&$0.059\pm$\footnotesize{0.004}&$0.071\pm$\footnotesize{0.010}
&$0.025\pm$\footnotesize{0.003}&$0.038\pm$\footnotesize{0.008}

         \\
NIB

&$0.258\pm$\footnotesize{0.002}&$0.267\pm$\footnotesize{0.010}
&$0.059\pm$\footnotesize{0.004}&$0.060\pm$\footnotesize{0.008}
&$0.025\pm$\footnotesize{0.003}&$0.034\pm$\footnotesize{0.005}

       \\
Square-NIB

&$0.258\pm$\footnotesize{0.002}&$0.293\pm$\footnotesize{0.008}
&$0.059\pm$\footnotesize{0.004}&$0.063\pm$\footnotesize{0.005}
&$0.025\pm$\footnotesize{0.003}&$0.028\pm$\footnotesize{0.004}

         \\
Exp-NIB

&$0.258\pm$\footnotesize{0.002}&$0.287\pm$\footnotesize{0.010}
&$0.059\pm$\footnotesize{0.004}&$0.061\pm$\footnotesize{0.006}
&$0.025\pm$\footnotesize{0.003}&$0.030\pm$\footnotesize{0.006}

     \\
HSIC-bottlenck

&$0.258\pm$\footnotesize{0.002}&$0.371\pm$\footnotesize{0.006}
&$0.059\pm$\footnotesize{0.004}&$0.065\pm$\footnotesize{0.005}
&$0.025\pm$\footnotesize{0.003}&$0.031\pm$\footnotesize{0.008}

         \\
\hline
CS-IB

&$\textbf{0.251}\pm$\footnotesize{0.001}&$\textbf{0.245}\pm$\footnotesize{0.010}
&$\textbf{0.056}\pm$\footnotesize{0.002}&$\textbf{0.058}\pm$\footnotesize{0.005}
&$\textbf{0.022}\pm$\footnotesize{0.002}&$\textbf{0.027}\pm$\footnotesize{0.005}

     \\
\bottomrule
\end{tabular}
\end{adjustbox}
\end{table*}

\begin{table*}[ht]
\caption{RMSE for different deep IB approaches on Bike Sharing, Rotation MNIST, and UTKFace with compression ratio $r=0$ and $r=0.5$. When $r=0$, CS-IB uses prediction term $D_{\text{CS}}(p(y|\mathbf{x});q_\theta(\hat{y}|\mathbf{x}))$ in Eq.~(\ref{eq:CS_ext1}), whereas others use MSE.}
\label{tab:full_results2}
\centering
\begin{adjustbox}{width=\textwidth}
\begin{tabular}{l|cc|cc|cc}
\toprule
\multirow{2}{*}{Model} & \multicolumn{2}{c|}{Bike}& \multicolumn{2}{c|}{Rotation MNIST}& \multicolumn{2}{c}{UTKFace}\\
\cline{2-7}
&0 & 0.5&  0 & 0.5 &  0 & 0.5\\
\hline
VIB

&$0.428\pm$\footnotesize{0.005}&$0.523\pm$\footnotesize{0.010}
&$4.351\pm$\footnotesize{0.025}&$5.358\pm$\footnotesize{0.030}
&$8.870\pm$\footnotesize{0.050}&$9.258\pm$\footnotesize{0.080}
         \\
NIB

&$0.428\pm$\footnotesize{0.005}&$0.435\pm$\footnotesize{0.008}
&$4.351\pm$\footnotesize{0.025}&$4.102\pm$\footnotesize{0.020}
&$8.870\pm$\footnotesize{0.050}&$8.756\pm$\footnotesize{0.060}
       \\
Square-NIB

&$0.428\pm$\footnotesize{0.005}&$0.447\pm$\footnotesize{0.006}
&$4.351\pm$\footnotesize{0.025}&$4.257\pm$\footnotesize{0.010}
&$8.870\pm$\footnotesize{0.050}&$8.712\pm$\footnotesize{0.050}
         \\
Exp-NIB

&$0.428\pm$\footnotesize{0.005}&$0.458\pm$\footnotesize{0.005}
&$4.351\pm$\footnotesize{0.025}&$4.285\pm$\footnotesize{0.015}
&$8.870\pm$\footnotesize{0.050}&$8.917\pm$\footnotesize{0.050}
     \\
HSIC-bottlenck

&$0.428\pm$\footnotesize{0.005}&$0.451\pm$\footnotesize{0.007}
&$4.351\pm$\footnotesize{0.025}&$4.573\pm$\footnotesize{0.055}
&$8.870\pm$\footnotesize{0.050}&$8.852\pm$\footnotesize{0.055}
         \\
\hline
CS-IB

&$\textbf{0.404}\pm$\footnotesize{0.004}&$\textbf{0.412}\pm$\footnotesize{0.005}
&$\textbf{4.165}\pm$\footnotesize{0.015}&$\textbf{3.930}\pm$\footnotesize{0.020}
&$\textbf{8.702}\pm$\footnotesize{0.035}&$\textbf{8.655}\pm$\footnotesize{0.043}
     \\
\bottomrule
\end{tabular}
\end{adjustbox}
\end{table*}

\begin{figure}[htbp]
		\centering
		\includegraphics[width=0.7\linewidth]{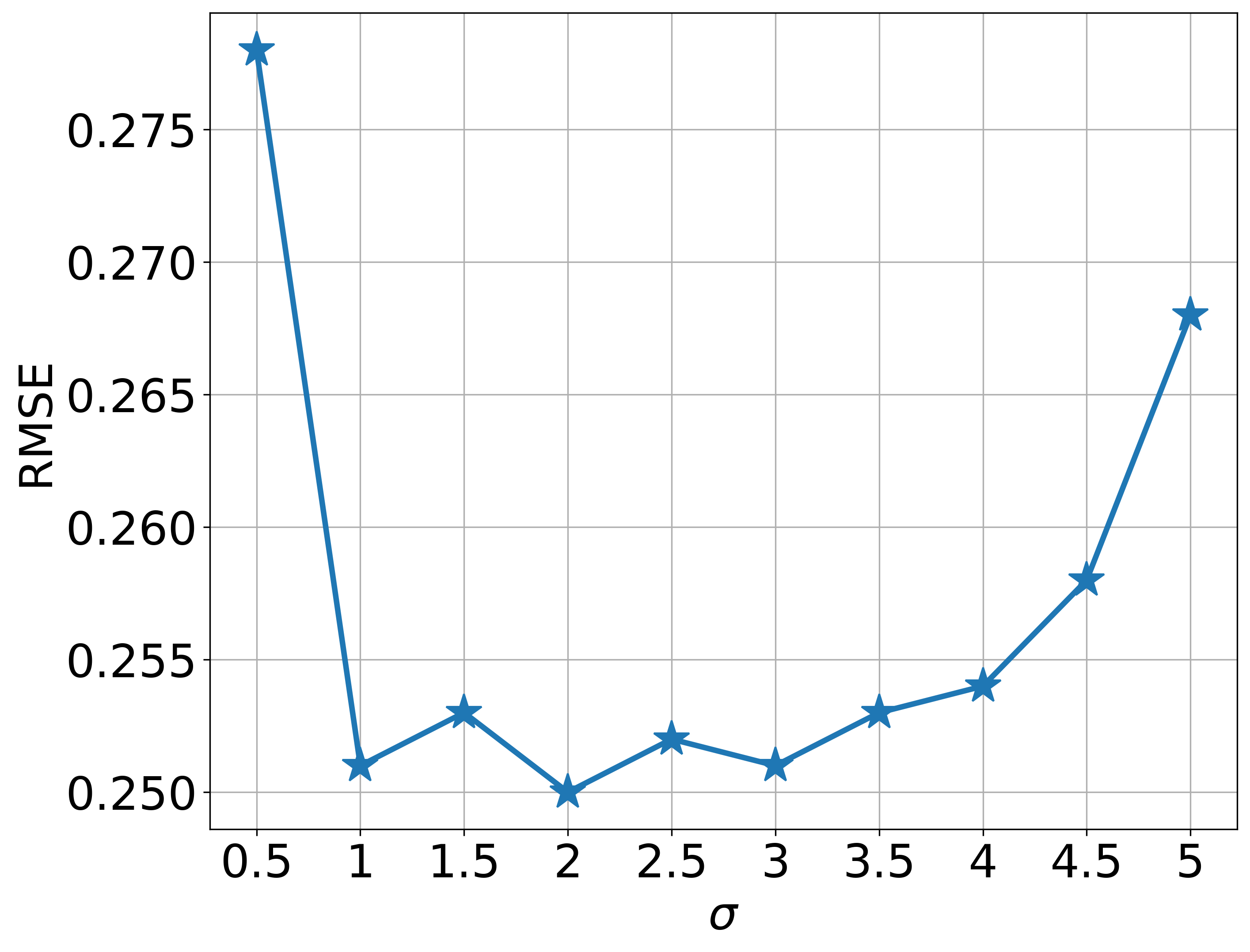}
	\caption{Performance of CS-IB with different kernel width $\sigma$ on California Housing dataset.}
	\label{fig:kernel_size}
\end{figure}

\end{document}